\newcolumntype{Y}{>{\RaggedRight\arraybackslash}X}
\def\TLp#1{\mathrm{TL}^{#1}}
\def\dTLp#1{d_{\TLp{#1}}}
\def\Lp#1{\mathrm{L}^{#1}}
\def\Ck#1{\mathrm{C}^{#1}}
\def\Wkp#1#2{\mathrm{W}^{#1,#2}}
\def\spaceBar{\, | \,}
\def\commentOut#1{}
\title{Higher-Order Regularization Learning on Hypergraphs}
\author[1]{Adrien Weihs\thanks{Corresponding author. Email: \texttt{weihs@math.ucla.edu}}}
\author[1]{Andrea L. Bertozzi}
\author[2]{Matthew Thorpe}
\date{November 2025}
\affil[1]{Department of Mathematics,\protect\\ University of California Los Angeles,\protect\\ Los Angeles, CA 90095, USA. \vspace{\baselineskip}}
\affil[2]{Department of Statistics,\protect\\ University of Warwick,\protect\\ Coventry, CV4 7AL, UK.}
\begin{document}

\maketitle

\begin{abstract}
\noindent Higher-Order Hypergraph Learning (HOHL)  
was recently introduced as a principled alternative to classical hypergraph regularization, enforcing higher-order smoothness via powers of multiscale Laplacians induced by the hypergraph structure. Prior work established the well- and ill-posedness of HOHL through an asymptotic consistency analysis in geometric settings. We extend this theoretical foundation by proving the consistency of a truncated version of HOHL and deriving explicit convergence rates when HOHL is used as a regularizer in fully supervised learning. We further demonstrate its strong empirical performance in active learning and in datasets lacking an underlying geometric structure, highlighting HOHL’s versatility and robustness across diverse learning settings.
\end{abstract}

\keywords{Hypergraph learning, Semi-supervised learning, Sobolev regularization, Asymptotic consistency, Multiscale learning, Non-Euclidean data, Active learning, Graph Laplacians}

\subjclass{49J55, 49J45, 62G20, 65N12}

\newpage

\section{Introduction} 

Graphs play a foundational role in machine learning, enabling effective modeling of relational data across a range of tasks—from semi-supervised learning and clustering to recommendation systems and manifold learning, e.g.~\cite{LapRef,zhou2004lazy,zhou2011semi,Mai,poissonLearning,flores2019algorithms,Merkurjev,xia2021graph,ju2024survey}. However, many real-world phenomena involve more complex interactions among sets of nodes, that are not fully captured by pairwise edges. Hypergraphs extend graphs by allowing hyperedges to connect arbitrary subsets of nodes, and hypergraph-based methods are used broadly in various areas of science such as in~\cite{hgLearningPractice,clique,hgPLaplacianGeometric,TVHg,shi2025hypergraphplaplacianequationsdata,Pirvu_2023_ICCV,ChenHg,hgWeighting,dynamicHG,neuhauser,scholkopfHyper2006,pmlr-v80-li18e,fazeny}.

A central research question concerns the comparison between hypergraph and graph-based learning methods. Many such comparisons are grounded in discrete arguments (e.g.,\cite{hypergraphGraph,jostMulas,chitra,jost,mulas}). More recently, asymptotic consistency frameworks—a popular technique for analyzing graph-based methods by relating discrete energies to continuum variational limits (e.g.,\cite{NIPS2006_5848ad95,COIFMAN20065,Gine,Singer,10.5555/3104322.3104459,calderGameTheoretic,Slepcev,Stuart,weihs2023consistency})—have been extended to the hypergraph regularization setting \cite{shi2025hypergraphplaplacianequationsdata,weihs2025Hypergraphs}. This continuum perspective allows for a principled assessment of the role of hypergraph structures, supports a classification of hypergraph learning algorithms \cite[Figure 2]{weihs2025Hypergraphs}, and enables a clearer understanding of the regularization behavior underlying complex discrete formulations.

In the analysis of graph- and hypergraph-based regularization, it is useful to distinguish between two complementary components of a regularizer:
(1) the support of interactions, i.e., which nodes influence one another (determined by the graph or hypergraph topology), and
(2) the interaction mechanism, i.e., how these influences are aggregated or penalized (e.g., via first-order differences, higher-order derivatives, or more general nonlinear terms).
Classical hypergraph learning methods typically enrich the interaction support—by allowing edges to connect sets of nodes rather than pairs—but still rely on first-order, pairwise-like regularization mechanisms \cite{scholkopfHyper2006,weihs2025Hypergraphs}.

In this context, Higher-Order Hypergraph Learning (HOHL) was introduced as a method that more effectively leverages hypergraph structure—not only by modifying which interactions are considered, but also by altering the nature of such interactions. Specifically, HOHL decomposes the hypergraph into a sequence of subgraphs that capture interactions at multiple scales. On each subgraph, a distinct regularization strength is applied, allowing the model to enforce higher-order smoothness in a structured and scale-aware manner.
In doing so, HOHL exploits the full expressive potential of the hypergraph more fully and effectively. 
From an analytical perspective, HOHL is shown to converge to a higher-order Sobolev semi-norm, making it genuinely distinct from other hypergraph methods \cite{scholkopfHyper2006,shi2025hypergraphplaplacianequationsdata} that asymptotically recover the standard \( \Wkp{1}{p} \) regularization.

In this paper, we extend both the theoretical and computational analysis of HOHL. On the theoretical side, we prove that when HOHL is used as a regularizer in the fully supervised learning setting, it yields explicit rates of convergence between the learned function and the ground-truth target. Furthermore, we analyze a truncated version of the HOHL energy—commonly employed in practice due to its reduced computational complexity—and establish that it remains consistent, converging to the same higher-order continuum limit as the full model.

On the computational side, we demonstrate that HOHL preserves the quadratic form characteristic of Laplace learning and can, in fact, be interpreted as Laplace learning on a specially constructed graph. This equivalence implies that all existing computational techniques developed for Laplace learning are directly applicable to HOHL, enabling seamless integration into established workflows. In particular, we highlight this drop-in compatibility through an active learning application, where HOHL strongly outperforms traditional Laplacian-based approaches. Finally, we extend the HOHL framework to settings where the hypergraph is not embedded in some underlying metric space. This generalization necessitates a shift in the notion of scale-aware regularization, but continues to yield strong performance, achieving state-of-the-art results on several standard hypergraph benchmarks.

\subsection{Contributions}

Our main contributions are as follows:

\begin{enumerate}
    \item \textbf{Theoretical Guarantees for Supervised Learning:} We prove that using HOHL as a regularizer in the fully supervised setting yields explicit convergence rates between the learned function and the ground-truth target.

    \item \textbf{Consistency of Truncated HOHL:} We analyze a truncated version of HOHL, commonly used in practice for its computational efficiency, and establish that it remains consistent with the full model by converging to the same higher-order continuum limit.

    \item \textbf{Connection to Laplace Learning:} We show that HOHL preserves the quadratic form of Laplace learning and can be interpreted as Laplace learning on a specially constructed graph, making all standard computational techniques for Laplace learning directly applicable.

    \item \textbf{Plug-and-Play Use in Active Learning:} We demonstrate that HOHL can serve as a drop-in replacement for Laplace learning in existing pipelines, highlighting its advantages through strong empirical performance in active learning tasks.

    \item \textbf{Extension Beyond Geometric Hypergraphs:} We generalize HOHL to hypergraphs without an underlying metric structure by redefining the notion of multiscale regularization, achieving state-of-the-art results on standard hypergraph learning benchmarks.
\end{enumerate}

\subsection{Related works}

A growing body of work has focused on the asymptotic consistency and continuum analysis of graph-based regularization in the large-sample regime. These efforts include convergence results for total variation on graphs~\cite{Trillos3}, graph cuts and Cheeger-type problems~\cite{JMLR:v17:14-490,trillos2017estimating,thorpeCheeger,doi:10.1137/16M1098309}, the Mumford–Shah functional~\cite{Caroccia_2020}, and empirical risk minimization~\cite{garcia_trillos_murray_2017}. In the semi-supervised setting, particular attention has been given to $p$-Laplace learning~\cite{Slepcev}, fractional Laplacian methods~\cite{weihs2023consistency}, Lipschitz learning~\cite{pmlr-v40-Kyng15,doi:10.1137/18M1199241,Bungert,doi.org/10.48550/arxiv.2111.12370}, game-theoretic formulations~\cite{calderGameTheoretic}, Poisson learning~\cite{poissonLearning,bungert2024convergenceratespoissonlearning}, reweighted Laplacians~\cite{shi2017weighted}, and truncated energy models~\cite{Belkin2002UsingMS,sslManifolds}. These developments reflect a general trend toward understanding the behavior of discrete algorithms through the lens of continuum variational principles. Recently, such analyses have been extended to the hypergraph setting \cite{shi2025hypergraphplaplacianequationsdata,weihs2025Hypergraphs} 

Consistency between discrete energies \(\mathcal{E}_{n}\), defined for functions \(v_n : \Omega_n \to \mathbb{R}\), and a corresponding continuum energy \(\mathcal{E}_\infty\), defined on functions \(v : \Omega \to \mathbb{R}\), can be established through several analytical approaches:

\begin{itemize}

    \item \textit{Pointwise convergence}~\cite{NIPS2006_5848ad95, COIFMAN20065, Gine, 10.1007/11503415_32, 10.1007/11776420_7, Singer, 10.5555/3104322.3104459} examines whether \(\mathcal{E}_{n}(v|_{\Omega_n}) \to \mathcal{E}_\infty(v)\) as \(n \to \infty\), for sufficiently smooth functions \(v : \Omega \to \mathbb{R}\). A related approach considers the pointwise convergence of the associated Euler--Lagrange operators~\cite{weihs2023discreteToContinuum}.

    \item \textit{Spectral convergence}~\cite{NIPS2006_5848ad95, Trillos, CALDER2022123, 10.1214/009053607000000640, JMLR:v12:pelletier11a, 10.1093/imaiai/iaw016, https://doi.org/10.48550/arxiv.1510.08110} analyzes the convergence of the eigenvalues and eigenfunctions of the discrete operator associated with (through Euler-Lagrange equations) \(\mathcal{E}_{n}\) to those of the limiting operator appearing in \(\mathcal{E}_\infty\).
    
    \item \textit{Variational convergence}~\cite{calderGameTheoretic,cristoferi_thorpe_2020,Stuart,Trillos3,JMLR:v17:14-490,trillos2017estimating,GARCIATRILLOS2018239,Slepcev,thorpe_theil_2019,Gennip} concerns the convergence of minimizers of \(\mathcal{E}_{n}\) to those of \(\mathcal{E}_\infty\), typically formalized through \(\Gamma\)-convergence~\cite{gammaConvergence}. Among the three notions, it is often the most relevant in semi-supervised learning, where the final label assignments are derived from minimizers of the objective functional.
\end{itemize}

In this work, we focus on the latter two modes of convergence. In particular, to establish the variational convergence of our truncated energies, we analyze the spectral properties of the HOHL Laplacian~\cite{weihs2025Hypergraphs}. Our results in the fully supervised setting are also of variational type, providing convergence guarantees for minimizers of the discrete energies.

While much of the literature has focused on consistency, in the graph-based setting, recent works established convergence rates in terms of various parameters such as the number of points \( n \), the labeling rate, the graph connectivity parameter \( \varepsilon \) and the smoothness of the target function~\cite{weihs2023discreteToContinuum,calder2020rates,ElBouchairi}. These rates offer important theoretical guarantees for practical applications, where the dataset is finite and the discrete approximation error must be controlled. In this work, we extend such results to the HOHL framework, similarly to \cite{trillos2022rates}, showing explicit convergence rates between the discrete minimizers and the continuum ground truth under suitable regularity assumptions.

Beyond rates, computational efficiency is a key concern for applications. In practice, Laplace learning and related graph-based methods often rely on a spectrally truncated energy formulation. While these truncations are computationally efficient and widely adopted in large-scale settings, their theoretical justification has largely remained heuristic \cite{Belkin2002UsingMS,bertozziStuart,Miller}. In this work, we contribute to closing this gap by showing that even when the HOHL energy is truncated, it remains variationally consistent with the full model and converges to the same continuum limit.

Our final computational result establishes a connection between the HOHL Laplacian, and a broad body of work on graph reweighting \cite{shi2017weighted,properly} (in classical models) and graph rewiring\cite{ricciBronstein,overSmoothing,ricciOsher,rewiringMontufar,rewiringWeber,DGM} (in graph neural networks), both of which aim to improve learning performance by structurally modifying the graph. These modifications are often employed to address limitations such as oversmoothing or oversquashing \cite{overSmoothing}. In the spirit of~\cite{hypergraphGraph}, which advocates for representing hypergraph structure within enriched graph formulations, we show that HOHL can be interpreted as Laplace learning on a modified graph constructed directly from the original hypergraph structure.

\section{Background}

This section presents the mathematical tools used throughout the paper. We begin by recalling the \(\TLp{p}\) space, which provides a natural topology for comparing functions defined on discrete empirical measures to functions on the continuum. We then review key concepts from \(\Gamma\)-convergence theory, which we rely on to study the asymptotic behavior of our discrete variational problems. References for the material presented here include \cite{Trillos3,Slepcev,weihs2023consistency,gammaConvergence}.

\subsection{The \texorpdfstring{$\TLp{p}$}{TLp} Topology}

Let \(\mathcal{P}_p(\Omega)\) denote the set of Borel probability measures on a bounded domain \(\Omega \subset \mathbb{R}^d\) with finite \(p\)-th moment. For each \(\mu \in \mathcal{P}_p(\Omega)\), we denote by \(\Lp{p}(\mu)\) the space of \(\mu\)-measurable functions with finite \(\Lp{p}\) norm. A key operation when comparing measures is the pushforward. Given a measurable map \(T : \Omega \to \cZ\) and a measure \(\mu \in \cP(\Omega)\), the pushforward measure \(T_\# \mu \in \cP(\cZ)\) is defined by:
\[
T_\# \mu(A) := \mu(T^{-1}(A)) \qquad \text{for all measurable sets } A \subset \cZ.
\]

\begin{mydef}
For an underlying domain $\Omega$, define the set
\[ \TLp{p} = \lb (\mu,u) \spaceBar \mu \in \cP_p(\Omega), u \in \Lp{p}(\mu) \rb. \]
For $(\mu,u),(\nu,v) \in \TLp{p}$, we define the $\TLp{p}$ distance $\dTLp{p}$ as follows:
\[ \dTLp{p}((\mu,u),(\nu,v)) = \inf_{\pi \in \Pi(\mu,\nu)} \l \int_{\Omega \times \Omega} \vert x-y \vert^p + \vert u(x) - v(y) \vert^p \,\dd\pi(x,y) \r^{\frac{1}{p}} \]
where \(\Pi(\mu, \nu)\) is the set of couplings between \(\mu\) and \(\nu\).
\end{mydef} 

This framework allows us to treat discrete functions—defined on sampled data—as elements of a well-defined metric space and to compare them to their continuum counterparts in a stable way. The topology is closely related to the \(p\)-Wasserstein distance \cite{villani2009,Santambrogio} on the graph of the function.

A useful characterization of convergence in \(\TLp{p}\) is the following \cite[Proposition 3.12]{Trillos3}.

\begin{proposition}
\label{prop:Back:TLp}
Let \((\mu_n, u_n) \in \TLp{p}\) be a sequence and \((\mu, u) \in \TLp{p}\). Assume that $\mu$ is absolutely continuous with respect to the Lebesgue measure. Then the following are equivalent:
\begin{enumerate}
    \item \((\mu_n, u_n) \to (\mu, u)\) in \(\TLp{p}\);
   \item $\mu_n$ converges weakly to $\mu$ and there exists a sequence of transport maps $\{T_n\}_{n=1}^\infty$ with $(T_{n})_\# \mu = \mu_n$ and $\int_\Omega \vert x - T_n(x) \vert \, \dd x \to 0$ such that
    \[
    \int_\Omega \vert u(x) - u(T_n(x))\vert^p \, \dd \mu(x) \to 0;
    \]
\end{enumerate}
\end{proposition}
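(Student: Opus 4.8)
The plan is to establish the two implications of the equivalence separately. Absolute continuity of $\mu$ enters essentially in the direction (1) $\Rightarrow$ (2) and is convenient in (2) $\Rightarrow$ (1).

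For the implication (2) $\Rightarrow$ (1), I would feed the transport maps $T_n$ directly into the definition of $\dTLp{p}$: since $(T_n)_\# \mu = \mu_n$, the coupling $\pi_n := (\mathrm{id} \times T_n)_\# \mu$ belongs to $\Pi(\mu,\mu_n)$, whence
\[
  \dTLp{p}\big((\mu,u),(\mu_n,u_n)\big)^p \;\le\; \int_\Omega |x - T_n(x)|^p \,\dd\mu(x) \;+\; \int_\Omega |u(x) - u_n(T_n(x))|^p \,\dd\mu(x).
\]
The second term vanishes by hypothesis. For the first, $|x - T_n(x)|$ is uniformly bounded on the bounded set $\Omega$, so the $\Lp{1}$ smallness of $\mathrm{id} - T_n$ against the Lebesgue measure forces $\mathrm{id} - T_n \to 0$ in Lebesgue measure, hence in $\mu$-measure (by absolute continuity of $\mu$), and bounded convergence then gives $\int_\Omega |x-T_n(x)|^p \,\dd\mu \to 0$. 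Thus $\dTLp{p} \to 0$.

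For (1) $\Rightarrow$ (2), the weak convergence $\mu_n \rightharpoonup \mu$ is immediate: for any coupling $\pi$ one has $\int |x-y|^p \,\dd\pi \le \int |x-y|^p + |u(x)-u_n(y)|^p \,\dd\pi$, so taking infima gives $\dWp{p}(\mu,\mu_n) \le \dTLp{p}((\mu,u),(\mu_n,u_n)) \to 0$, and $\dWp{p}$-convergence implies weak convergence on a bounded domain. The substantive step is extracting the maps. From $\dTLp{p}\to 0$ I would pick near-optimal couplings $\pi_n \in \Pi(\mu,\mu_n)$ with
\[
  \int_{\Omega\times\Omega} |x-y|^p + |u(x)-u_n(y)|^p \,\dd\pi_n(x,y) \;\le\; \dTLp{p}\big((\mu,u),(\mu_n,u_n)\big)^p + \frac{1}{n} \;\longrightarrow\; 0 ,
\]
and then invoke the structural fact that, because $\mu$ is atomless (being absolutely continuous), any coupling with first marginal $\mu$ can be approximated, for the cost functional at hand, by one of the form $(\mathrm{id}\times T)_\#\mu$ with $T_\#\mu$ equal to the second marginal. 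Applying this to $\pi_n$ with error $1/n$ yields maps $T_n$ with $(T_n)_\#\mu = \mu_n$ and $\int_\Omega |x-T_n(x)|^p + |u(x)-u_n(T_n(x))|^p \,\dd\mu(x) \to 0$; the $u$-term is precisely the asserted convergence, while the geometric term gives $\|\mathrm{id}-T_n\|_{\Lp{p}(\mu)}\to 0$ and hence the stated smallness of $\mathrm{id}-T_n$ after passing through absolute continuity and boundedness of $\Omega$ as in the previous paragraph.

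I expect the main obstacle to be the coupling-to-map approximation lemma, which is exactly the place where atomlessness cannot be removed: a plan that genuinely splits the mass sitting over a single point of $\Omega$ is never induced by a map. Its proof partitions $\Omega$ into finitely many small cells on which the disintegration $\{\pi_x\}$ of $\pi$ nearly concentrates at a single target value, uses atomlessness of $\mu$ to subdivide each source cell into subsets of prescribed $\mu$-masses, and sends each subset into the corresponding target region; refining the mesh produces the approximation. The delicate point is that this construction must simultaneously control the geometric cost $|x-y|^p$ \emph{and} the term built from $u$ and $u_n$, which are only $\Lp{p}$ rather than continuous; I would deal with this by first replacing $u$ and $u_n$ by continuous functions off sets of small $\mu$- and $\mu_n$-measure (Lusin's theorem) and absorbing the ensuing errors, which is the most technical portion of the argument.
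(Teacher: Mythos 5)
The paper does not prove this proposition; it is quoted verbatim from~\cite[Proposition 3.12]{Trillos3} and there is therefore no ``paper's own proof'' to compare against. Before assessing your argument, note a typo in the statement as reproduced here: condition~(2) should read $\int_\Omega |u(x)-u_n(T_n(x))|^p\,\dd\mu(x)\to 0$ with $u_n$, not $u$, inside the second integrand (this is what appears in~\cite{Trillos3}); your proof consistently and correctly works with $u_n$, so you have implicitly made that correction.

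Your direction (2) $\Rightarrow$ (1) is complete and correct: feeding the plan $(\mathrm{id}\times T_n)_\#\mu$ into the infimum, using $\Lp{1}$-to-measure convergence plus absolute continuity of $\mu$ and bounded convergence to pass from the $\Lp{1}(\mathrm{Leb})$ hypothesis to $\int|x-T_n(x)|^p\,\dd\mu\to 0$, is exactly right. The substance sits in (1) $\Rightarrow$ (2), where you correctly identify the crux: an atomlessness-based ``coupling-to-map'' approximation lemma for the cost $c_n(x,y)=|x-y|^p+|u(x)-u_n(y)|^p$. This is the right idea, and this is in fact the mechanism in~\cite{Trillos3}. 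However, your sketch leaves a genuine gap. The approximation result you invoke (mass-splitting via a mesh refinement) is well known for \emph{continuous} costs, but $c_n$ is only Borel in each variable because $u$ and $u_n$ are merely $\Lp{p}$. Your proposed Lusin workaround requires more care than ``absorbing the ensuing errors'' suggests: one must approximate $u$ by a continuous $v$ in $\Lp{p}(\mu)$ and $u_n$ by a continuous $v_n$ in $\Lp{p}(\mu_n)$, verify that the error terms integrate correctly against both the plan $\pi_n$ and the pushforward $(\mathrm{id}\times T_n)_\#\mu$ (the latter uses $(T_n)_\#\mu=\mu_n$ to convert the $v_n$-error back to an $\Lp{p}(\mu_n)$-norm), and then run a diagonal extraction over the approximation quality $\epsilon=1/k$, since the constructed $T_n$ depends on $\epsilon$. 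As written, your argument asserts this can be done but does not supply the bookkeeping, and in particular the diagonalization step is not visible. With those details filled in, your approach is sound; without them it is a plausible outline rather than a proof of the key lemma on which everything in this direction rests.
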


To apply this result, we rely on the following result (see \cite[Theorem 2]{Trillos} based on~\cite{garcia_trillos_slepcev_2015}) establishing that such transport maps exist for empirical measures constructed from i.i.d. samples.

\begin{theorem}[Existence of transport maps]
\label{thm:transport}
Assume that $\Omega$ is the unit torus $\sfrac{\bbR^d}{\bbZ^d}$, $x_i\iid\mu\in\cP(\Omega)$ where $\mu$ has a density that is bounded above and below by positive constants.
Then, there exists a constant $C > 0$ such that $\bbP$-a.s., there exists a sequence of transport maps $\{T_n:\Omega \mapsto \Omega_n \}_{n=1}^\infty$ from $\mu$ to $\mu_n$ such that:
\begin{equation*}
\label{eq:Back:TLp:LinftyMapsRate}
\begin{cases}
\limsup_{n \to \infty} \frac{n^{1/2} \Vert \Id - T_n \Vert_{\Lp{\infty}} }{\log(n)^{3/4}} \leq C & \text{if } d = 2; \\
\limsup_{n \to \infty} \frac{n^{1/d} \Vert \Id - T_n \Vert_{\Lp{\infty}} }{\log(n)^{1/d}} \leq C &\text{if } d \geq 3.
\end{cases}
\end{equation*}
\end{theorem}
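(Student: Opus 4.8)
The plan is to follow the classical hierarchical dyadic-matching argument (Ajtai--Koml\'os--Tusn\'ady; Leighton--Shor for $d=2$; Shor--Yukich for $d\ge 3$) in the continuous-to-discrete form of Garc\'ia Trillos--Slep\v{c}ev; indeed one may simply invoke \cite{Trillos,weihs2023consistency} after observing that the $\infty$-transportation distance $d_\infty(\mu,\mu_n)=\inf\{\|\Id-T\|_{\Lp{\infty}}\,:\,T_\#\mu=\mu_n\}$ is exactly the quantity those references control, and that this infimum is, up to an arbitrarily small loss, attained by a genuine transport map since $\mu$ is absolutely continuous and $\mu_n$ atomic. I will nonetheless sketch the construction. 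Throughout, the hypothesis that the density of $\mu$ is bounded above and below enters only through the tail estimates below (via $\mu(Q)\asymp 2^{-kd}$ for a dyadic cube $Q$ of side $2^{-k}$), so for the geometric construction one may picture $\mu$ as the uniform measure on $\Omega$.

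The first analytic ingredient is a concentration estimate for cube counts: for a dyadic subcube $Q\subset\Omega$ of side $2^{-k}$, the count $N_Q:=\#\{i:x_i\in Q\}$ is $\mathrm{Bin}(n,\mu(Q))$, so Bernstein's inequality, a union bound over the $O(2^{kd})$ cubes at each of the $O(\log n)$ relevant scales with $2^{-k}\gtrsim(\log n/n)^{1/d}$, and Borel--Cantelli in $n$ together give that, $\bbP$-a.s., for all large $n$ and uniformly over such $Q$,
\[
\bigl|\,N_Q-n\mu(Q)\,\bigr|\;\lesssim\;\sqrt{\,n\,2^{-kd}\,\log n\,}.
\]
Given this, I would build $T_n$ by recursively refining the dyadic partition of $\Omega$: when a cube at scale $k$, carrying $\mu$-mass $N_{\mathrm{parent}}/n$, is split into its $2^d$ children, I redistribute mass among the children so that each child ends up carrying exactly $N_{\mathrm{child}}/n$; this correction moves a total mass $\lesssim n^{-1}\sqrt{n2^{-kd}\log n}$ across a distance $\lesssim 2^{-k}$. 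The recursion is stopped at scale $k^\star\asymp\tfrac1d\log_2(n/\log n)$, where each leaf cube contains $\Theta(\log n)$ sample points, and is completed by a direct matching inside each leaf of the (now essentially uniform) residual mass onto those points, at displacement $\lesssim 2^{-k^\star}\asymp(\log n/n)^{1/d}$.

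Finally I would bound $\|\Id-T_n\|_{\Lp{\infty}}$ by summing the corrections a mass element undergoes along its path through the hierarchy. The relative excess mass at scale $k$ is $\lesssim 2^{-k}\cdot 2^{kd/2}\sqrt{\log n/n}=2^{k(d/2-1)}\sqrt{\log n/n}$, the factor $2^{-k}$ being the transport distance at that scale. For $d\ge 3$ the exponent $d/2-1>0$, so the geometric sum over $k\le k^\star$ is dominated by its last term $k=k^\star$ and evaluates to $O((\log n/n)^{1/d})$, which also matches the leaf contribution, yielding the claimed rate. For $d=2$ the exponent vanishes, every scale contributes $\sqrt{\log n/n}$, and the naive sum over the $\asymp\log n$ scales is off by a power of $\log n$ from the asserted $(\log n)^{3/4}/\sqrt n$. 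Closing this gap is the main obstacle and is precisely the Leighton--Shor refinement: a sharper multiscale accounting exploiting that cubes with near-maximal discrepancy are rare and essentially decorrelated across scales, so the displacements telescope rather than merely add. The almost-sure $\limsup$ form of the statement is already delivered by the Borel--Cantelli step in the concentration estimate above.
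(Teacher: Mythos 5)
The paper does not prove this statement: it is imported by citing Theorem~2 of Garc\'{\i}a Trillos--Slep\v{c}ev \cite{Trillos} (equivalently Theorem~2.3 of \cite{weihs2023consistency}), so there is no in-paper argument to compare against. Your sketch reconstructs the cited proof in broad outline --- Bernstein concentration for dyadic cube counts, a union bound over the $O(\log n)$ scales with $2^{-k}\gtrsim(\log n/n)^{1/d}$, multiscale redistribution down to $k^\star\asymp d^{-1}\log_2(n/\log n)$, and Borel--Cantelli for the almost-sure $\limsup$ --- and you correctly reduce the bounded-density hypothesis to the uniform case and note that the $\infty$-transport infimum is essentially attained because $\mu$ is nonatomic while $\mu_n$ is purely atomic.

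Two steps need more care than the sketch gives them. First, the quantity you sum over scales, ``relative excess mass $\times\,2^{-k}$'', is a mass-weighted (Wasserstein-$1$ type) cost, not a priori a bound on worst-case displacement: a single mass element could in principle travel the full cell size $\asymp 2^{-k}$ at every scale, and $\sum_k 2^{-k}=O(1)$ decays not at all. What makes your expression actually control $\Vert T_n-\Id\Vert_{\Lp{\infty}}$ is that the excess mass at scale $k$ may be drawn from a boundary slab of width $\asymp 2^{k(d/2-1)}\sqrt{\log n/n}$ adjacent to the shared wall between sibling cubes, so the per-scale sup-norm displacement equals the slab width, which happens to coincide numerically with your mass-weighted estimate; this coincidence is what justifies ``summing the corrections along a mass element's path'' and it needs to be stated, since without it the summation bounds an average rather than the sup norm. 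Second, your naive accounting overshoots the $d=2$ rate by a power of $\log n$ and you defer to the Leighton--Shor refinement without supplying it; this is an honest flag, but it means the $d=2$ clause of the theorem is not actually established by the sketch --- that refinement is precisely the hard combinatorial content of the two-dimensional case, and ``cubes with near-maximal discrepancy are rare and decorrelated'' is the conclusion of a substantial argument, not a routine observation.
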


The assumptions required in the above theorem correspond to conditions~\ref{ass:Main:Ass:S2}, \ref{ass:Main:Ass:M1}, \ref{ass:Main:Ass:M2}, and~\ref{ass:Main:Ass:D1} introduced later in the paper. 
Taken together, these results enable a rigorous comparison between discrete functionals defined over sample-based measures and their continuum limits.

\subsection{\texorpdfstring{$\Gamma$}{Gamma}-Convergence of Functionals}

To analyze the asymptotic behavior of our variational formulations, we use \(\Gamma\)-convergence, a notion from the calculus of variations that captures the convergence of minimization problems.

\begin{mydef} 
Let $(Z,d_Z)$ be a metric space and $F_n:Z \to \bbR$ a sequence of functionals.
We say that $F_n$ $\Gamma$-converges to $F$ with respect to $d_Z$ if:
\begin{enumerate}
\item For every $z \in Z$ and every sequence $\{z_n\}$ with $d_Z(z_n,z) \to 0$:
\[ \liminf_{n \to \infty} F_n(z_n) \geq F(z); \]
\item For every $z \in Z$, there exists a sequence $\{z_n\}$ with $d_Z(z_n,z) \to 0$ and
\[ \limsup_{n\to \infty} F_n(z_n) \leq F(z). \]
\end{enumerate}
\end{mydef}

This notion of convergence ensures that the minimizers of \(F_n\) converge (in a suitable sense) to minimizers of \(F\), provided a compactness condition holds.

\begin{mydef} 
We say that a sequence of functionals $F_n:Z \to \bbR$ has the compactness property if the following holds: if $\{n_k\}_{k \in \bbN}$ is an increasing sequence of integers and $\{z_k\}_{k \in \bbN}$ is a bounded sequence in $Z$ for which $\sup_{k\in \bbN} F_{n_k}(z_k) < \infty$, then the closure of $\{z_k\}$ has a convergent subsequence. 
\end{mydef}

\begin{proposition}[Convergence of minimizers]
\label{prop:Back:Gamma:minimizers}
Let $F_n:Z \mapsto [0,\infty]$ be a sequence of functionals which are not identically equal to $\infty$.
Suppose that the functionals satisfy the compactness property and that they $\Gamma$-converge to $F:Z \mapsto [0,\infty]$.
Then
\[ \lim_{n\to \infty} \inf_{z\in Z} F_n(z) = \min_{z \in Z} F(z). \]
Furthermore, the closure of every bounded sequence $\{z_n\}$ for which \begin{equation} \label{eq:Back:Gamma:MinConv}
\lim_{n \to \infty} \left(F_n(z_n) - \inf_{z \in Z} F_n(z) \right) = 0
\end{equation}
has a convergent subsequence and each of its cluster points is a minimizer of $F$.
In particular, if $F$ has a unique minimizer, then any sequence satisfying~\eqref{eq:Back:Gamma:MinConv} converges to the unique minimizer of $F$.
\end{proposition}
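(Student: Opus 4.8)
The plan is to run the classical ``fundamental theorem of $\Gamma$-convergence'', which here decomposes into three ingredients: (i) an upper bound $\limsup_n \inf_Z F_n \le \inf_Z F$ obtained from recovery sequences; (ii) a matching lower bound, together with existence of a minimizer of $F$, obtained by applying the compactness property to a bounded almost-minimizing sequence and then invoking the $\liminf$-inequality; and (iii) a subsequence-extraction argument for the convergence of almost-minimizers. I work in the nondegenerate case $\inf_{z\in Z}F(z)<\infty$; otherwise every point minimizes $F$ and the same type of argument gives $\inf_Z F_n\to\infty=\min_Z F$.

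For (i), fix $z\in Z$ and let $\{z_n\}$ be a recovery sequence for $z$, so $d_Z(z_n,z)\to 0$ and $\limsup_n F_n(z_n)\le F(z)$ by the second clause in the definition of $\Gamma$-convergence. Since $\inf_Z F_n\le F_n(z_n)$ (the left side is finite because $F_n\not\equiv\infty$), passing to $\limsup$ and then taking the infimum over $z$ gives $\limsup_n \inf_Z F_n\le \inf_Z F$. For (ii), take a bounded sequence $\{z_n\}$ satisfying~\eqref{eq:Back:Gamma:MinConv} — boundedness being part of the hypothesis of the second assertion and, in the concrete settings of this paper, a consequence of coercivity of the discrete energies. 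By (i) the energies $F_{n_k}(z_{n_k})$ are uniformly bounded along any subsequence, so the compactness property produces an increasing $\{n_k\}$ and a point $z^*\in Z$ with $z_{n_k}\to z^*$. A subsequence of a $\Gamma$-convergent sequence $\Gamma$-converges to the same limit, so the $\liminf$-inequality yields $F(z^*)\le \liminf_k F_{n_k}(z_{n_k})=\liminf_k \inf_Z F_{n_k}$, the last equality because $F_{n_k}(z_{n_k})-\inf_Z F_{n_k}\to 0$. Combining this with $F(z^*)\ge \inf_Z F$ and with (i),
\[
\inf_Z F \;\le\; F(z^*) \;\le\; \liminf_k \inf_Z F_{n_k} \;\le\; \limsup_n \inf_Z F_n \;\le\; \inf_Z F,
\]
so all these quantities coincide. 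Hence $z^*$ attains $\min_Z F=\inf_Z F$, and since the common value does not depend on the extracted subsequence, $\inf_Z F_n\to \min_Z F$ along the full sequence.

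For (iii), let $\{z_n\}$ be any bounded sequence satisfying~\eqref{eq:Back:Gamma:MinConv}. Every subsequence again has, by the compactness property, a convergent sub-subsequence whose limit $z^*$ satisfies $F(z^*)=\inf_Z F=\min_Z F$ by the computation above; thus the cluster points of $\{z_n\}$ are precisely minimizers of $F$. If $F$ has a unique minimizer $z^\infty$, then every subsequence of $\{z_n\}$ has a sub-subsequence converging to $z^\infty$, which forces $z_n\to z^\infty$. The one point that needs a little care is the claim that a subsequence of a $\Gamma$-convergent sequence $\Gamma$-converges to the same limit: the $\liminf$ half is handled by interleaving the given subsequence with a recovery sequence for the candidate limit on the complementary indices, so that the $\liminf$-inequality for the full interleaved sequence transfers back. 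Beyond that, the proof is routine bookkeeping with infima, so I do not anticipate any genuine obstacle.
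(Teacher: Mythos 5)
The paper does not actually prove Proposition~\ref{prop:Back:Gamma:minimizers}: it is stated as background material and attributed to the references on $\Gamma$-convergence at the start of Section~2, so there is no ``paper proof'' to compare against. Your argument is the standard textbook proof (upper bound via recovery sequences, lower bound and attainment via compactness of a bounded almost-minimizing sequence plus the $\liminf$-inequality, then the subsequence-of-subsequence trick for convergence of almost-minimizers), and it is essentially correct, including the correct observation that the $\liminf$-inequality along a subsequence requires an interleaving argument. Two points deserve to be made slightly more explicit. First, the step ``since the common value does not depend on the extracted subsequence, $\inf_Z F_n\to\min_Z F$ along the full sequence'' hides the needed uniformity: one should apply the compactness/$\liminf$ argument to an \emph{arbitrary} subsequence of the bounded almost-minimizing sequence to conclude that every subsequence of $\{\inf_Z F_n\}$ admits a sub-subsequence converging to $\min_Z F$, which then forces convergence of the full sequence. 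Second, your proof of the first assertion tacitly presupposes that a \emph{bounded} almost-minimizing sequence exists; as stated, the compactness property is only applicable to sequences that are bounded a priori, so the claim $\lim_n\inf_Z F_n=\min_Z F$ requires this additional input. You correctly flag that this is supplied by coercivity in the settings where the proposition is invoked, but strictly speaking it is an implicit hypothesis of the proposition rather than a consequence of those listed, so it is worth naming it as such rather than folding it into the general argument.
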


In this work, we show that our discrete energies \(\Gamma\)-converge to continuum energies in the \(\TLp{p}\)-topology. This forms the backbone of our theoretical analysis, allowing us to rigorously link discrete regularization schemes to their continuum analogues.

Lastly, the following result shows that $\Gamma$-convergence is stable with respect to continuous perturbations. 

\begin{proposition}[Convergence of minimizers] \label{prop:additivity}
Suppose that $F_n:Z \mapsto [0,\infty]$ $\Gamma$-converge to $F:Z \mapsto [0,\infty]$. Furthermore, assume that $G:Z \mapsto [0,\infty]$ is continuous. Then, $F_n + G$ $\Gamma$-converges to $F + G$. 
\end{proposition}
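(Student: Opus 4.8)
\textbf{Proof proposal for Proposition~\ref{prop:additivity}.}
The plan is to verify the two defining inequalities of $\Gamma$-convergence for $F_n + G$ directly, exploiting that continuity of $G$ turns $\{G(z_n)\}$ into a genuinely convergent sequence along any convergent $z_n \to z$. The one elementary fact I will lean on repeatedly is that for extended-real sequences $\liminf_{n}(a_n + b_n) = \liminf_n a_n + b$ whenever $b_n \to b$ (and likewise with $\limsup$), which is valid here because all quantities lie in $[0,\infty]$ and $G \ge 0$, so no $\infty - \infty$ ambiguity ever arises.

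For the $\Gamma$-$\liminf$ inequality, I would fix $z \in Z$ and an arbitrary sequence $z_n$ with $d_Z(z_n,z) \to 0$. Continuity of $G$ gives $G(z_n) \to G(z)$, so $\liminf_{n\to\infty}\big(F_n(z_n) + G(z_n)\big) = \liminf_{n\to\infty} F_n(z_n) + G(z)$; applying the $\Gamma$-$\liminf$ property of the $F_n$ yields $\liminf_{n\to\infty} F_n(z_n) \ge F(z)$, hence the right-hand side is $\ge F(z) + G(z) = (F+G)(z)$. (If $F(z) = \infty$ or $G(z) = \infty$ the same computation still gives $\infty$ on both sides.)

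For the $\Gamma$-$\limsup$ inequality, I would again fix $z \in Z$ and take a recovery sequence $z_n \to z$ for $F_n$ with $\limsup_{n\to\infty} F_n(z_n) \le F(z)$, whose existence is guaranteed by the $\Gamma$-convergence of $F_n$. The \emph{same} sequence $z_n$ works for $F_n + G$: since $G(z_n) \to G(z)$ by continuity, $\limsup_{n\to\infty}\big(F_n(z_n) + G(z_n)\big) = \limsup_{n\to\infty} F_n(z_n) + G(z) \le F(z) + G(z) = (F+G)(z)$. Combining the two inequalities gives $F_n + G \xrightarrow{\Gamma} F + G$.

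The argument has no real obstacle; the only point requiring a moment of care is the bookkeeping for $[0,\infty]$-valued functionals, namely checking that the additivity of $\liminf$/$\limsup$ against a convergent sequence remains valid in the extended reals. Since $G$ is nonnegative this causes no trouble, and the proof is short.
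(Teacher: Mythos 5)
Your proof is correct: the direct verification of the $\liminf$ and $\limsup$ inequalities, with the recovery sequence for $F_n$ reused unchanged for $F_n + G$ and the $[0,\infty]$-valued bookkeeping handled by nonnegativity, is exactly the standard argument for stability of $\Gamma$-convergence under continuous perturbations. The paper states this proposition without proof (deferring to the standard reference on $\Gamma$-convergence), so there is no alternative route to compare against; your argument fills the gap precisely as one would expect.
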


\section{Main results} \label{sec:main}

In this section, we present our main results as well as the relevant notation and assumptions used for our proofs. 

\subsection{Hypergraphs}

A hypergraph \(G\) is a pair \(G = (V, E)\), where \(V\) denotes the set of vertices and \(E\) is a collection of subsets \(e \subseteq V\), called hyperedges. We say that all vertices within the same hyperedge $e$ are connected and denote the weight of hyperedge $e$ by $w_0(e) \geq 0$ and its degree/size by $\vert e \vert$. We write $V = \{v_i\}_{i=1}^{\vert V \vert}$. 

A special case of hypergraphs is when $\vert e \vert = 2$ for all $e \in E$. In this case, $(V,E)$ is called a graph and every $e$ represents a pairwise relationship between vertices (see Figure \ref{fig:graphHypergraphs}). Graphs can also be weighted and we usually use the representation $G = (V,W)$ where $W \in \mathbb{R}^{\vert V \vert \times \vert V \vert}$ is a symmetric matrix with entries $w_{ij} = w_0(e)$ if $e = \{v_i,v_j\}$. On graphs, we define the (unnormalized) Laplacian $L$ as \[
L = D-W
\] 
where $D$ is the diagonal matrix with entries $d_{ii} = \sum_{j=1}^{\vert V \vert} w_{ij}$.

\begin{figure}[H]
  \centering
  \includegraphics[scale = 0.2]{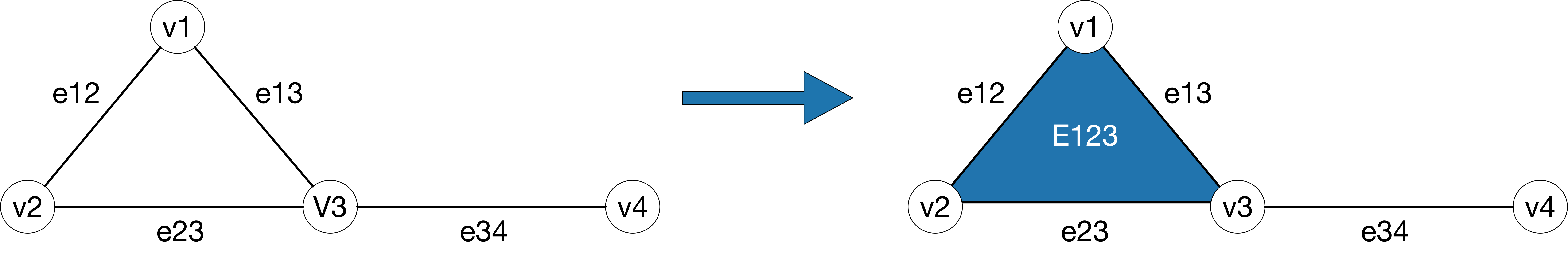}
  \caption{From graphs to hypergraphs (from \cite{weihs2025Hypergraphs}). Left: In the graph, the vertices $v_1$, $v_2$, and $v_3$ are all connected pairwise. Right: A single hyperedge is added connecting all three vertices, transitioning from a graph to a hypergraph representation.} \label{fig:graphHypergraphs}
\end{figure}

We now introduce the hypergraph-to-graph deconstruction that is the foundation of HOHL. Let \((V,E)\) be a hypergraph and define  
\(q = \max_{e \in E} |e| - 1\) as the maximum hyperedge size minus one.  
For each \(k \in \{1,\dots,q\}\), we construct a corresponding skeleton graph  
\(G^{(k)} = (V,E^{(k)})\) with
\[
E^{(k)} = \Bigl\{ \{v_i,v_j\} \,\Big|\, \exists\, e \in E \text{ with } |e| = k+1 \text{ and } \{v_i,v_j\} \subset e \Bigr\},
\]
that is, \(G^{(k)}\) contains all pairwise edges induced by hyperedges of size \(k+1\). We refer to Figure \ref{fig:skeleton} for a visual representation of of the decomposition. Let \(L^{(k)}\) denote the graph Laplacian associated with \(G^{(k)}\).

\begin{figure}[H]
  \centering
  \includegraphics[scale = 0.2]{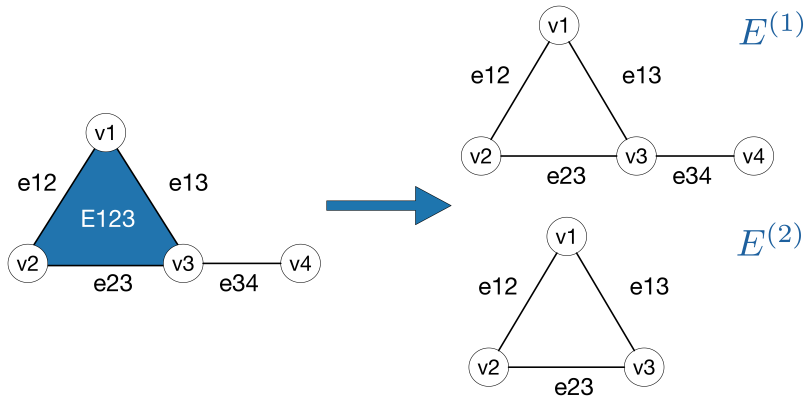}
  \caption{Skeleton graphs with $q = 2$ (from \cite{weihs2025Hypergraphs}).} \label{fig:skeleton}
\end{figure}

\subsection{HOHL} \label{subsec:HOHL}

On graphs, a widely used regularizer is constructed using the graph Laplacian~\cite{LapRef,TutSpec}. For a function \( u : V \to \mathbb{R} \) (which we also identify with a vector in \( \mathbb{R}^{|V|} \)), its first-order smoothness is quantified by
\[
u^\top L u = \frac{1}{2} \sum_{i,j=1}^{|V|} w_{ij} \left(u(v_i) - u(v_j)\right)^2.
\]
Minimizing this expression encourages \( u \) to take similar values on adjacent vertices. On certain graphs, this functional can be interpreted as a discrete analogue of the Sobolev \( \Wkp{1}{2} \) semi-norm, which formalizes the idea of penalizing the first derivative of a function defined on the graph~\cite{Slepcev}. More generally, the regularizer \( v^\top L^s v \), with \( s \in \mathbb{R} \), corresponds to a discrete Sobolev \( \Wkp{s}{2} \) semi-norm and penalizes variations of \( v \) up to order \( s \)~\cite{Stuart,weihs2023consistency}.

The HOHL energy, introduced in~\cite{weihs2025Hypergraphs}, extends graph Laplacian regularization to the hypergraph setting. It is defined as
\begin{equation} \label{eq:discussion:higherOrder}
    u^\top \left[ \sum_{k=1}^q \lambda_k (L^{(k)})^k \right] u =: u^\top \mathcal{L}^{(q)}_{\mathrm{dis}} u,
\end{equation}
for \( u \in \bbR^n \),
where \( 0 < p_1 < \ldots < p_q \) are powers and \( \lambda_1, \ldots, \lambda_q > 0 \) are tuning parameters. In practice, we often set $p_k = k$ for simplicity, although the same reasoning applies to any positive and increasing sequence $\{p_k\}_{k=1}^q$.
This energy imposes a hierarchical, scale-aware regularization: for each skeleton graph \( G^{(k)} \), the corresponding Laplacian power \( (L^{(k)})^{p_k} \) enforces smoothness at a specific scale, with the index \( k \) controlling the granularity of the regularization.

We now discuss the geometric setting, where \( V \subset \mathbb{R}^d \), and the hyperedge set \( E \) is not given à priori. In such cases, it is common to construct \( E \) using geometric principles. The underlying intuition is that a meaningful hyperedge should connect vertices that are close in some metric space. 

In the graph setting, this idea is typically implemented via \(k\)-nearest neighbor ($k$-NN) graphs~\cite{TutSpec} or random geometric graphs~\cite{DBLP:books/ox/P2003}, both of which rely on locality: edges are formed either by linking the \(k\) nearest neighbors or by connecting points within an \(\varepsilon\)-radius neighborhood. Analogous locality-based constructions for hypergraphs have been proposed, e.g., in~\cite{shi2025hypergraphplaplacianequationsdata}; see also~\cite{hgLearningPractice} for a broader discussion. A notable instance is also the random geometric hypergraph model introduced in~\cite{weihs2025Hypergraphs}.

As established in~\cite{weihs2025Hypergraphs}, the hierarchical, scale-aware regularization principle underlying HOHL admits an effective surrogate in geometric settings via a multiscale graph construction, as proposed in~\cite{Merkurjev}. In what follows, we introduce this alternative formulation.

Let \(\Omega_n = \{x_i\}_{i=1}^n \subset \Omega \subset \mathbb{R}^d\) be a set of \(n\) feature vectors, where we assume that \(x_i \overset{\text{i.i.d.}}{\sim} \mu \in \mathcal{P}(\Omega)\). We adopt the same probabilistic framework as in~\cite{weihs2023consistency}. Specifically, we consider a probability space \((\Omega, \mathbb{P})\) whose elements are infinite sequences \(\{x_i\}_{i=1}^\infty\). Our results are stated in terms of the measure \(\mathbb{P}\), establishing that the desired properties hold on a high-probability subset \(\mathcal{X} \subset \Omega\) consisting of such sequences. For a set $E$, we denote its complement by $E^c$.

 Given a length-scale \(\varepsilon > 0\), and a kernel function \(\eta\), we define the edge weights \(w_{\varepsilon,ij}\) between vertices \(x_i\) and \(x_j\) by  
\begin{equation} \notag 
    w_{\varepsilon,ij} = \eta\!\left(\frac{\lvert x_i - x_j \rvert}{\varepsilon}\right).
\end{equation}
Let \(D_{n,\varepsilon}\) be the diagonal degree matrix with entries  
\(d_{n,\varepsilon,ii} = \sum_{j=1}^n w_{\varepsilon,ij}\), and define the normalizing constant  
\[
\sigma_\eta = \frac{1}{d} \int_{\mathbb{R}^d} \eta(\lvert h \rvert)\lvert h \rvert^2 \, \mathrm{d}h < \infty.
\]
The (unnormalized) graph Laplacian is then given by  
\begin{equation} \notag 
    \Delta_{n,\varepsilon} := \frac{2}{\sigma_\eta n \varepsilon^{d+2}} 
    \bigl(D_{n,\varepsilon} - W_{n,\varepsilon}\bigr).
\end{equation}
We note that this is the rescaled version of $L$, i.e. $\Delta_{n,\eps} = \frac{2}{\sigma_\eta n\eps^{d+2}} L$.
With an abuse of notation, this Laplacian can be interpreted either as a matrix \(\Delta_{n,\varepsilon} \in \mathbb{R}^{n \times n}\) or as an operator  
\(\Delta_{n,\varepsilon} : \Lp{2}(\mu_n) \to \Lp{2}(\mu_n)\), where \(\mu_n = \frac{1}{n} \sum_{i=1}^n \delta_{x_i}\) is the empirical measure.

For functions \(u_n, v_n : \Omega_n \to \mathbb{R}\), we define the \(\Lp{2}(\mu_n)\) inner product by  
\begin{equation} \notag 
    \langle u_n, v_n \rangle_{\Lp{2}(\mu_n)}
    = \frac{1}{n} \sum_{i=1}^n u_n(x_i) v_n(x_i).
\end{equation}
Such functions can be regarded as vectors in \(\mathbb{R}^n\); in what follows, we will use the notation \(u_n\) both for the function \(u_n : \Omega_n \to \mathbb{R}\) and for the associated vector in \(\mathbb{R}^n\).

We denote by \(\{(a_{n,\varepsilon,k}, \phi_{n,\varepsilon,k})\}_{k=1}^n\) the eigenpairs of \(\Delta_{n,\varepsilon}\), where the eigenvalues are ordered nondecreasingly:  
\(0 = a_{n,\varepsilon,1} < a_{n,\varepsilon,2} \leq a_{n,\varepsilon,3} \leq \dots \leq a_{n,\varepsilon,n}\) (with strict inequality between \(a_{n,\varepsilon,1}\) and \(a_{n,\varepsilon,2}\) whenever the graph \((\Omega_n,W_{n,\eps})\) is connected). The corresponding eigenfunctions \(\{\phi_{n,\varepsilon,k}\}_{k=1}^n\) form an orthonormal basis of \(\Lp{2}(\mu_n)\).

Given the Laplacians defined above, the surrogate for HOHL \eqref{eq:discussion:higherOrder} is
\begin{equation} \label{eq:multiscale}
v\top \ls \sum_{k=1}^q \lambda_k \Delta_{n,\eps^{(k)}}^{p_k} \rs v,
\end{equation} 
where \( \varepsilon^{(1)} > \cdots > \varepsilon^{(q)} \), and \( p_k > 0 \) controls the regularity imposed at each scale. We will allow the length-scales to vary with the number of data point, i.e. $\eps^{(k)} = \eps_n^{(k)}$, and in this case, we write $E_n := \{\eps_n^{(k)}\}_{k=1}^q$. The well and ill-posedness of \eqref{eq:multiscale} in semi-supervised learning is precisely characterized in \cite[Theorem 3.5]{weihs2025Hypergraphs} as a function of $\eps_n^{(q)}$. 

We now define the continuum analogues of our discrete Laplacian operators. Let $\Delta_\rho$ be the continuum weighted Laplacian operator defined by
\begin{equation} \notag 
\Delta_\rho u(x) = -\frac{1}{\rho(x)}\Div(\rho^2\nabla u)(x), \, x \in \Omega
\end{equation}
and let $\{(\beta_i,\psi_i)\}_{i=1}^\infty$ be its associated eigenpairs where $\beta_1 = 0 < \beta_2 \leq \beta_3 \leq \hdots$. Here $\rho$ denotes the density of $\mu$ with respect to Lebesgue measure. We note that $\{\psi_i\}_{i=1}^\infty$ form a basis of $\Lp{2}(\mu)$ and we also define 
\begin{equation} \label{eq:Main:Not:Setting:H}
\cH^s(\Omega) = \left\{ h \in \Lp{2}(\mu) \spaceBar \Vert h \Vert_{\cH^{s}(\Omega)}^2 := \sum_{i=1}^\infty \beta_i^{s} \langle h, \psi_i \rangle_{\Lp{2}(\mu)}^2 < +\infty \right\}. 
\end{equation}
The space $\cH^s(\Omega)$ is closely related to the Sobolev space $\Wkp{s}{2}(\Omega)$ \cite[Lemma 17]{Stuart}.

\paragraph{Fully supervised problem with HOHL regularization.}

We now turn our attention to the fully supervised problem, where \eqref{eq:multiscale} is used as a regularizer. Specifically, for some sequence of points $\mathbf{g}_n = \{g_i\}_{i=1}^n$, parameter $\tau >0$ and $v_n:\Omega_n \mapsto \bbR$, we define the fully supervised learning problem
\begin{align*}
\cR_{n,\tau}^{(\mathbf{g}_n)}(v_n) 
&= \frac{1}{n} \sum_{i=1}^n \vert v_n(x_i) - g_i \vert^2 + \tau \sum_{k=1}^q \lambda_k \langle v_n, \Delta_{n,\eps_n^{(k)}}^{p_k} v_n \rangle_{\Lp{2}(\mu_n)}.
\end{align*}
For some $g\in \Ck{0}$ 
and $v:\Omega \mapsto \bbR$, the continuum counterpart to the above is 
\begin{align*}
    \cR_{\infty,\tau}^{(g)}(u)
    &= \int_\Omega \vert v(x) - g(x) \vert^2 \rho(x) \, \dd x + \tau \sum_{k=1}^q \lambda_k\langle v, \Delta_{\rho}^{p_k} v \rangle_{\Lp{2}(\mu)}.
\end{align*}

We are mainly interested in the case of noisy labels, i.e. when for some $g\in \Ck{0}(\Omega)$, we have labels $\mathbf{y}_n = \{y_i\}_{i=1}^n$ where $y_i = g(x_i) + \xi_i$ and $\xi_i \in \bbR$ are independent and identically distributed sub-Gaussian centered noise. Adopting the terminology of~\cite{Cucker}, for $u_{n,\tau}^{(\mathbf{y}_n)}$ and $u_\tau$ the minimizers of $\cR_{n,\tau}^{(\mathbf{y}_n)}$ and $\cR_{\infty,\tau}^{(g)}$ respectively, we regard $u_{n,\tau}^{(\mathbf{y}_n)}$ as an estimator of $g$. We further decompose its error into a variance and bias component, defined as
\[
\|u_{n,\tau}^{(\mathbf{y}_n)} - u_{\tau}|_{\Omega_n}\|_{\Lp{2}(\mu_n)} \qquad \qquad\text{and} \qquad \qquad \|u_{\tau} - g\|_{\Lp{2}(\mu)}
\]
respectively.
Intuitively, the variance term quantifies the fluctuation caused by finite sampling,
while the bias term measures the approximation error introduced by the regularization.

\paragraph{Truncated HOHL energies.}

We also consider the truncated versions of our energies. We define the matrix $\mathcal{L}_n^{(q)} = \sum_{k=1}^q \lambda_k \Delta_{n,\eps_n^{(k)}}^{p_k}$ and its continuum counterpart $\mathcal{L}^{(q)} = \sum_{k=1}^q \lambda_k \Delta_\rho^{p_k}$. In particular, $\cL_n^{(q)}$ is positive semi-definite and we denote its ordered eigenpairs by $\{(\beta_{n,i},\psi_{n,i})\}_{i=1}^n$. Then, \[
\langle v , \cL_n^{(q)} v \rangle_{\Lp{2}(\mu_n)} = \sum_{i=1}^n \beta_{n,i} \langle v , \psi_{n,i} \rangle_{\Lp{2}(\mu_n)}^2
\]
and the truncated energy for some threshold $T \leq n$ is \[
\sum_{i=1}^{T} \beta_{n,i} \langle v , \psi_{n,i} \rangle_{\Lp{2}(\mu_n)}^2.
\]
We define the variational problems \[
(\cS\cJ)_{n,E_n,\Psi,T}^{(q,P)}((\nu,v)) = \begin{cases} \sum_{i=1}^{T} \beta_{n,i} \langle v , \psi_{n,i} \rangle_{\Lp{2}(\mu_n)}^2 + \Psi((\nu,v)) &\text{if $\nu = \mu_n$ and $\langle v, \psi_{n,k} \rangle_{\Lp{2}(\mu_n)} = 0$}\\
&\text{for all $k > T$}, \\ +\infty & \text{else,} \end{cases}
\]
and 
\[
(\cS\cJ)_{\infty,\Psi}^{(q,P)}((\nu,v)) = \begin{cases} \sum_{i=1}^{\infty} \l \sum_{k=1}^{q}\lambda_k \beta_{i}^{p_k}\r \langle v , \psi_{i} \rangle_{\Lp{2}(\mu)}^2 + \Psi((\nu,v)) &\text{if } \nu = \mu, \\ +\infty & \text{else,} \end{cases}
\]
where $\Psi:\TLp{2}(\Omega) \mapsto \bbR$ is a continuous function acting as data-fidelity term (for example $\Psi((\nu,v)) = \int_\Omega \vert v(x) - y(x) \vert^2 \, \dd \nu(x)$ where $y:\Omega \mapsto \bbR$ is Lipschitz continuous). The minimizers of $(\cS\cJ)_{n,E_n,\Psi,T}^{(q,P)}((\nu,v))$ are spanned by the first $T$ eigenvectors $\psi_{n,i}$.

\subsection{Assumptions}

In this section, we list the assumptions used throughout the paper. 

\begin{assumptions}
Assumption on the space.

\begin{enumerate}[label=\textbf{S.\arabic*}]
\item The feature vector space $\Omega$ is the unit torus $\sfrac{\bbR^d}{\bbZ^d}$. \label{ass:Main:Ass:S2} 
\end{enumerate}
\end{assumptions}

\begin{assumptions}
Assumptions on the measure.
\begin{enumerate}[label=\textbf{M.\arabic*}]
\item The measure $\mu$ is a probability measure on $\Omega$. \label{ass:Main:Ass:M1} 
\item There is a continuous Lebesgue density $\rho$ of $\mu$ which is bounded from above and below by strictly positive constants, i.e. $0< \min_{x\in\Omega} \rho(x) \leq \max_{x\in\Omega} \rho(x) < +\infty.$ \label{ass:Main:Ass:M2}
\end{enumerate}
\end{assumptions}

The data consists of feature vectors $\{x_i\}_{i=1}^n$ 
and we make the following assumptions.

\begin{assumptions}
Assumptions on the data.
\begin{enumerate}[label=\textbf{D.\arabic*}]
\item Feature vectors $\Omega_n = \{x_i\}_{i=1}^n$ are iid samples from a measure $\mu$ satisfying \ref{ass:Main:Ass:M1}.
We denote by $\mu_n$ the empirical measure associated to our samples. \label{ass:Main:Ass:D1}
\end{enumerate}
\end{assumptions}

The weight function $\eta$ is assumed to satisfy the following assumptions.

\begin{assumptions}
Assumptions on the weight function or kernel.
\begin{enumerate}[label=\textbf{W.\arabic*}]
\item The function $\eta:[0,\infty) \to [0,\infty)$ is non-increasing, has compact support, is continuous and positive at $x=0$.
\label{ass:Main:Ass:W2}
\item The function $\eta:[0,\infty) \to [0,\infty)$ satisfies $\eta(t) > \frac{1}{2}$ for $t \leq \frac{1}{2}$, $\eta(t) = 0$ for all $t \geq 1$ and is decreasing. 
\label{ass:Main:Ass:W3}
\end{enumerate}
\end{assumptions}

The assumption that \(\eta\) has compact support reflects the practical constraint in most applications: for computational efficiency, one typically limits the interaction range between vertices in the hypergraph.

\subsection{Main results}

\subsubsection{Fully supervised problem with HOHL regularization}

We start by establishing the following rates of convergence between the minimizer $u_{n,\tau}^{(\mathbf{y}_n)}$ of $\cR_{n,\tau}^{(\mathbf{y}_n)}$ and $g$. 
The function $u_{n,\tau}^{(\mathbf{y}_n)}$ is the best regularized approximation of $g$ on the graph given the label noise. 

\begin{theorem}[Rates between discrete minimizers and labelling function] \label{thm:ratesDiscreteLebellingFunction}
Assume that \ref{ass:Main:Ass:S2}, \ref{ass:Main:Ass:M1}, \ref{ass:Main:Ass:M2}, \ref{ass:Main:Ass:D1} and \ref{ass:Main:Ass:W3} hold.  Let $q \geq 1$, $\{\lambda_k\}_{k=1}^q$ be a sequence of positive numbers, $P = \{p_k\}_{k=1}^q \subseteq \bbN$ with $1 \leq p_1 \leq \cdots \leq p_q$ and $ E_n = \{\eps_n^{(k)}\}_{k=1}^q$ with $\eps_n^{(1)} > \cdots > \eps_n^{(q)} >0 $. Furthermore, let $\rho \in \Ck{\infty}$ and assume that $W_{\eps_n^{(k)},ii} = 0$. Let $\xi_i$ be iid, mean zero, sub-Gaussian random variables, $g \in \Ck{\infty}$ and $\mathbf{y}_n = \{y_i\}_{i=1}^n$ with $y_i = g(x_i) + \xi_i$. Then, for all $\alpha > 1$ and $\tau_0$, there exists $\eps_0 > 0$ and $C > c > 0$ such that for all $E_n$ satisfying  \[
    \eps_0 \geq \eps_n^{(1)} \geq \cdots \geq \eps_n^{(q)} \geq C \l \frac{\log(n)}{n} \r^{1/d},
    \]
    and $0 < \tau < \tau_0$, the following holds with probability $1-Cn^{-\alpha} - Cne^{-cn\l \eps_n^{(q)} \r^{d + 4p_q}}$:
    \begin{align}
    \Vert u_{n,\tau}^{(\mathbf{y}_n)} -g\vert_{\Omega_n} \Vert_{\Lp{2}(\mu_n)} &\leq C \ls \sum_{k=1}^q \lambda_k \frac{\l \eps_n^{(1)} \r^{2p_1}}{\l\eps_n^{(k)}\r^{2p_k}} \l \frac{\log(n)}{n\l \eps_n^{(k)} \r^d} \r^{1/2}  + \frac{\l \eps_n^{(1)} \r^{2p_1}}{\tau} + \tau \l 1 + \sum_{k=1}^q \lambda_k \eps_n^{(k)}\r \rs \label{eq:thmRates:finalRates}
    \end{align}
    where $u_{n,\tau}^{(\mathbf{y}_n)}$ is the minimizer of $\cR_{n,\tau}^{(\mathbf{y}_n)}$.
\end{theorem}

We note from Propositions~\ref{prop:ratesDiscreteDiscrete} and~\ref{prop:ratesDiscreteContinuum} that the variance term scales as
\[
 \sum_{k=1}^q \lambda_k 
 \frac{(\varepsilon_n^{(1)})^{2p_1}}{(\varepsilon_n^{(k)})^{2p_k}}
 \Big(\frac{\log(n)}{n(\varepsilon_n^{(k)})^d}\Big)^{1/2}  
 + \frac{(\varepsilon_n^{(1)})^{2p_1}}{\tau}
 + \tau \sum_{k=1}^q \lambda_k \varepsilon_n^{(k)},
\]
while, by the proof of Theorem~\ref{thm:ratesDiscreteLebellingFunction}, the bias term is of order~\(\tau\).
This result highlights the nontrivial interplay between the multiple length-scales 
\(\{\varepsilon_n^{(k)}\}_{k=1}^q\) and the regularization parameter~\(\tau\), which jointly determine the convergence behavior of the discrete minimizers.
Understanding this relationship provides practical guidance for choosing these parameters to achieve an appropriate balance between bias, variance, and computational cost. Finally, the theorem generalizes known rates for graph-based learning: 
when \(q=1\), we recover the convergence rates established in~\cite[Corollary~1.8]{trillos2022rates}, thereby situating our result within and extending the existing theoretical framework on graphs.

\begin{remark}[Optimal regularization parameter \(\tau\)]
For fixed bandwidth parameters \(\varepsilon_n^{(k)}\), the last two terms in 
\eqref{eq:thmRates:finalRates} have the form \(A/\tau + B\tau\), where
\(
A = (\varepsilon_n^{(1)})^{2p_1} \text{ and }
B = 1 + \sum_{k=1}^q \lambda_k \varepsilon_n^{(k)}.
\)
Minimizing this expression over \(\tau>0\) yields the optimal choice
\(
\tau^\star = \frac{(\varepsilon_n^{(1)})^{p_1}}{\sqrt{B}}
\)
and, substituting the latter in \eqref{eq:thmRates:finalRates} gives
\begin{equation*} \label{eq:rem:optimalTau}
\|u_{n,\tau^\star}^{(\mathbf{y}_n)} - g|_{\Omega_n}\|_{L^2(\mu_n)}
\leq
C\left[
\sum_{k=1}^q \lambda_k
\frac{\l\varepsilon_n^{(1)}\r^{2p_1}}{\l\varepsilon_n^{(k)}\r^{2p_k}}
\l\frac{\log(n)}{n \l\varepsilon_n^{(k)}\r^d}\r^{1/2}
+ 2\l\varepsilon_n^{(1)}\r^{p_1}\sqrt{1 + \sum_{k=1}^q \lambda_k \varepsilon_n^{(k)}}
\right].
\end{equation*}
\end{remark}

\subsubsection{Truncated energies}

Next, we show that we can use the truncated version of HOHL in practice. In fact, going beyond heuristics, the below results shows that truncated energies converges to the same continuum energy as the full energy (see \cite[Theorem 3.5]{weihs2025Hypergraphs}) This signifies that for large enough $n$, truncated and full energies will lead to arbitrarily close minimizers. 

\begin{theorem}[Consistency of the truncated sum of Laplacians] \label{thm:truncatedEnergies}
    Assume that \ref{ass:Main:Ass:S2}, \ref{ass:Main:Ass:M1}, \ref{ass:Main:Ass:M2}, \ref{ass:Main:Ass:W2} and \ref{ass:Main:Ass:D1} hold. Let $q \geq 1$, $P = \{p_k\}_{k=1}^q \subseteq \bbR$ with $p_1 \leq \cdots \leq p_q$ and $ E_n = \{\eps_n^{(k)}\}_{k=1}^q$ with $\eps_n^{(1)} > \cdots > \eps_n^{(q)}$. Assume that $\rho \in \Ck{\infty}$ and that $\eps_n^{(q)}$ satisfies \[
    \lim_{n \to \infty} \frac{\log(n)}{n \l\eps_n^{(q)}\r^{d+4p_k}} = 0.
    \]
    Let $R_n \leq n$ be a sequence with $R_n \to \infty$, $\Psi:\TLp{2}(\Omega) \to \bbR$ a continuous function and  
    $(\mu_n,u_n)$ the minimizer of $(\cS\cJ)_{n,E_n,\Psi,R_n}^{(q,P)}$. Then, $\bbP$-a.e., there exists a subsequence $(\mu_{n_k},u_{n_k})$ converging to $(\mu,u)$ in $\TLp{2}(\Omega)$ where $(\mu,u)$ is a minimizer of $(\cS\cJ)_{\infty,\Psi}^{(q,P)}$.
\end{theorem}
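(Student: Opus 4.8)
The plan is to establish Theorem~\ref{thm:truncatedEnergies} via the $\Gamma$-convergence machinery of Proposition~\ref{prop:Back:Gamma:minimizers}, applied in the $\TLp{2}(\Omega)$ topology to the truncated functionals $(\cS\cJ)_{n,E_n,\Psi,R_n}^{(q,P)}$ and their limit $(\cS\cJ)_{\infty,\Psi}^{(q,P)}$. Since $\Psi$ is continuous on $\TLp{2}(\Omega)$ and hence a harmless perturbation by Proposition~\ref{prop:additivity}, the heart of the matter is to handle the spectral part, i.e. the truncated quadratic form $v\mapsto\sum_{i=1}^{R_n}\beta_{n,i}\langle v,\psi_{n,i}\rangle_{\Lp{2}(\mu_n)}^2$, together with the hard constraint that admissible $v$ lie in the span of $\{\psi_{n,i}\}_{i=1}^{R_n}$. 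The key input is the spectral convergence of the HOHL Laplacian $\cL_n^{(q)}$ toward $\cL^{(q)}$: one needs that, under the scaling $\log(n)/(n(\eps_n^{(q)})^{d+4p_q})\to 0$ (ensuring each $\Delta_{n,\eps_n^{(k)}}$ converges spectrally to $\Delta_\rho$, after which the powers $\Delta_{n,\eps_n^{(k)}}^{p_k}$ and the sum converge as well), one has $\beta_{n,i}\to\sum_{r=1}^q\beta_i^{p_r}$ and $\psi_{n,i}\to\psi_i$ in $\TLp{2}$ for each fixed $i$, together with a lower bound on eigenvalues at the tail. I would cite the spectral convergence results for graph Laplacians available in the literature and/or the companion paper~\cite{weihs2025Hypergraphs} to obtain these facts; I expect this is where most of the technical weight sits, and where the rate assumption on $\eps_n^{(q)}$ (with the $4p_q$ in the exponent) is consumed, since higher powers of the Laplacian amplify the approximation error and require denser graphs.

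\textbf{Liminf inequality.} Given $(\mu_n,u_n)\to(\mu,u)$ in $\TLp{2}$ with $\liminf$ of the energies finite, I first note that along a subsequence achieving the liminf we may assume $\mu_n=\mu_n$ (the first measure-marginal is automatically the empirical measure, else the energy is $+\infty$) and $u_n$ is spanned by $\{\psi_{n,i}\}_{i=1}^{R_n}$. For a fixed truncation level $M$, since $R_n\to\infty$ we have $R_n\ge M$ eventually, and by spectral convergence $\langle u_n,\psi_{n,i}\rangle_{\Lp{2}(\mu_n)}\to\langle u,\psi_i\rangle_{\Lp{2}(\mu)}$ for $i\le M$ and $\beta_{n,i}\to\sum_r\beta_i^{p_r}$; discarding the nonnegative tail terms $i>M$ and then sending $M\to\infty$ by monotone convergence yields $\liminf_n\sum_{i=1}^{R_n}\beta_{n,i}\langle u_n,\psi_{n,i}\rangle^2\ge\sum_{i=1}^\infty(\sum_r\beta_i^{p_r})\langle u,\psi_i\rangle_{\Lp{2}(\mu)}^2$. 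Continuity of $\Psi$ gives the fidelity term, and one checks the limit $u$ satisfies no constraint (the constraint disappears in the limit because $R_n\to\infty$), so $(\mu,u)$ is admissible for $(\cS\cJ)_{\infty,\Psi}^{(q,P)}$ and the liminf bound holds.

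\textbf{Recovery sequence.} Given $(\mu,u)$ with finite limit energy, expand $u=\sum_{i=1}^\infty c_i\psi_i$ with $\sum_i(\sum_r\beta_i^{p_r})c_i^2<\infty$. Fix $M$, set the truncated target $u^M=\sum_{i=1}^M c_i\psi_i$, and define $u_n^M=\sum_{i=1}^{\min(M,R_n)}c_i\psi_{n,i}$; by spectral convergence $u_n^M\to u^M$ in $\TLp{2}$ and the discrete energy converges to $\sum_{i=1}^M(\sum_r\beta_i^{p_r})c_i^2\le$ the limit energy. Then a diagonal argument over $M\to\infty$ (using that $u^M\to u$ in $\Lp{2}(\mu)$ hence in $\TLp{2}$, and that the energies of $u^M$ converge up to the full limit energy) produces the recovery sequence with $\limsup$ equal to the limit energy; continuity of $\Psi$ handles the fidelity term. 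Finally, the \textbf{compactness property} follows because any sequence with bounded energy has $u_n$ bounded in $\Lp{2}(\mu_n)$ (the truncated quadratic form plus $\Psi$ controls the norm, using $\Psi$'s coercivity as a data term or simply the boundedness hypothesis in the definition of the compactness property) and the first marginals $\mu_n$ converge weakly to $\mu$ by the law of large numbers; combined with the $\TLp{2}$ precompactness of bounded sequences of functions with bounded Laplacian-type energy (via Proposition~\ref{prop:Back:TLp} and the transport maps of Theorem~\ref{thm:transport}), one extracts a convergent subsequence. Applying Proposition~\ref{prop:Back:Gamma:minimizers} then yields the stated convergence of minimizers along a subsequence, $\bbP$-a.e. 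The main obstacle throughout is securing the quantitative spectral convergence of $\cL_n^{(q)}$ — eigenvalue and eigenvector convergence for the \emph{sum of powers} of graph Laplacians — uniformly enough to push the truncation level $R_n\to\infty$; everything else is a standard $\Gamma$-convergence bookkeeping exercise.
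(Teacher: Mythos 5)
Your proposal follows essentially the same architecture as the paper: $\Gamma$-convergence plus compactness, reduced to spectral convergence of $\cL_n^{(q)}$ toward $\cL^{(q)}$, with $\Psi$ handled by Proposition~\ref{prop:additivity}. Your liminf argument (fix $M$, use spectral convergence for $i\le M$, discard the nonnegative tail, send $M\to\infty$) is exactly what the paper does, and your compactness sketch is in the right spirit.

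Two places where you diverge from the paper's route, worth flagging. First, and most importantly, you treat the spectral convergence of $\cL_n^{(q)}$ — eigenvalue convergence $\beta_{n,i}\to\sum_r\lambda_r\beta_i^{p_r}$ and $\TLp{2}$-convergence of eigenvectors — as something to ``cite from the literature and/or the companion paper.'' But this is in fact the central new technical result that the paper proves here (Lemma~\ref{lem:eigenpairs} plus Proposition~\ref{prop:convergenceEigenpairs}). The companion paper supplies only the $\Gamma$-convergence of the unweighted quadratic form $\langle v,\cL_n^{(q)}v\rangle$; eigenpair convergence for a \emph{sum of powers} of graph Laplacians at \emph{different} scales does not follow from citing single-Laplacian results. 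The paper establishes it by induction on the eigenvalue index via a Courant–Fisher min–max argument, using the companion paper's $\Gamma$-convergence to supply recovery sequences and lower bounds, and a separate compactness input from Stuart. In a real write-up your proof would have a gap at precisely this point. Second, your recovery sequence $u_n^M=\sum_{i\le\min(M,R_n)}c_i\psi_{n,i}$ with a diagonal pass $M\to\infty$ is a valid alternative, but the paper instead exploits density: it proves the $\limsup$-inequality only for $v\in\Ck{\infty}_c$ by taking $\bar v_n$ to be the restriction of $v$ projected onto the span of the first $R_n$ discrete eigenvectors, and then invokes a density remark (à la \cite[Remark 2.7]{Trillos3}). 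The density route avoids the bookkeeping of choosing $M_n\to\infty$ slowly enough, which your diagonal argument would need to make explicit. Finally, the paper organizes the proof by first proving the $q=1$ case (Proposition~\ref{prop:truncated}) and then swapping in $\cL_n^{(q)}$ using the spectral convergence results; this modularity is implicit in your plan but worth making explicit to keep the inductive Courant–Fisher argument self-contained.
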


We emphasize that the convergence conditions we impose on the truncation are mild: it suffices that the truncation threshold tends to infinity. This grants practitioners considerable flexibility in applying HOHL in practice.

Moreover, we note from Lemma~\ref{lem:eigenpairs} that the limiting continuum energy in Theorem \ref{thm:truncatedEnergies}
\[
\sum_{i=1}^{\infty} \Bigl( \sum_{k=1}^{q} \lambda_k \beta_{i}^{p_k} \Bigr) \langle v , \psi_{i} \rangle_{\Lp{2}(\mu)}^2
\]
is equal to \(\langle v, \mathcal{L}^{(q)} v \rangle_{\Lp{2}(\mu)}\). In particular, this implies that $\mathcal{L}^{(q)}$ could also be defined directly through its spectrum: it shares the same eigenfunctions as $\Delta_\rho$, while its eigenvalues are given by functions of those of $\Delta_\rho$. This places our approach firmly within the framework of spectral kernel learning, where it is common to regularize with operators derived from the Laplacian. Spectral learning—and its analysis through reproducing kernel Hilbert space techniques—has been shown to yield powerful results for uncertainty quantification, enabling explicit bounds on expected error as well as estimates of prediction variance in semi-supervised learning (see \cite{zhang2005spectral} and references therein).

\subsubsection{Non-geometrical setting} \label{sec:main:nongeometric}

All of the preceding results focused on applying HOHL within the geometric setting. The following result extends the analysis to arbitrary hypergraphs, demonstrating that the matrix $\mathcal{L}^{(q)}_{\mathrm{dis}}$ can be interpreted as the Laplacian of a specially constructed graph (which may be signed \cite{signedGraphs}).

\begin{proposition}\label{prop:laplacian}
    There exists a graph $\tilde{G}$ whose Laplacian matrix is given by $\mathcal{L}^{(q)}_{\mathrm{dis}}$. Furthermore, $\mathcal{L}^{(q)}_{\mathrm{dis}}$ is positive semi-definite and symmetric, and \eqref{eq:discussion:higherOrder} is a quadratic form.
\end{proposition}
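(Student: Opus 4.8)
The plan is to show that $\mathcal{L}^{(q)}_{\mathrm{dis}} = \sum_{k=1}^q \lambda_k (L^{(k)})^{p_k}$ lies in the set of matrices realizable as graph Laplacians of (possibly signed) weighted graphs, and that it is symmetric positive semi-definite. The key structural fact I would isolate first is a characterization: a real symmetric matrix $M \in \mathbb{R}^{|V|\times|V|}$ is the Laplacian of some signed weighted graph on $V$ if and only if every row sums to zero, i.e.\ $M \mathbf{1} = 0$, where $\mathbf{1}$ is the all-ones vector. Indeed, given such an $M$, one defines edge weights $\tilde{w}_{ij} = -M_{ij}$ for $i \neq j$ (allowing negative values, hence a \emph{signed} graph), and the diagonal entries $M_{ii} = -\sum_{j\neq i} M_{ij} = \sum_{j \neq i}\tilde w_{ij}$ are then exactly the generalized degrees, so $M = \tilde D - \tilde W$ is the Laplacian of $\tilde G = (V,\tilde W)$. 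Conversely every signed-graph Laplacian annihilates $\mathbf{1}$ and is symmetric. So the whole proposition reduces to three verifications about $\mathcal{L}^{(q)}_{\mathrm{dis}}$: symmetry, $\mathcal{L}^{(q)}_{\mathrm{dis}}\mathbf{1} = 0$, and positive semi-definiteness.

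Each verification follows from the analogous property of the individual skeleton Laplacians $L^{(k)}$ and stability of these properties under the operations involved. Symmetry: each $L^{(k)} = D^{(k)} - W^{(k)}$ is symmetric (the skeleton graphs $G^{(k)}$ are undirected), so each power $(L^{(k)})^{p_k}$ is symmetric, and a nonnegative linear combination of symmetric matrices is symmetric. Kernel containing $\mathbf{1}$: each $L^{(k)}\mathbf{1} = 0$ since row sums of a graph Laplacian vanish; hence $(L^{(k)})^{p_k}\mathbf{1} = (L^{(k)})^{p_k - 1}(L^{(k)}\mathbf{1}) = 0$ by induction on $p_k$ (using $p_k \geq 1$, which holds as $P \subseteq \mathbb{N}$ with $p_1 \geq 1$), and summing preserves this. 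Positive semi-definiteness: each $L^{(k)}$ is symmetric PSD, so it has a spectral decomposition with nonnegative eigenvalues; then $(L^{(k)})^{p_k}$ has eigenvalues $\lambda^{p_k} \geq 0$ on the same eigenbasis, hence is PSD; finally $\lambda_k > 0$ and a nonnegative combination of PSD matrices is PSD, giving $\mathcal{L}^{(q)}_{\mathrm{dis}} \succeq 0$. That $u \mapsto u^\top \mathcal{L}^{(q)}_{\mathrm{dis}} u$ is a quadratic form is then immediate from symmetry (it is the quadratic form of a symmetric matrix), and its nonnegativity is PSD-ness restated.

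I do not expect a genuine obstacle here; this is essentially a bookkeeping argument. The one point deserving care is that the constructed graph $\tilde{G}$ is genuinely \emph{signed} in general — even though each $L^{(k)}$ is an ordinary (nonnegatively weighted) Laplacian, the off-diagonal entries of a power $(L^{(k)})^{p_k}$ for $p_k \geq 2$ can be positive, corresponding to negative edge weights in $\tilde W$, and summing with positive coefficients $\lambda_k$ does not repair this. So I would state explicitly that the realization is within the class of signed graphs, cite \cite{signedGraphs} for the relevant Laplacian formalism, and note that this does not affect symmetry, the zero-row-sum property, or positive semi-definiteness — the three facts that matter. A secondary remark worth including: if one insists $W_{\eps_n^{(k)},ii}=0$ or similar normalizations, nothing changes, since the argument uses only the defining algebraic properties $L^{(k)}=\big(L^{(k)}\big)^\top$, $L^{(k)}\mathbf 1=0$, $L^{(k)}\succeq 0$.
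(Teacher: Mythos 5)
Your proof is correct and follows essentially the same route as the paper: verify that $\mathcal{L}^{(q)}_{\mathrm{dis}}$ is symmetric, has zero row sums, and is positive semi-definite, then realize it as the Laplacian of a (possibly signed) graph by setting off-diagonal weights to $-(\mathcal{L}^{(q)}_{\mathrm{dis}})_{ij}$. The only stylistic difference is in how the zero-row-sum property is propagated to powers: the paper proves closure of the zero-row-sum class under matrix multiplication by an elementwise computation, while you simply observe $L^{(k)}\mathbf{1}=0 \Rightarrow (L^{(k)})^{p_k}\mathbf{1}=0$, which is a cleaner formulation of the same fact.
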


This result is particularly noteworthy as it implies that standard numerical techniques developed for Laplace learning are directly applicable to HOHL. These include spectral truncation (see Theorem~\ref{thm:truncatedEnergies}), Nyström extensions~\cite{fowlkes2004spectral}, conjugate gradient methods for Laplacian inversion, and more. Moreover, it suggests that HOHL can function as a drop-in replacement for Laplace learning within existing machine learning pipelines. To demonstrate this in practice, Section~\ref{sec:numerical} presents active learning experiments where the HOHL matrix $\mathcal{L}^{(q)}_{n}$ defines a Gaussian prior over functions.

We can extend the HOHL energy~\eqref{eq:discussion:higherOrder} to non-geometric datasets, where geometric embeddings for the vertices are unavailable and, for example, the weight models described in Section~\ref{subsec:HOHL} do not apply.  
In such settings, the standard feature-based hypergraph construction, e.g.~\cite{scholkopfHyper2006,TVHg}, forms a hyperedge among all nodes that share a common categorical feature value. Each hyperedge is also assigned unit weight. 

Unlike previous methods that rely on global hyperedge smoothing or iterative optimization, our approach introduces scalable, structure-aware regularization tailored to categorical feature data. Crucially, in contrast to the geometric setting, hyperedge size here does not reflect sample proximity but rather the frequency of shared attribute values. Large hyperedges correspond to common features and tend to encode coarse relationships, while small hyperedges capture more specific, and potentially more informative, structure. Promoting regularity over these smaller subsets is thus useful for fine-grained label propagation. This represents the inverse perspective of the geometric setting, where larger hyperedges encode finer local interactions. 
We summarize the main differences of HOHL in the geometric and non-geometric setting in Table \ref{tab:geometric_vs_nongeometric}.

In real datasets however, even small hyperedges can contain many nodes, and large ones are common. This poses computational challenges for HOHL, which penalizes through powers of Laplacians on skeleton graphs. To address this, Algorithm~\ref{alg:hohl} groups hyperedges by size and aggregates their skeleton graphs into a fixed number of levels. This reduces computational cost and imposes a multiscale hierarchy that prioritizes structurally meaningful interactions. In Section \ref{sec:numerical}, we demonstrate that HOHL outperforms many other hypergraph methods in semi-supervised learning. 

\begin{table}
\centering
\small
\renewcommand{\arraystretch}{1.5}
\setlength{\tabcolsep}{8pt}
\begin{tabularx}{\linewidth}{>{\bfseries}p{5cm} Y Y}
\toprule
\textbf{Aspect} & \textbf{Geometric Setting} & \textbf{Non-Geometric Setting} \\
\midrule
Vertex set $V$ & $\Omega_n = \{x_i\}_{i=1}^n \subset \mathbb{R}^d$ & Arbitrary object set (no embedding in $\mathbb{R}^d$) \\
\midrule
Hyperedge construction & Based on distance/proximity (e.g., $\varepsilon$-neighborhoods) & Based on shared attributes or features \\
Interpretation of hyperedge size & 
Smaller hyperedges correspond to longer-range geometric connections; larger hyperedges capture denser local neighborhoods & 
Smaller hyperedges reflect more specific or rare attributes; larger hyperedges correspond to common, broad features \\
Use of length scales $\varepsilon$ & Essential for defining Laplacians $\Delta_{n,\varepsilon}$ & Not applicable \\
\midrule
HOHL regularization & Higher regularization on large hyperedges & Higher regularization on small hyperedges\\
\midrule
Continuum limit of HOHL & $\Wkp{p_q}{2}$ semi-norm  \cite{weihs2025Hypergraphs} & No natural continuum limit \\
Characterization of well/ill-posedness of HOHL in SSL & \checkmark (\cite[Theorem 3.5]{weihs2025Hypergraphs}) & — \\
Rates of convergence for HOHL regularizer & \checkmark (Theorem \ref{thm:ratesDiscreteLebellingFunction}) & —\\
\midrule
Use of spectral truncation & \checkmark (consistency in Theorem \ref{thm:truncatedEnergies}) & \checkmark \\
HOHL is quadratic form & \checkmark & \checkmark \\
\bottomrule
\end{tabularx}
\caption{Comparison of HOHL in geometric and non-geometric settings.}
\label{tab:geometric_vs_nongeometric}
\end{table}

\begin{algorithm}
\noindent\textbf{Input:} Hypergraph \( G = (V, E) \); number of skeleton graphs \( q \)\par
\noindent\textbf{Output:} List of Laplacian matrices $\{L^{(k)}\}_{k=1}^q$ to be used in \eqref{eq:discussion:higherOrder} \par 
\vspace{0.5em}
\begin{algorithmic}[1]
\State Group hyperedges by size: \( A[j] \gets \{ e \in E : |e| = j \} \)
\State Let \( \mathrm{Ord} \gets \) sorted list of unique hyperedge sizes (descending)
\State Initialize adjacency matrix list: \( \mathrm{Adj} \gets [\ ] \)
\For{each \( j \in \mathrm{Ord} \)}
  \State Construct skeleton graph from \( A[j] \) and append its adjacency matrix to \( \mathrm{Adj} \)
\EndFor
\State Define uniform thresholds to split \( \mathrm{Adj} \) into \( q \) segments and store them in the list Thresholds
\For{each \( k = 1 \) to \( q \)}
  \State Let \( \text{start}_k \gets \mathrm{Thresholds}[k - 1] \) \Comment{First index of segment \( k \)}
  \State Let \( \text{end}_k \gets \mathrm{Thresholds}[k] \) \Comment{One past the last index of segment \( k \)}
  \State Set \( W_n^{(k)} \gets 0 \)
  \For{each \( m = \text{start}_k \) to \( \text{end}_k - 1 \)}
    \State \( W^{(k)} \gets W^{(k)} + \mathrm{Adj}[m] \)
  \EndFor
  \State Compute Laplacian \( L^{(k)} \) from \( W_n^{(k)} \)
\EndFor
\State \Return $\{L^{(k)}\}_{k=1}^q$
\end{algorithmic}
\caption{Construction of multiscale Laplacians for HOHL. Hyperedges are grouped by size, skeletons are aggregated into $q$ segments, and Laplacians $\{L^{(k)}\}$ are computed for use in~\eqref{eq:discussion:higherOrder}.}
\label{alg:hohl}
\end{algorithm}

\section{Proofs} 

In this section, we present the proofs of our results.

\subsection{Fully supervised problem with HOHL regularization}

For this section only, we proceed to a constant re-scaling of the Laplacians in Section \ref{subsec:HOHL}. In particular, we define:
\[
\Delta_{n,\varepsilon} = \frac{2}{n \varepsilon^{d+2}} 
    \bigl(D_{n,\varepsilon} - W_{n,\varepsilon}\bigr) \quad \text{and} \quad \Delta_\rho u(x) = -\frac{\sigma_\eta}{\rho(x)}\Div(\rho^2\nabla u)(x).
\]
We also recall that $E^c$ denotes the complement of the set $E$.

First, the aim is to show the analogue of \cite[Proposition 2.1]{trillos2022rates} and to this purpose, we define \begin{equation} \notag 
    w_n = \l \Id + \tau \sum_{k=1}^q \lambda_k   \Delta_{n,\eps_n^{(k)}}^{p_k} \r^{-1} \bm{\xi}_n
\end{equation}
as well as
\begin{equation} \label{eq:proofs:wnTilde}
    \tilde{w}_n = \l \Id + \tau \sum_{k=1}^q \lambda_k \l \frac{2}{n\l\eps_n^{(k)}\r^2} D_{n,\eps_n^{(k)}}\r^{p_k} \r^{-1} \bm{\xi}_n
\end{equation}
where $\bm{\xi}_n = (\xi_1,\dots,\xi_n)$ and $D_{n,\eps_n^{(k)}}$ is the diagonal degree matrix defined in Section \ref{subsec:HOHL}.  

\begin{lemma}[Bound on matrix product] \label{lem:proofs:matrixProduct}
    Assume that \ref{ass:Main:Ass:S2}, \ref{ass:Main:Ass:M1}, \ref{ass:Main:Ass:M2}, \ref{ass:Main:Ass:D1} and \ref{ass:Main:Ass:W3} hold. Furthermore, let $\rho \in \Ck{\infty}$ and assume that $W_{ii} = 0$. Let $\ell \in \bbN$, $q \geq 1$, $1 \leq k \leq q$, $\{\lambda_r\}_{r=1}^q$ be a sequence of positive numbers, $P = \{p_r\}_{r=1}^q \subseteq \bbN$ with $1 \leq p_1 \leq \cdots \leq p_q$ and $ E_n = \{\eps_n^{(r)}\}_{r=1}^q$ with $\eps_n^{(1)} > \cdots > \eps_n^{(q)} >0 $. Let $\xi_i$ be iid, mean zero, sub-Gaussian random variables and $\tilde{w}_n$ be defined in \eqref{eq:proofs:wnTilde}. Then, for $\alpha > 1$, $\tau >0$  and $\eps_n^{(q)}$ satisfying \[
    \eps_n^{(q)} \geq C \l \frac{\log(n)}{n} \r^{1/d},
    \] 
    there exists $C > 0$ such that
    \begin{equation} \label{eq:matrixProduct:matrixProduct}
   \left\Vert W_{n,\eps_n^{(k)}}D_{n,\eps_n^{(k)}}^{\ell-1}\tilde{w}_n \right\Vert_{\Lp{2}(\mu_n)} \leq  \frac{C n^\ell\l \eps_n^{(1)} \r^{2p_1}}{\tau} \l \frac{\log(n)}{n\l \eps_n^{(k)} \r^d} \r^{1/2}
\end{equation}
with probability $1 - C n^{-\alpha}$.

\end{lemma}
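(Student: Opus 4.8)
The plan is to control the operator norm of $W_{n,\eps_n^{(k)}} D_{n,\eps_n^{(k)}}^{\ell-1}$ acting on $\tilde w_n$ by decoupling the three factors. First, I would observe that $\tilde w_n$ is obtained by applying a diagonal-matrix resolvent to the noise vector $\bm\xi_n$: writing $M_n := \sum_{r=1}^q \lambda_r \bigl(\tfrac{2}{n(\eps_n^{(r)})^2} D_{n,\eps_n^{(r)}}\bigr)^{p_r}$, which is diagonal with nonnegative entries, we have $\tilde w_n = (\Id + \tau M_n)^{-1}\bm\xi_n$. The $i$-th diagonal entry of $(\Id+\tau M_n)^{-1}$ is $\bigl(1 + \tau\sum_r \lambda_r (\tfrac{2}{n(\eps_n^{(r)})^2} d_{n,\eps_n^{(r)},ii})^{p_r}\bigr)^{-1}$, so I would first establish sharp two-sided bounds on the degrees $d_{n,\eps_n^{(r)},ii}$: under \ref{ass:Main:Ass:W3} and \ref{ass:Main:Ass:D1}, standard concentration (Bernstein/Chernoff on the sum of bounded iid terms, union bound over $i$) gives, on an event of probability $1-Cn^{-\alpha}$, that $d_{n,\eps_n^{(r)},ii} \asymp n(\eps_n^{(r)})^d$ uniformly in $i$, provided $n(\eps_n^{(r)})^d \gtrsim \log n$ — which is guaranteed by the hypothesis $\eps_n^{(q)} \geq C(\log n/n)^{1/d}$ since the $\eps_n^{(r)}$ are decreasing. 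Hence each diagonal entry of $(\Id+\tau M_n)^{-1}$ is, up to constants, comparable to $\bigl(1+\tau\sum_r \lambda_r (\eps_n^{(r)})^{2p_r - 2}\cdot\text{const}\bigr)^{-1}$; in particular, since the smallest exponent dominates the growth and $(\eps_n^{(r)})^{2p_r-2}$ is controlled, this entry is $\gtrsim \tau^{-1}(\eps_n^{(1)})^{2p_1}$-type — this is the source of the $(\eps_n^{(1)})^{2p_1}/\tau$ prefactor in \eqref{eq:matrixProduct:matrixProduct}.

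Next I would bound $\|W_{n,\eps_n^{(k)}} D_{n,\eps_n^{(k)}}^{\ell-1}\tilde w_n\|_{\Lp2(\mu_n)}$ directly. The vector $D_{n,\eps_n^{(k)}}^{\ell-1}\tilde w_n$ has $i$-th entry $d_{n,\eps_n^{(k)},ii}^{\ell-1}(\tilde w_n)_i$, and applying $W_{n,\eps_n^{(k)}}$ sums these against the weights in row $i$. Using $\|W_{n,\eps_n^{(k)}}v\|_{\Lp2(\mu_n)} \le \|W_{n,\eps_n^{(k)}}\|_{\mathrm{op}}\|v\|_{\Lp2(\mu_n)}$ together with $\|W_{n,\eps_n^{(k)}}\|_{\mathrm{op}} \le \max_i d_{n,\eps_n^{(k)},ii} \lesssim n(\eps_n^{(k)})^d$ (since $W_{n,\eps_n^{(k)}}$ is a nonnegative symmetric matrix, its operator norm is bounded by its maximal row sum), the estimate reduces to controlling $\|D_{n,\eps_n^{(k)}}^{\ell-1}\tilde w_n\|_{\Lp2(\mu_n)}$. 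Pulling out the uniform degree bound gives a factor $(n(\eps_n^{(k)})^d)^{\ell-1}$, and then $\|\tilde w_n\|_{\Lp2(\mu_n)}^2 = \tfrac1n\sum_i (\tilde w_n)_i^2$ with $(\tilde w_n)_i = (\text{resolvent entry})_i\,\xi_i$; bounding the resolvent entry uniformly by $C\tau^{-1}(\eps_n^{(1)})^{2p_1}$ (from the degree lower bounds) and using sub-Gaussian concentration of $\tfrac1n\sum_i \xi_i^2$ around its mean (again on an event of probability $1-Cn^{-\alpha}$), one gets $\|\tilde w_n\|_{\Lp2(\mu_n)} \lesssim \tau^{-1}(\eps_n^{(1)})^{2p_1}$. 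Collecting the powers of $n$ and $\eps_n^{(k)}$: $n(\eps_n^{(k)})^d \cdot (n(\eps_n^{(k)})^d)^{\ell-1} \cdot \tfrac1{\sqrt n}\cdot\tau^{-1}(\eps_n^{(1)})^{2p_1} = n^{\ell}(\eps_n^{(k)})^{d\ell}\cdot n^{-1/2}\tau^{-1}(\eps_n^{(1)})^{2p_1}$, and one checks this matches the claimed right-hand side $\tfrac{Cn^\ell (\eps_n^{(1)})^{2p_1}}{\tau}\bigl(\tfrac{\log n}{n(\eps_n^{(k)})^d}\bigr)^{1/2}$ up to the bookkeeping of which $\eps$-powers appear — I would need to be careful that the final stated bound has no $\log n$ inflation beyond the single square-root factor, which suggests the sharper route is to avoid the crude operator-norm bound on $W_{n,\eps_n^{(k)}}$ and instead estimate $W_{n,\eps_n^{(k)}} D_{n,\eps_n^{(k)}}^{\ell-1}\tilde w_n$ entrywise using independence of the $\xi_i$, so that the $\sqrt{\log n/(n(\eps_n^{(k)})^d)}$ emerges as the concentration fluctuation of a weighted sum of mean-zero sub-Gaussians with $\sim n(\eps_n^{(k)})^d$ nonzero terms per row.

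The main obstacle I anticipate is precisely this last point: getting the \emph{sharp} $\bigl(\tfrac{\log n}{n(\eps_n^{(k)})^d}\bigr)^{1/2}$ rate rather than a cruder bound. Naively bounding $\|W_{n,\eps_n^{(k)}}\|_{\mathrm{op}}$ by the max degree and then $\|\tilde w_n\|_{\Lp2}$ separately throws away the crucial cancellation in the mean-zero sums $\sum_j w_{\eps_n^{(k)},ij} d_{n,\eps_n^{(k)},jj}^{\ell-1}(\tilde w_n)_j$. The correct approach is to condition on the degrees (which are deterministic-up-to-concentration functions of the $x_i$'s, handled by the event above), then use sub-Gaussian tail bounds for each such linear combination of the independent $\xi_j$'s — its sub-Gaussian norm squared is $\sum_j w_{\eps_n^{(k)},ij}^2 d_{n,\eps_n^{(k)},jj}^{2\ell-2}(\text{resolvent}_j)^2 \lesssim n(\eps_n^{(k)})^d \cdot (n(\eps_n^{(k)})^d)^{2\ell-2}\cdot \tau^{-2}(\eps_n^{(1)})^{4p_1}$ — followed by a union bound over $i \in \{1,\dots,n\}$ costing the $\log n$. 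Taking the $\Lp2(\mu_n)$ norm (which divides by $n$ inside the square root) then produces exactly the factor $n^\ell (\eps_n^{(k)})^{d\ell}/\sqrt n \cdot \tau^{-1}(\eps_n^{(1)})^{2p_1}\cdot\sqrt{\log n}$, and rewriting $n^\ell(\eps_n^{(k)})^{d\ell}\sqrt{\log n}/\sqrt n = n^\ell (\eps_n^{(1)})^{2p_1}\tau^{-1}\bigl(\tfrac{\log n}{n(\eps_n^{(k)})^d}\bigr)^{1/2}\cdot(n(\eps_n^{(k)})^d)^{\ell}/(n(\eps_n^{(k)})^d)\cdots$ — the algebra needs to be done carefully but is routine once the probabilistic decoupling is set up correctly. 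A secondary technical point is handling the $q$-term sum inside the resolvent: since all $D_{n,\eps_n^{(r)}}$ are simultaneously diagonal, the resolvent stays diagonal and monotone bounds on each summand suffice, so no commutator issues arise.
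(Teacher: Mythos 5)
Your proposal lands on the same strategy as the paper's proof: condition on a high-probability event where all degrees $d_{n,i,\eps_n^{(r)}}$ concentrate and the number of nonzero entries per row is $\lesssim n(\eps_n^{(r)})^d$; then, conditional on that event, bound each scalar $\bigl(W_{n,\eps_n^{(k)}}D_{n,\eps_n^{(k)}}^{\ell-1}\tilde w_n\bigr)_i$ as a mean-zero linear combination of the independent sub-Gaussian $\xi_j$ via a Hoeffding-type tail bound; then union-bound over $i$; and finally pass from the resulting $\Lp{\infty}(\mu_n)$ bound to the $\Lp{2}(\mu_n)$ bound. You also correctly diagnose why the crude route ($\Vert W\Vert_{\mathrm{op}}\cdot\Vert D^{\ell-1}\tilde w_n\Vert$) cannot produce the sharp $\bigl(\log n/(n(\eps_n^{(k)})^d)\bigr)^{1/2}$ factor: one must exploit cancellation in the mean-zero sum, exactly as in the paper. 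The only real discrepancy is bookkeeping of normalizations in your variance estimate: the paper works (implicitly, following the conventions of \cite{trillos2022rates} that it cites) with weights satisfying $(W_{n,\eps})_{ij}\lesssim\eps^{-d}$ and degrees $d_{n,i,\eps}\asymp n$, whereas your variance estimate writes $\sum_j w_{ij}^2\,d_{jj}^{2\ell-2}(\text{resolvent}_j)^2 \lesssim n(\eps_n^{(k)})^d\cdot (n(\eps_n^{(k)})^d)^{2\ell-2}\cdot\tau^{-2}(\eps_n^{(1)})^{4p_1}$, which carries spurious powers of $(\eps_n^{(k)})^d$ that do not cancel as claimed. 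With the paper's normalization the variance proxy is $\lesssim n\eps_k^{-d}\cdot n^{2\ell-2}\cdot\tau^{-2}(\eps_n^{(1)})^{4p_1}$, which then cleanly reproduces the stated $n^\ell(\eps_n^{(1)})^{2p_1}\tau^{-1}\bigl(\log n/(n\eps_k^d)\bigr)^{1/2}$. You flagged this as "bookkeeping to be done carefully," and indeed once the normalization is pinned down the argument goes through exactly as the paper does it (via a uniform per-term bound $|q_j^i|/(n^{\ell-1}(\eps_n^{(1)})^{2p_1})\lesssim|\xi_j|/(\eps_n^{(k)})^d$ and the fact that at most $Cn(\eps_n^{(k)})^d$ terms are nonzero). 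So the proposal is correct in approach and essentially identical to the paper's; it only needs the exponent arithmetic corrected to the paper's weight scaling.
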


\begin{proof}
In the proof $C>0$ ($c >0$) will denote a constant that can be arbitrarily large (small), is independent of $n$, and that may change from line to line.

For notational convenience, we define $d_{n,i,\eps_n^{(r)}} =  \sum_{j=1}^n \l W_{n,\eps_n^{(r)}} \r_{ij}$. 
For $1 \leq r \leq q$, we let $E_r$ be the event where the graph $G_n$ satisfies the following inequalities 
\begin{itemize}
        \item there exists constants $C_1$ and $C_2$ such that 
        \begin{equation} \label{eq:matrixProduct:degreeBound}
        C_1 \leq n^{-1} d_{n,i,\eps_n^{(r)}} \leq C_2
        \end{equation}
        for all $1 \leq i \leq n$;
        \item $\#\{j \, \vert \, \l W_{n,\eps_n^{(r)}} \r_{ij} > 0 \} \leq C n \l \eps_n^{(r)} \r^d $ for $1 \leq i \leq n$.
    \end{itemize}  
Let $E = \cap_{r=1}^q E_r$ be the set of events such that the above inequalities hold for all $\leq r\leq q$.
By~\cite[Lemma 2.2]{trillos2022rates}, we know that $\bbP(E_r) \geq 1 - 2ne^{-c(r)n\l\eps_n^{(r)}\r^d}$. 
Hence, 
\[
\bbP\l E^c\r \leq 
\sum_{r=1}^q \bbP \l E_r^c \r \leq \sum_{r=1}^q 2n e^{-c(r)n\l\eps_n^{(r)}\r^d} \leq C n e^{-c n\l\eps_n^{(q)}\r^d}
\]
implying that \begin{equation} \notag 
    \bbP(E) \geq 1 - Cne^{-c n\l\eps_n^{(q)}\r^d}. 
\end{equation}

Now, let $G_n$ be a graph in the event $E$ and fix $1\leq i \leq n$. For $1 \leq j \leq n$, let 
\[
q_j^{i} = \frac{\tau \l W_{n,\eps_n^{(k)}} \r_{ij} \l d_{n,j,\eps_n^{(k)}} \r^{\ell-1} \xi_j }{ 1 + \tau \sum_{r=1}^{q} \lambda_r \l \frac{2}{n \l \eps_n^{(r)} \r^2} d_{n,j,\eps_n^{(r)}} \r^{p_r}  }
\]
and we note that 
\begin{equation} \label{eq:matrixProduct:qj}
    \tau\l W_{n,\eps_n^{(k)}}D_{n,\eps_n^{(k)}}^{\ell-1}\tilde{w}_n \r_i = \sum_{j=1}^n q_j.
\end{equation}
Now, $q_j^{i}$ are centered and independent random variables. Furthermore, we estimate as follows: \begin{align}
   \frac{1}{n^{\ell-1} \l \eps_n^{(1)} \r^{2 p_1}} \vert q_j^i \vert &= \tau\vert \xi_j \vert  \l W_{n,\eps_n^{(k)}} \r_{ij} \frac{\l d_{n,j,\eps_n^{(k)}} \r^{\ell-1}}{n^{\ell-1}} \frac{1}{\l \eps_n^{(1)} \r^{2 p_1}} \frac{1}{ 1 + \tau \sum_{r=1}^{q} \lambda_r \l \frac{2}{n \l \eps_n^{(r)} \r^2} d_{n,j,\eps_n^{(r)}} \r^{p_r}  } \notag \\
   &\leq \frac{C \vert \xi_j \vert}{\l \eps_n^{(k)} \r^d} \frac{1}{\l \eps_n^{(1)} \r^{2 p_1}} \frac{1}{ \sum_{r=1}^{q} \lambda_r \frac{1}{\l \eps_n^{(r)} \r^{2p_r}} } \label{eq:matrixProduct:estimate1} \\
   &\leq \frac{C \vert \xi_j \vert}{\l \eps_n^{(k)} \r^d} \label{eq:matrixProduct:estimate2}
\end{align}
where we used the fact that $\l W_{n,\eps_n^{(k)}} \r_{ij} \leq C \l \eps_n^{(k)} \r^{-d}$ and \eqref{eq:matrixProduct:degreeBound} for \eqref{eq:matrixProduct:estimate1} and the fact that $p_1 \leq \cdots \leq p_q$ and $\eps_n^{(1)} > \cdots > \eps_n^{(q)}$ for \eqref{eq:matrixProduct:estimate2}. This implies that $[n^{\ell-1} (\eps_n^{(1)})]^{-1}q_j^i$ are sub-Gaussian and satisfy the same inequalities in the Birnbaum-Orlicz norm as in \cite[Lemma 2.4]{trillos2022rates}. By applying the same Hoeffding inequality as in the latter, for any $t > 0$, we therefore obtain \[
\bbP\l \frac{1}{n^{\ell-1}\l \eps_n^{(1)} \r^{2 p_1}} \left| \sum_{j=1}^n q_j^i\right| > t  \spaceBar E \r \leq 2e^{-ct^2 \l \eps_n^{(k)} \r^{d}/n }.
\] 
We then choose $t = \lambda \sqrt{\frac{n\log(n)}{\l \eps_n^{(k)} \r^{d}}}$ so that, using \eqref{eq:matrixProduct:qj}, \begin{equation} \label{eq:matrixProduct:estimate3}
    \frac{\tau}{n^{\ell-1} \l \eps_n^{(1)} \r^{2 p_1}} \left| \l W_{n,\eps_n^{(k)}}D_{n,\eps_n^{(k)}}^{\ell-1}\tilde{w}_n \r_i \right| = \frac{\tau}{n^{\ell-1} \l \eps_n^{(1)} \r^{2 p_1} } \left| \sum_{j=1}^n q_j^i\right| \leq \lambda \sqrt{\frac{n\log(n)}{\l \eps_n^{(k)} \r^{d}}} 
\end{equation}
with probability at least $1-2n^{-c\lambda^2}$ conditioned on $E$. We pick $\lambda = \sqrt{\frac{\alpha +1}{c}}$ and, through an union bound, obtain that \eqref{eq:matrixProduct:estimate3} holds for all $1 \leq i \leq n$ with probability at least $1-2n^{1-c\lambda^2} = 1-2n^{-\alpha}$, conditioned on $E$. Starting from \eqref{eq:matrixProduct:estimate3}, we get
\begin{equation} \label{eq:matrixProduct:final}
   \left\Vert W_{n,\eps_n^{(k)}}D_{n,\eps_n^{(k)}}^{\ell-1}\tilde{w}_n \right\Vert_{\Lp{2}(\mu_n)} \leq  \left\Vert W_{n,\eps_n^{(k)}}D_{n,\eps_n^{(k)}}^{\ell-1}\tilde{w}_n \right\Vert_{\Lp{\infty}(\mu_n)} \leq \frac{C n^\ell\l \eps_n^{(1)} \r^{2p_1}}{\tau} \l \frac{\log(n)}{n\l \eps_n^{(k)} \r^d} \r^{1/2} 
\end{equation}
conditioned on $E$ with probability at least $ 1 - Cn^{-\alpha}$. Let $A$ be the event such that \eqref{eq:matrixProduct:matrixProduct} holds. By \eqref{eq:matrixProduct:final}, \[
\bbP(A) = \bbP(A \, \vert \, E) \bbP(E) + \bbP(A \, \vert \, E^c) \bbP(E^c) \geq (1 - Cn^{-\alpha}) \cdot \l 1 - Cne^{-c n\l\eps_n^{(q)}\r^d} \r
\]
and, to conclude, we can pick $C$ large enough so that $\bbP(A) \geq 1 - Cn^{-\alpha}$. 
\end{proof}

\begin{lemma}[Bounds on $\tilde{w}_n$] 
    Assume that \ref{ass:Main:Ass:S2}, \ref{ass:Main:Ass:M1}, \ref{ass:Main:Ass:M2}, \ref{ass:Main:Ass:D1} and \ref{ass:Main:Ass:W3} hold. Furthermore, let $\rho \in \Ck{\infty}$ and assume that $W_{ii} = 0$.  Let $q \geq 1$, $\{\lambda_k\}_{k=1}^q$ be a sequence of positive numbers, $P = \{p_k\}_{k=1}^q \subseteq \bbN$ with $1 \leq p_1 \leq \cdots \leq p_q$ and $ E_n = \{\eps_n^{(k)}\}_{k=1}^q$ with $\eps_n^{(1)} > \cdots > \eps_n^{(q)} >0 $. Let $\xi_i$ be iid, mean zero, sub-Gaussian random variables and $\tilde{w}_n$ be defined in \eqref{eq:proofs:wnTilde}. Then, for all $\alpha > 1$, there exists $\eps_0 > 0$ and $C > 0$ such that for all $E_n$ satisfying  \[
    \eps_0 \geq \eps_n^{(1)} \geq \cdots \geq \eps_n^{(q)} \geq C \l \frac{\log(n)}{n} \r^{1/d},
    \]
    and $\tau > 0$, the following holds with probability $1-Cn^{-\alpha}$: \begin{enumerate}
        \item \begin{equation} \label{eq:differenceWn:boundGradient}
        \left\Vert \frac{1}{2} \nabla \cR_{n,\tau}^{(\bm{\xi}_n)}(\tilde{w}_n) \right\Vert_{\Lp{2}(\mu_n)} \leq C \sum_{k=1}^q \lambda_k \frac{\l \eps_n^{(1)} \r^{2p_1}}{\l\eps_n^{(k)}\r^{2p_k}} \l \frac{\log(n)}{n\l \eps_n^{(k)} \r^d} \r^{1/2};
        \end{equation}
        \item \begin{equation} \label{eq:differenceWn:boundWnTilde}
         \Vert \tilde{w}_n \Vert_{\Lp{2}(\mu_n)} \leq \frac{C}{\tau} \l \eps_n^{(1)} \r^{2p_1}.
        \end{equation}
    \end{enumerate}
\end{lemma}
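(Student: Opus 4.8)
The engine of the proof is the observation that the matrix $\tilde\Delta_{n,\eps}:=\frac{2}{n\eps^{2}}D_{n,\eps}$ appearing in~\eqref{eq:proofs:wnTilde} is \emph{diagonal} (being a multiple of the degree matrix) and is precisely the diagonal part of $\Delta_{n,\eps}$. Hence $\tilde M_n:=\Id+\tau\sum_{k=1}^q\lambda_k\tilde\Delta_{n,\eps_n^{(k)}}^{p_k}$ is diagonal with diagonal entries $\ge 1$, and $\tilde w_n=\tilde M_n^{-1}\bm{\xi}_n$ has entries $(\tilde w_n)_i=\xi_i\big/\big(1+\tau\sum_{k=1}^q\lambda_k\big(\frac{2}{n(\eps_n^{(k)})^2}d_{n,i,\eps_n^{(k)}}\big)^{p_k}\big)$, where $\bm{\xi}_n=(\xi_1,\dots,\xi_n)$ and $d_{n,i,\eps}=(D_{n,\eps})_{ii}$. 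Throughout I would work on the intersection $\Omega^\star$ of: the degree-concentration event $E$ from the proof of Lemma~\ref{lem:proofs:matrixProduct} (on which $C_1 n\le d_{n,i,\eps_n^{(k)}}\le C_2 n$ for all $i,k$); the events on which~\eqref{eq:matrixProduct:matrixProduct} holds for every $k\in\{1,\dots,q\}$ and every $\ell\in\{1,\dots,p_k\}$; and the event $\{\frac1n\sum_{i=1}^n\xi_i^2\le C\}$, which holds with probability $1-Cn^{-\alpha}$ by sub-exponential concentration of the $\xi_i^2$. Using the hypothesis $\eps_n^{(q)}\ge C(\log n/n)^{1/d}$ (which makes $\bbP(E^c)\le Cne^{-cn(\eps_n^{(q)})^d}\le Cn^{-\alpha}$) and a union bound over the finitely many events, $\bbP(\Omega^\star)\ge 1-Cn^{-\alpha}$; the constant $\eps_0$ enters only to keep the various $\eps$-dependent constants uniform.

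\emph{Bound~\eqref{eq:differenceWn:boundWnTilde}.} On $\Omega^\star$ the denominator of $(\tilde w_n)_i$ is at least $\tau\lambda_1\big(\frac{2}{n(\eps_n^{(1)})^2}d_{n,i,\eps_n^{(1)}}\big)^{p_1}\ge c\,\tau\lambda_1(\eps_n^{(1)})^{-2p_1}$ — keeping only the $k=1$ term and using $d_{n,i,\eps_n^{(1)}}\ge C_1 n$. Hence $|(\tilde w_n)_i|\le C(\eps_n^{(1)})^{2p_1}|\xi_i|/\tau$, so
\[
\|\tilde w_n\|_{\Lp{2}(\mu_n)}^2=\frac1n\sum_{i=1}^n|(\tilde w_n)_i|^2\le \frac{C(\eps_n^{(1)})^{4p_1}}{\tau^2}\cdot\frac1n\sum_{i=1}^n\xi_i^2\le \frac{C(\eps_n^{(1)})^{4p_1}}{\tau^2},
\]
which is~\eqref{eq:differenceWn:boundWnTilde}.

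\emph{Bound~\eqref{eq:differenceWn:boundGradient}.} Since $\cR_{n,\tau}^{(\bm{\xi}_n)}$ is quadratic, its $\Lp{2}(\mu_n)$-gradient is $\frac12\nabla\cR_{n,\tau}^{(\bm{\xi}_n)}(v)=(v-\bm{\xi}_n)+\tau\sum_{k=1}^q\lambda_k\Delta_{n,\eps_n^{(k)}}^{p_k}v$. Evaluating at $v=\tilde w_n$ and using the defining relation $\tilde M_n\tilde w_n=\bm{\xi}_n$, i.e. $\tilde w_n-\bm{\xi}_n=-\tau\sum_{k=1}^q\lambda_k\tilde\Delta_{n,\eps_n^{(k)}}^{p_k}\tilde w_n$, the data term cancels the diagonal part of the regularizer and leaves
\[
\frac12\nabla\cR_{n,\tau}^{(\bm{\xi}_n)}(\tilde w_n)=\tau\sum_{k=1}^q\lambda_k\big(\Delta_{n,\eps_n^{(k)}}^{p_k}-\tilde\Delta_{n,\eps_n^{(k)}}^{p_k}\big)\tilde w_n.
\]
Writing $\Delta_{n,\eps}=c_{n,\eps}(D_{n,\eps}-W_{n,\eps})$ and $\tilde\Delta_{n,\eps}=c_{n,\eps}D_{n,\eps}$ for the shared prefactor $c_{n,\eps}$, the non-commutative telescoping identity $A^p-B^p=\sum_{j=0}^{p-1}A^j(A-B)B^{p-1-j}$ with $A=D_{n,\eps}-W_{n,\eps}$, $B=D_{n,\eps}$ yields
\[
\Delta_{n,\eps}^{p}-\tilde\Delta_{n,\eps}^{p}=-c_{n,\eps}^{p}\sum_{j=0}^{p-1}(D_{n,\eps}-W_{n,\eps})^{j}\,W_{n,\eps}\,D_{n,\eps}^{p-1-j}.
\]
Applied to $\tilde w_n$, each summand ends in $W_{n,\eps_n^{(k)}}D_{n,\eps_n^{(k)}}^{p_k-1-j}\tilde w_n$, which is exactly the quantity controlled by Lemma~\ref{lem:proofs:matrixProduct} with $\ell=p_k-j$, so its $\Lp{2}(\mu_n)$-norm is $\le Cn^{p_k-j}(\eps_n^{(1)})^{2p_1}\tau^{-1}\big(\log n/(n(\eps_n^{(k)})^d)\big)^{1/2}$. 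The leftover factor $(D_{n,\eps_n^{(k)}}-W_{n,\eps_n^{(k)}})^{j}$ I would discard by a crude operator-norm bound: on $\Omega^\star$ the symmetric matrix $D_{n,\eps}-W_{n,\eps}$ has row sums $2d_{n,i,\eps}\le Cn$, whence $\|(D_{n,\eps}-W_{n,\eps})^{j}\|_{\Lp{2}(\mu_n)\to\Lp{2}(\mu_n)}\le(Cn)^{j}$. Multiplying the two estimates, the $n$-powers combine to $n^{p_k}$ and cancel those inside $c_{n,\eps_n^{(k)}}^{p_k}$, whose residual $\eps$-dependence $(\eps_n^{(k)})^{-2p_k}$ produces exactly the factor $(\eps_n^{(k)})^{-2p_k}$ of~\eqref{eq:differenceWn:boundGradient}; summing the $p_k$ terms and then over $k$ with weights $\tau\lambda_k$, the $\tau$'s cancel and one lands on the right-hand side of~\eqref{eq:differenceWn:boundGradient}.

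The one genuinely load-bearing step is the reduction just described: one must (i) recognise that at the special point $\tilde w_n$ the gradient collapses onto the \emph{difference} of Laplacian powers $\Delta^{p_k}-\tilde\Delta^{p_k}$ — this is what ultimately produces the small factor $\big(\log n/(n(\eps_n^{(k)})^d)\big)^{1/2}$, hence genuine decay, rather than a mere $O(1)$ bound — and (ii) telescope in the order $A^{j}(A-B)B^{p-1-j}$ so that the single copy of $W_{n,\eps}$ abuts a power of the \emph{diagonal} matrix $D_{n,\eps}$ acting on $\tilde w_n$, matching the exact structure $W_{n,\eps}D_{n,\eps}^{\ell-1}\tilde w_n$ that Lemma~\ref{lem:proofs:matrixProduct} estimates, while the residual $(D_{n,\eps}-W_{n,\eps})^{j}$ stays cheap. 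Everything else is bookkeeping of powers of $n$ and of $\eps_n^{(k)}$, plus the routine sub-exponential concentration used in~\eqref{eq:differenceWn:boundWnTilde}.
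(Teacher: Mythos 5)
Your proof is correct and follows essentially the same route as the paper: isolate the gradient $\tfrac12\nabla\cR_{n,\tau}^{(\bm\xi_n)}(\tilde w_n)$ as $\tau\sum_k\lambda_k c_{n,\eps_n^{(k)}}^{p_k}\bigl[(D-W)^{p_k}-D^{p_k}\bigr]\tilde w_n$, expand the bracket so each summand presents the form $(\cdot)\,W_{n,\eps_n^{(k)}}D_{n,\eps_n^{(k)}}^{\ell-1}\tilde w_n$, control the left factor by operator norm $\le (Cn)^{\cdot}$, and invoke Lemma~\ref{lem:proofs:matrixProduct} for the right factor; then for~\eqref{eq:differenceWn:boundWnTilde} lower-bound the (diagonal) denominator by its $k=1$ term and use sub-exponential concentration of $\tfrac1n\sum\xi_i^2$.

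The one genuine (if modest) difference is how you decompose $(D-W)^{p_k}-D^{p_k}$. The paper expands $(D-W)^{p_k}$ over all $\chi\in\{0,1\}^{p_k}$ with $\chi\neq(1,\dots,1)$ and, for each of the $2^{p_k}-1$ words, factors at the rightmost occurrence of $W$ to produce $T_{-r_\chi}\,(-W)\,D^{r_\chi-1}$. You instead use the telescoping identity $A^{p}-B^{p}=\sum_{j=0}^{p-1}A^{j}(A-B)B^{p-1-j}$ with $A=D-W$, $B=D$, which yields the $p_k$-term sum $-\sum_{j}(D-W)^{j}\,W\,D^{p_k-1-j}$. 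These are algebraically the same thing: summing the paper's terms over fixed $r_\chi=r$ reconstitutes $(D-W)^{p_k-r}(-W)D^{r-1}$, i.e.\ exactly your $j=p_k-r$ summand. Your version is tidier — $p_k$ terms instead of $2^{p_k}-1$, and no need to introduce the "first-$W$-from-the-right" index $r_\chi$ — while the rest of the bookkeeping (operator-norm bound $(Cn)^{j}$ for the prefix, Lemma~\ref{lem:proofs:matrixProduct} with $\ell=p_k-j$ for the tail, cancellation of $n^{p_k}$ against $c_{n,\eps_n^{(k)}}^{p_k}$, and the union bound over $k$, $\ell$, $E$, and the $\xi$-event) is identical in both proofs. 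The constant simply becomes $Cp_k$ rather than $C(2^{p_k}-1)$, which is immaterial.

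One small caution: you should spell out that a union bound is taken over the finitely many $(k,\ell)$ pairs ($q\cdot\max_k p_k$ of them) when invoking Lemma~\ref{lem:proofs:matrixProduct}, since that lemma is stated for a single pair; this is harmless as $q$ and the $p_k$ are fixed, but it belongs in the probability accounting rather than being left implicit.
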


\begin{proof}
    In the proof $C>0$ will denote a constant that can be arbitrarily large, is independent of $n$, and that may change from line to line. Let $\Vert \cdot \Vert_{\mathrm{op}}$ denote the operator norm.

We start by noting that \begin{equation} \label{eq:differenceWn:gradientEnergy}
    \frac{1}{2} \nabla \cR_{n,\tau}^{(\mathbf{a}_n)}(v_n) = v_n - \mathbf{a}_n + \tau \sum_{k=1}^q \lambda_k \Delta_{n,\eps_n^{(k)}}^{p_k} v_n = \l \Id + \tau \sum_{k=1}^q \lambda_k \Delta_{n,\eps_n^{(k)}}^{p_k} \r v_n - \mathbf{a}_n.
\end{equation}
In particular, this implies that, with probability at least $1 - Cn^{-\alpha}$ (see below), we can estimate as follows: 
\begin{align}
    &\left\Vert \frac{1}{2} \nabla \cR_{n,\tau}^{(\bm{\xi}_n)}(\tilde{w}_n) \right\Vert_{\Lp{2}(\mu_n)} = \left\Vert \l \Id + \tau \sum_{k=1}^q \lambda_k \Delta_{n,\eps_n^{(k)}}^{p_k} \r  \tilde{w}_n - \bm{\xi}_n \right\Vert_{\Lp{2}(\mu_n)}\label{eq:differenceWn:estimate1} \\
    &= \left\Vert \l \Id + \tau \sum_{k=1}^q \lambda_k \Delta_{n,\eps_n^{(k)}}^{p_k} \r  \tilde{w}_n - \l \Id + \tau \sum_{k=1}^q \lambda_k \l \frac{2}{n\l\eps_n^{(k)}\r^2} D_{n,\eps_n^{(k)}}\r^{p_k} \r \tilde{w}_n \right\Vert_{\Lp{2}(\mu_n)} \label{eq:differenceWn:estimate2} \\
    &\leq C\tau \sum_{k=1}^q  \frac{\lambda_k}{n^{p_k}\l\eps_n^{(k)}\r^{2p_k}} \left\Vert \left[ \l D_{n,\eps_n^{(k)}} - W_{n,\eps_n^{(k)}} \r^{p_k} - D_{n,\eps_n^{(k)}}^{p_k} \right] \tilde{w}_n \right\Vert_{\Lp{2}(\mu_n)} \notag \\
    &= C\tau \sum_{k=1}^q  \frac{\lambda_k}{n^{p_k}\l\eps_n^{(k)}\r^{2p_k}} \left\Vert \left[ \l \sum_{\chi \in \{0,1\}^{p_k}} \prod^{{p_k}}_{i=1}  D_{n,\eps_n^{(k)}}^{\chi_i} (- W_{n,\eps_n^{(k)}})^{1-\chi_i} \r - D_{n,\eps_n^{(k)}}^{p_k} \right] \tilde{w}_n \right\Vert_{\Lp{2}(\mu_n)} \label{eq:differenceWn:estimateProduct}
\end{align}
where we used \eqref{eq:differenceWn:gradientEnergy} in \eqref{eq:differenceWn:estimate1}, \eqref{eq:proofs:wnTilde} in \eqref{eq:differenceWn:estimate2}, and the expansion
\[
(D_{n,\eps_n^{(k)}} - W_{n,\eps_n^{(k)}})^{p_k}
= \sum_{\chi \in \{0,1\}^{p_k}}
\prod_{i=1}^{p_k} D_{n,\eps_n^{(k)}}^{\chi_i}
(-W_{n,\eps_n^{(k)}})^{1-\chi_i}
\]
for \eqref{eq:differenceWn:estimateProduct}. Subtracting the term $D_{n,\eps_n^{(k)}}^{p_k}$ from $\l \sum_{\chi \in \{0,1\}^{p_k}} \prod^{{p_k}}_{i=1}  D_{n,\eps_n^{(k)}}^{\chi_i} (- W_{n,\eps_n^{(k)}})^{1-\chi_i} \r$ removes the summand associated with
$\chi = (1,1,\dots,1)$, so that every remaining product in the sum contains at least
one factor of $W_{n,\eps_n^{(k)}}$ and 
\[
\Big(\sum_{\chi\in\{0,1\}^{p_k}}
\prod_{i=1}^{p_k} D_{n,\eps_n^{(k)}}^{\chi_i}
(-W_{n,\eps_n^{(k)}})^{1-\chi_i}\Big)- D_{n,\eps_n^{(k)}}^{p_k}= \sum_{\substack{\chi\in\{0,1\}^{p_k}\\ \chi\neq(1,\dots,1)}}
\prod_{i=1}^{p_k} D_{n,\eps_n^{(k)}}^{\chi_i}
(-W_{n,\eps_n^{(k)}})^{1-\chi_i}
\]
For any fixed $\chi \in \{0,1\}^{p_k}$ with $\chi \neq (1,1,\dots,1)$, let $r_\chi$ denote the index of the first occurrence of
$W_{n,\eps_n^{(k)}}$ when reading the product from right to left (the index exists since all terms with $\chi \neq (1,1,\dots,1)$ contain at least one factor of $W_{n,\eps_n^{(k)}}$). We can then
factor the product as
\[
\prod_{i=1}^{p_k} D_{n,\eps_n^{(k)}}^{\chi_i}
(-W_{n,\eps_n^{(k)}})^{1-\chi_i}
=
\underbrace{\l
\prod_{i=1}^{p_k - r_\chi}
D_{n,\eps_n^{(k)}}^{\chi_i}
(-W_{n,\eps_n^{(k)}})^{1-\chi_i}
\r}_{=:T_{-r_\chi}}
(-W_{n,\eps_n^{(k)}})
D_{n,\eps_n^{(k)}}^{r_{\chi}-1}.
\]
The term $T_{-r_\chi}$ contains $p_k - r_\chi$ factors, each equal to
either $D_{n,\eps_n^{(k)}}$ or $W_{n,\eps_n^{(k)}}$.  Using the operator-norm bounds
from \cite[Lemma 2.3]{trillos2022rates}, we have
$\Vert T_{-r_\chi} \Vert_{\mathrm{op}}\le (C n)^{p_k-r_\chi}$. This implies that \begin{align}
    &\left\Vert \left[ \l \sum_{\chi \in \{0,1\}^{p_k}} \prod^{{p_k}}_{i=1}  D_{n,\eps_n^{(k)}}^{\chi_i} (- W_{n,\eps_n^{(k)}})^{1-\chi_i} \r - D_{n,\eps_n^{(k)}}^{p_k} \right]\tilde{w}_n \right\Vert_{\Lp{2}(\mu_n)} \notag \\
    &\qquad \qquad = \left\Vert \ls \sum_{\substack{\chi\in\{0,1\}^{p_k}\\ \chi\neq(1,\dots,1)}}
\prod_{i=1}^{p_k} D_{n,\eps_n^{(k)}}^{\chi_i}
(-W_{n,\eps_n^{(k)}})^{1-\chi_i}  \rs \tilde{w}_n \right\Vert_{\Lp{2}(\mu_n)} \notag \\ 
    &\qquad \qquad \leq   \sum_{\substack{\chi\in\{0,1\}^{p_k}\\ \chi\neq(1,\dots,1)}}
\left\Vert \prod_{i=1}^{p_k} D_{n,\eps_n^{(k)}}^{\chi_i}
(-W_{n,\eps_n^{(k)}})^{1-\chi_i}  \tilde{w}_n \right\Vert_{\Lp{2}(\mu_n)} \notag \\ 
&\qquad \qquad\leq \sum_{\substack{\chi\in\{0,1\}^{p_k}\\ \chi\neq(1,\dots,1)}} \Vert T_{-r_\chi} \Vert_{\mathrm{op}}
\left\Vert W_{n,\eps_n^{(k)}} D_{n,\eps_n^{(k)}}^{r_\chi-1} \tilde{w}_n \right\Vert_{\Lp{2}(\mu_n)} \notag \\ 
&\qquad \qquad\leq C  \sum_{\substack{\chi\in\{0,1\}^{p_k}\\ \chi\neq(1,\dots,1)}} n^{p_k-r_\chi} \frac{n^{r_\chi}\l \eps_n^{(1)} \r^{2p_1}}{\tau} \l \frac{\log(n)}{n\l \eps_n^{(k)} \r^d} \r^{1/2} \label{eq:differenceWn:estimateProduct2} \\
&\qquad \qquad= C (2^{p_k}-1) \frac{n^{p_k}\l \eps_n^{(1)} \r^{2p_1}}{\tau} \l \frac{\log(n)}{n\l \eps_n^{(k)} \r^d} \r^{1/2} \label{eq:differenceWn:estimateProduct3}
\end{align} 
where we used Lemma \ref{lem:proofs:matrixProduct} for \eqref{eq:differenceWn:estimateProduct2}. Inserting \eqref{eq:differenceWn:estimateProduct3} into \eqref{eq:differenceWn:estimateProduct}, we obtain \begin{align}
    \left\Vert \frac{1}{2} \nabla \cR_{n,\tau}^{(\bm{\xi}_n)}(\tilde{w}_n) \right\Vert_{\Lp{2}(\mu_n)} &\leq C\tau \sum_{k=1}^q  \frac{\lambda_k}{n^{p_k}\l\eps_n^{(k)}\r^{2p_k}} \frac{n^{p_k}\l \eps_n^{(1)} \r^{2p_1}}{\tau} \l \frac{\log(n)}{n\l \eps_n^{(k)} \r^d} \r^{1/2} \notag \\
    &= C \sum_{k=1}^q  \frac{\lambda_k\l \eps_n^{(1)} \r^{2p_1}}{\l\eps_n^{(k)}\r^{2p_k}} \l \frac{\log(n)}{n\l \eps_n^{(k)} \r^d} \r^{1/2}.
\end{align}

For the second claim of the lemma, let us start by assuming that $G_n$ is a graph in the event $E$ from the proof of Lemma \ref{lem:proofs:matrixProduct}. Then, 
\begin{align}
    \Vert \tilde{w}_n \Vert_{\Lp{2}(\mu_n)}^2 &= \frac{1}{n} \sum_{i=1}^n \frac{\xi_i^2}{\l 1 + \tau \sum_{r=1}^{q} \lambda_r \l \frac{2}{n \l \eps_n^{(r)} \r^2} d_{n,i,\eps_n^{(r)}} \r^{p_r} \r^2} \notag \\
    &\leq \frac{C}{n}  \sum_{i=1}^n \frac{\xi_i^2}{\l \tau \sum_{r=1}^{q} \frac{\lambda_r}{\l \eps_n^{(r)} \r^{2p_r}} \r^2} \label{eq:differenceWn:estimate5} \\
    &\leq \frac{C}{n\tau^2} \l \eps_n^{(1)} \r^{4p_1} \sum_{i=1}^n \xi_i^2 \label{eq:differenceWn:estimate6}
\end{align}
where we used the fact that there exists $C_1 \leq n^{-1} d_{n,i,\eps_n^{(r)}} \leq C_2$ for all $1 \leq i \leq n$ and $1 \leq r \leq q$ for \eqref{eq:differenceWn:estimate5} and the fact that $p_1 \leq \cdots \leq p_q$ and $\eps_n^{(1)} > \cdots > \eps_n^{(q)}$ for \eqref{eq:differenceWn:estimate6}. Let $A$ be the event such that \eqref{eq:differenceWn:boundWnTilde} holds. Then, arguing as in \cite[Lemma 2.6]{Trillos3}, we can show that $\bbP(A \, \vert \, E) \geq 1 - Cn^{-\alpha}$. Analogously to the proof of Lemma \ref{lem:proofs:matrixProduct}, we conclude that $\bbP(A) \geq 1 - Cn^{-\alpha}$. 
\end{proof}

\begin{proposition}[Rates between discrete noisy and noiseless minimizers] \label{prop:ratesDiscreteDiscrete}
     Assume that \ref{ass:Main:Ass:S2}, \ref{ass:Main:Ass:M1}, \ref{ass:Main:Ass:M2}, \ref{ass:Main:Ass:D1} and \ref{ass:Main:Ass:W3} hold. Furthermore, let $\rho \in \Ck{\infty}$ and assume that $W_{ii} = 0$.  Let $q \geq 1$, $\{\lambda_k\}_{k=1}^q$ be a sequence of positive numbers, $P = \{p_k\}_{k=1}^q \subseteq \bbN$ with $1 \leq p_1 \leq \cdots \leq p_q$ and $ E_n = \{\eps_n^{(k)}\}_{k=1}^q$ with $\eps_n^{(1)} > \cdots > \eps_n^{(q)} >0 $. Let $\xi_i$ be iid, mean zero, sub-Gaussian random variables, $g \in \Ck{\infty}$, $\mathbf{y}_n = \{y_i\}_{i=1}^n$ with $y_i = g(x_i) + \xi_i$ and $\mathbf{g}_n = \{g(x_i)\}_{i=1}^n$. Then, for all $\alpha > 1$, there exists $\eps_0 > 0$ and $C > 0$ such that for all $E_n$ satisfying  \[
    \eps_0 \geq \eps_n^{(1)} \geq \cdots \geq \eps_n^{(q)} \geq C \l \frac{\log(n)}{n} \r^{1/d},
    \]
    and $\tau > 0$, the following holds with probability $1-Cn^{-\alpha}$:
    \[
    \Vert u_{n,\tau}^{(\mathbf{y}_n)} - u_{n,\tau}^{(\mathbf{g}_n)} \Vert_{\Lp{2}(\mu_n)} \leq C \l \sum_{k=1}^q \lambda_k \frac{\l \eps_n^{(1)} \r^{2p_1}}{\l\eps_n^{(k)}\r^{2p_k}} \l \frac{\log(n)}{n\l \eps_n^{(k)} \r^d} \r^{1/2}  + \frac{\l \eps_n^{(1)} \r^{2p_1}}{\tau} \r
    \]
    where $u_{n,\tau}^{(\mathbf{y}_n)}$ and $u_{n,\tau}^{(\mathbf{g}_n)}$ are the minimizers of $\cR_{n,\tau}^{(\mathbf{y}_n)}$ and $\cR_{n,\tau}^{(\mathbf{g}_n)}$ respectively.
\end{proposition}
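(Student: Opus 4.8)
The plan is to reduce the claim to a bound on $\Vert w_n \Vert_{\Lp{2}(\mu_n)}$ for the vector $w_n$ introduced before Lemma~\ref{lem:proofs:matrixProduct}, and then to invoke the two preceding lemmas directly. Both $\cR_{n,\tau}^{(\mathbf{y}_n)}$ and $\cR_{n,\tau}^{(\mathbf{g}_n)}$ are strictly convex quadratic functionals on $\Lp{2}(\mu_n)$: the fidelity term is strictly convex, while $\sum_{k=1}^q \lambda_k \Delta_{n,\eps_n^{(k)}}^{p_k}$ is self-adjoint on $\Lp{2}(\mu_n)$ (the weight matrices $W_{n,\eps_n^{(k)}}$ are symmetric) and positive semi-definite (it is a nonnegative combination of integer powers of the symmetric positive semi-definite matrices $\Delta_{n,\eps_n^{(k)}}$). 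Hence, by \eqref{eq:differenceWn:gradientEnergy}, the unique minimizers satisfy
\[
\l \Id + \tau \sum_{k=1}^q \lambda_k \Delta_{n,\eps_n^{(k)}}^{p_k} \r u_{n,\tau}^{(\mathbf{a}_n)} = \mathbf{a}_n, \qquad \mathbf{a}_n \in \{\mathbf{y}_n,\mathbf{g}_n\}.
\]
The operator on the left is invertible with spectrum contained in $[1,\infty)$, so subtracting the two identities and using $\mathbf{y}_n - \mathbf{g}_n = \bm{\xi}_n$ gives $u_{n,\tau}^{(\mathbf{y}_n)} - u_{n,\tau}^{(\mathbf{g}_n)} = w_n$. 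It thus suffices to bound $\Vert w_n \Vert_{\Lp{2}(\mu_n)}$.

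For the second step I would compare $w_n$ with the diagonal surrogate $\tilde{w}_n$ of \eqref{eq:proofs:wnTilde}. By definition of $\tilde{w}_n$ one has $\l \Id + \tau \sum_{k=1}^q \lambda_k \l \frac{2}{n(\eps_n^{(k)})^2} D_{n,\eps_n^{(k)}} \r^{p_k} \r \tilde{w}_n = \bm{\xi}_n$; combining this with $\l \Id + \tau \sum_k \lambda_k \Delta_{n,\eps_n^{(k)}}^{p_k} \r w_n = \bm{\xi}_n$, and adding and subtracting $\l \Id + \tau \sum_k \lambda_k \Delta_{n,\eps_n^{(k)}}^{p_k} \r \tilde{w}_n$, the difference is recognized via \eqref{eq:differenceWn:gradientEnergy} with $\mathbf{a}_n = \bm{\xi}_n$, yielding
\[
\l \Id + \tau \sum_{k=1}^q \lambda_k \Delta_{n,\eps_n^{(k)}}^{p_k} \r (w_n - \tilde{w}_n) = - \frac{1}{2} \nabla \cR_{n,\tau}^{(\bm{\xi}_n)}(\tilde{w}_n).
\]
Since $\Id + \tau \sum_k \lambda_k \Delta_{n,\eps_n^{(k)}}^{p_k}$ is self-adjoint on $\Lp{2}(\mu_n)$ with all eigenvalues at least $1$, its inverse is a contraction there, so $\Vert w_n - \tilde{w}_n \Vert_{\Lp{2}(\mu_n)} \leq \left\Vert \frac{1}{2} \nabla \cR_{n,\tau}^{(\bm{\xi}_n)}(\tilde{w}_n) \right\Vert_{\Lp{2}(\mu_n)}$, which is controlled by \eqref{eq:differenceWn:boundGradient}. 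A triangle inequality then gives
\[
\Vert w_n \Vert_{\Lp{2}(\mu_n)} \leq \Vert w_n - \tilde{w}_n \Vert_{\Lp{2}(\mu_n)} + \Vert \tilde{w}_n \Vert_{\Lp{2}(\mu_n)} \leq C \sum_{k=1}^q \lambda_k \frac{\l \eps_n^{(1)} \r^{2p_1}}{\l \eps_n^{(k)} \r^{2p_k}} \l \frac{\log(n)}{n \l \eps_n^{(k)} \r^d} \r^{1/2} + \frac{C}{\tau} \l \eps_n^{(1)} \r^{2p_1},
\]
using \eqref{eq:differenceWn:boundGradient}--\eqref{eq:differenceWn:boundWnTilde}, which is the asserted estimate up to enlarging $C$. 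Both of those bounds hold simultaneously with probability $1 - Cn^{-\alpha}$ under the stated hypotheses (the condition $\eps_0 \geq \eps_n^{(1)} \geq \cdots \geq \eps_n^{(q)} \geq C(\log(n)/n)^{1/d}$ is exactly the one required there), with $\eps_0$ and $C$ inherited from the previous lemma, so the same probability bound carries over to the conclusion.

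Essentially all of the analytic difficulty is already discharged in the two preceding lemmas — the coordinatewise concentration of the reweighted noise vector $\tilde{w}_n$, and the telescoping/operator-norm control of $\l D_{n,\eps_n^{(k)}} - W_{n,\eps_n^{(k)}} \r^{p_k} - D_{n,\eps_n^{(k)}}^{p_k}$ afforded by Lemma~\ref{lem:proofs:matrixProduct}. The proposition itself is their assembly; the only points requiring a moment's care are the passage from the variational problems to the resolvent identity $u_{n,\tau}^{(\mathbf{y}_n)} - u_{n,\tau}^{(\mathbf{g}_n)} = w_n$ and the self-adjoint-contraction property of $\l \Id + \tau \sum_k \lambda_k \Delta_{n,\eps_n^{(k)}}^{p_k} \r^{-1}$, so I do not anticipate a genuinely new obstacle at this stage.
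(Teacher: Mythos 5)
Your proof is correct and follows essentially the same route as the paper: establish $u_{n,\tau}^{(\mathbf{y}_n)} - u_{n,\tau}^{(\mathbf{g}_n)} = w_n$ via the first-order conditions, bound $\Vert w_n - \tilde{w}_n \Vert_{\Lp{2}(\mu_n)}$ by $\Vert \tfrac12 \nabla \cR_{n,\tau}^{(\bm{\xi}_n)}(\tilde{w}_n) \Vert_{\Lp{2}(\mu_n)}$, and conclude via the triangle inequality and the two bounds from the preceding lemma. The only cosmetic difference is that you phrase the intermediate step as a resolvent-contraction bound on $(\Id + \tau\sum_k\lambda_k\Delta_{n,\eps_n^{(k)}}^{p_k})^{-1}$, whereas the paper derives the equivalent inequality $\Vert v^{(1)} - v^{(2)}\Vert_{\Lp{2}(\mu_n)} \leq \tfrac12\Vert\nabla\cR_{n,\tau}^{(\mathbf{a}_n)}(v^{(1)}) - \nabla\cR_{n,\tau}^{(\mathbf{a}_n)}(v^{(2)})\Vert_{\Lp{2}(\mu_n)}$ by Cauchy--Schwarz and positive semi-definiteness, then applies it with $\nabla\cR_{n,\tau}^{(\bm{\xi}_n)}(w_n)=0$; these are two ways of stating the same operator fact.
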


\begin{proof}
In the proof $C>0$ will denote a constant that can be arbitrarily large, is independent of $n$, and that may change from line to line.

For $v_n^{(1)}, v_n^{(2)}:\Omega \mapsto \bbR$, we start by estimating as follows using \eqref{eq:differenceWn:gradientEnergy}: \begin{align*}
    &\left\langle \frac{1}{2} \nabla \cR_{n,\tau}^{(\mathbf{a}_n)}(v^{(1)}_n) - \frac{1}{2} \nabla \cR_{n,\tau}^{(\mathbf{a}_n)}(v_n^{(2)}) , v_n^{(1)} - v_n^{(2)} \right\rangle_{\Lp{2}(\mu_n)} = \Vert v_n^{(1)} - v_n^{(2)} \Vert_{\Lp{2}(\mu_n)}^2 \\
    &\qquad \qquad \qquad \qquad \qquad \qquad \qquad + \tau \sum_{k=1}^q \lambda_k \left \langle \Delta_{n,\eps_n^{(k)}}^{p_k} \l v_n^{(1)} - v_n^{(2)} \r ,v_n^{(1)} - v_n^{(2)} \right\rangle_{\Lp{2}(\mu_n)}.
\end{align*}
Since $\Delta_{n,\eps_n^{(k)}}^{p_k}$ is positive semi-definite, using the Cauchy-Schwarz inequality, we can conclude that \begin{equation}\notag
 \Vert v_n^{(1)} - v_n^{(2)} \Vert_{\Lp{2}(\mu_n)} \leq \frac{1}{2} \Vert \nabla \cR_{n,\tau}^{(\mathbf{a}_n)}(v^{(1)}_n) - \nabla \cR_{n,\tau}^{(\mathbf{a}_n)}(v^{(2)}_n)\Vert_{\Lp{2}(\mu_n)}.
\end{equation}
Furthermore, by first order optimality and \eqref{eq:differenceWn:gradientEnergy}, we have that  
\[
u_{n,\tau}^{(\mathbf{y}_n)} - \mathbf{y}_n + \tau \sum_{k=1}^q \lambda_k \Delta_{n,\eps_n^{(k)}}^{p_k} u_{n,\tau}^{(\mathbf{y}_n)}  = 0
\]
and 
\[
u_{n,\tau}^{(\mathbf{g}_n)} - \mathbf{g}_n + \tau \sum_{k=1}^q \lambda_k \Delta_{n,\eps_n^{(k)}}^{p_k} u_{n,\tau}^{(\mathbf{g}_n)}  = 0
\]
implying that \begin{equation} \label{eq:ratesDiscrete:optimalityCondition}
u_{n,\tau}^{(\mathbf{y}_n)} - u_{n,\tau}^{(\mathbf{g}_n)}  + \tau \sum_{k=1}^q \lambda_k \Delta_{n,\eps_n^{(k)}}^{p_k} \l u_{n,\tau}^{(\mathbf{y}_n)} - u_{n,\tau}^{(\mathbf{g}_n)} \r = \bm{\xi}_n
\end{equation}
or equivalently 
\begin{equation} \label{eq:ratesDiscrete:equality}
u_{n,\tau}^{(\mathbf{y}_n)} - u_{n,\tau}^{(\mathbf{g}_n)} = \l \Id + \tau \sum_{k=1}^q \lambda_k   \Delta_{n,\eps_n^{(k)}}^{p_k} \r^{-1} \bm{\xi}_n = w_n.
\end{equation}

We can now estimate as follows, with probability $1 - Cn^{-\alpha}$ (see below): \begin{align}
    \Vert u_{n,\tau}^{(\mathbf{y}_n)} - u_{n,\tau}^{(\mathbf{g}_n)} \Vert_{\Lp{2}(\mu_n)} &\leq \Vert w_n - \tilde{w}_n \Vert_{\Lp{2}(\mu_n)} + \Vert \tilde{w}_n \Vert_{\Lp{2}(\mu_n)} \label{eq:ratesDiscrete:estimate1} \\
    &\leq \frac{1}{2} \Vert \nabla \cR_{n,\tau}^{(\bm{\xi}_n)}(w_n) - \nabla \cR_{n,\tau}^{(\bm{\xi}_n)}(\tilde{w}_n)\Vert_{\Lp{2}(\mu_n)} + \Vert \tilde{w}_n \Vert_{\Lp{2}(\mu_n)} \notag \\
    &\leq C \l \Vert \nabla \cR_{n,\tau}^{(\bm{\xi}_n)}(\tilde{w}_n) \Vert_{\Lp{2}(\mu_n)}  + \frac{\l \eps_n^{(1)} \r^{2p_1}}{\tau} \r \label{eq:ratesDiscrete:estimate2}\\
    &\leq C \l \sum_{k=1}^q \lambda_k \frac{\l \eps_n^{(1)} \r^{2p_1}}{\l\eps_n^{(k)}\r^{2p_k}} \l \frac{\log(n)}{n\l \eps_n^{(k)} \r^d} \r^{1/2}  + \frac{\l \eps_n^{(1)} \r^{2p_1}}{\tau} \r \label{eq:ratesDiscrete:estimate3}
\end{align}
where we used \eqref{eq:ratesDiscrete:equality} for \eqref{eq:ratesDiscrete:estimate1}, the fact that $\nabla\cR_{n,\tau}^{(\bm{\xi}_n)}(w_n) = 0$ and \eqref{eq:differenceWn:boundWnTilde} for \eqref{eq:ratesDiscrete:estimate2} as well as \eqref{eq:differenceWn:boundGradient} for \eqref{eq:ratesDiscrete:estimate3}.
\end{proof}

\begin{proposition}[Rates between discrete noiseless and continuum minimizers] \label{prop:ratesDiscreteContinuum}

Assume that \ref{ass:Main:Ass:S2}, \ref{ass:Main:Ass:M1}, \ref{ass:Main:Ass:M2}, \ref{ass:Main:Ass:D1} and \ref{ass:Main:Ass:W3} hold. Furthermore, let $\rho \in \Ck{\infty}$ and assume that $W_{ii} = 0$.  Let $q \geq 1$, $\{\lambda_k\}_{k=1}^q$ be a sequence of positive numbers, $P = \{p_k\}_{k=1}^q \subseteq \bbN$ with $1 \leq p_1 \leq \cdots \leq p_q$ and $ E_n = \{\eps_n^{(k)}\}_{k=1}^q$ with $\eps_n^{(1)} > \cdots > \eps_n^{(q)} >0 $. Let $\xi_i$ be iid, mean zero, sub-Gaussian random variables, $g \in \Ck{\infty}$ and $\mathbf{g}_n = \{g(x_i)\}_{i=1}^n$. Then, for all $\alpha > 1$ and $\tau_0$, there exists $\eps_0 > 0$ and $C > c > 0$ such that for all $E_n$ satisfying  \[
    \eps_0 \geq \eps_n^{(1)} \geq \cdots \geq \eps_n^{(q)} \geq C \l \frac{\log(n)}{n} \r^{1/d},
    \]
    and $0 < \tau < \tau_0$, the following holds with probability $1-Cn^{-\alpha} - Cne^{-cn\l \eps_n^{(q)} \r^{d + 4p_q}}$:
    \begin{equation} \label{eq:ratesDiscreteContinuum:rates}
    \Vert u_\tau\vert_{\Omega_n}  - u_{n,\tau}^{(\mathbf{g}_n)} \Vert_{\Lp{2}(\mu_n)} \leq C\tau \sum_{k=1}^q \lambda_k \eps_n^{(k)}.
    \end{equation}
    where $u_{n,\tau}^{(\mathbf{g}_n)}$ and $u_\tau$ are the minimizers of $\cR_{n,\tau}^{(\mathbf{g}_n)}$ and $\cR_{\infty,\tau}^{(g)}$ respectively.
\end{proposition}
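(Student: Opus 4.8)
The plan is to compare the discrete noiseless minimizer $u_{n,\tau}^{(\mathbf{g}_n)}$ with the restriction of the continuum minimizer $u_\tau|_{\Omega_n}$ through their respective Euler--Lagrange equations, exploiting the fact that $\cR_{n,\tau}^{(\mathbf{g}_n)}$ is $1$-strongly convex in $\Lp{2}(\mu_n)$ (the same inner-product trick used in Proposition~\ref{prop:ratesDiscreteDiscrete}). Write $e_n := u_\tau|_{\Omega_n} - u_{n,\tau}^{(\mathbf{g}_n)}$. From the strong convexity and the first-order optimality of $u_{n,\tau}^{(\mathbf{g}_n)}$ we get
\[
\Vert e_n \Vert_{\Lp{2}(\mu_n)} \leq \tfrac12 \Vert \nabla \cR_{n,\tau}^{(\mathbf{g}_n)}(u_\tau|_{\Omega_n}) - \nabla \cR_{n,\tau}^{(\mathbf{g}_n)}(u_{n,\tau}^{(\mathbf{g}_n)}) \Vert_{\Lp{2}(\mu_n)} = \tfrac12 \Vert \nabla \cR_{n,\tau}^{(\mathbf{g}_n)}(u_\tau|_{\Omega_n}) \Vert_{\Lp{2}(\mu_n)},
\]
so everything reduces to a \emph{consistency estimate} for the gradient: using \eqref{eq:differenceWn:gradientEnergy},
\[
\tfrac12 \nabla \cR_{n,\tau}^{(\mathbf{g}_n)}(u_\tau|_{\Omega_n}) = \bigl(u_\tau|_{\Omega_n} - \mathbf{g}_n\bigr) + \tau \sum_{k=1}^q \lambda_k \Delta_{n,\eps_n^{(k)}}^{p_k} \bigl(u_\tau|_{\Omega_n}\bigr).
\]
The continuum Euler--Lagrange equation for $u_\tau$ reads $u_\tau - g + \tau \sum_{k=1}^q \lambda_k \Delta_\rho^{p_k} u_\tau = 0$ pointwise on $\Omega$, and since $g \in \Ck{\infty}$ elliptic regularity makes $u_\tau$ smooth; restricting to $\Omega_n$ and subtracting, $\tfrac12 \nabla \cR_{n,\tau}^{(\mathbf{g}_n)}(u_\tau|_{\Omega_n}) = \tau \sum_{k=1}^q \lambda_k \bigl(\Delta_{n,\eps_n^{(k)}}^{p_k} - \Delta_\rho^{p_k}\bigr) u_\tau$ evaluated at the sample points. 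Thus the claimed bound $C\tau\sum_k \lambda_k \eps_n^{(k)}$ follows once we show that each discrete iterated Laplacian $\Delta_{n,\eps}^{p_k}$ applied to a fixed smooth function approximates the continuum iterate $\Delta_\rho^{p_k}$ with pointwise error $O(\eps)$, uniformly over $\Omega_n$, on the stated high-probability event.

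The main obstacle is precisely this \emph{pointwise consistency of powers of the graph Laplacian}. For $p_k = 1$ the standard estimate (e.g.\ the variance/bias split behind \cite[Lemma 2.2, Lemma 2.4]{trillos2022rates}, or \cite{weihs2023discreteToContinuum}) gives $\Vert \Delta_{n,\eps} \phi - \Delta_\rho \phi \Vert_{\Lp{\infty}(\mu_n)} \leq C(\eps + \sqrt{\log n/(n\eps^{d+2})})$ for $\phi \in \Ck{\infty}$; under the hypothesis $\eps_n^{(q)} \geq C(\log n/n)^{1/d}$ and with a little room in the exponent the stochastic term is dominated by $\eps$, which explains the probability $1 - Cn^{-\alpha} - Cn e^{-cn(\eps_n^{(q)})^{d+4p_q}}$ in the statement (the exponent $d + 4p_q$ being what is needed to control the highest power). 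For $p_k \geq 2$ I would argue inductively: write $\Delta_{n,\eps}^{p} \phi - \Delta_\rho^{p}\phi = \Delta_{n,\eps}^{p-1}(\Delta_{n,\eps}\phi - \Delta_\rho\phi) + (\Delta_{n,\eps}^{p-1} - \Delta_\rho^{p-1})(\Delta_\rho\phi)$, control the first term using the operator-norm bound $\Vert \Delta_{n,\eps} \Vert_{\mathrm{op}} \leq C$ (again from \cite[Lemma 2.3]{trillos2022rates}, after the $n\eps^{d+2}$ rescaling) times the base $p=1$ estimate applied to $\phi$, and control the second term by the induction hypothesis applied to the smooth function $\Delta_\rho \phi$. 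One has to check that all the constants and all the low-probability exceptional sets (degree concentration, sparsity of the weight matrix, the Hoeffding step) can be absorbed into the single event announced in the statement; this is the bookkeeping analogue of the union-bound argument already carried out in Lemma~\ref{lem:proofs:matrixProduct}.

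A secondary technical point is the interplay between the two normalizations: in this subsection the Laplacians are rescaled (the $\sigma_\eta$ factor moved into $\Delta_\rho$ and the $2/(n\eps^{d+2})$ normalization kept on $\Delta_{n,\eps}$), so I would first record that with these conventions $\Delta_{n,\eps}\phi \to \Delta_\rho\phi$ with no spurious constant, then run the induction above, then sum over $k = 1,\dots,q$ picking up the factor $\tau\sum_k \lambda_k \eps_n^{(k)}$. Finally, since $u_\tau$ depends on $\tau$ but all relevant $\Ck{m}$ norms of $u_\tau$ are bounded uniformly for $\tau \in (0,\tau_0)$ (by elliptic regularity and the $\tau$-uniform energy bound coming from testing $\cR_{\infty,\tau}^{(g)}$ against $g$ itself), the constant $C$ in \eqref{eq:ratesDiscreteContinuum:rates} can be taken independent of $\tau$, which is what lets the full Theorem~\ref{thm:ratesDiscreteLebellingFunction} follow by combining this proposition with Proposition~\ref{prop:ratesDiscreteDiscrete} and Theorem~\ref{thm:transport} via the triangle inequality.
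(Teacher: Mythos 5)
Your high-level strategy agrees with the paper's: compare the discrete and continuum Euler--Lagrange equations, reduce the bound to the residual $\tau\sum_k \lambda_k(\Delta_{n,\eps_n^{(k)}}^{p_k}-\Delta_\rho^{p_k})u_\tau$ via a $1$-strong-convexity (equivalently, via the operator inequality $\Id + \tau\sum_k\lambda_k\Delta_{n,\eps_n^{(k)}}^{p_k}\succeq\Id$, which is what the paper uses via Proposition~\ref{prop:laplacian} and \cite[Lemma 2.14]{trillos2022rates}), then invoke consistency of the iterated graph Laplacian on smooth functions and close with a $\tau$-uniform bound on $\Vert u_\tau\Vert_{\Ck{2p_k+1}}$.

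There is, however, a genuine gap in the step you label the ``main obstacle.'' You propose to prove the power consistency $\Vert(\Delta_{n,\eps}^{p}-\Delta_\rho^{p})\phi\Vert\lesssim\eps$ by induction on $p$, controlling the term $\Delta_{n,\eps}^{p-1}(\Delta_{n,\eps}\phi-\Delta_\rho\phi)$ with a uniform operator-norm bound $\Vert\Delta_{n,\eps}\Vert_{\mathrm{op}}\leq C$. This bound is false: the rescaled Laplacian $\Delta_{n,\eps}=\tfrac{2}{n\eps^{d+2}}(D_{n,\eps}-W_{n,\eps})$ has operator norm of order $\eps^{-2}$ (the degrees are of order $n\eps^d$, so $\Vert D_{n,\eps}-W_{n,\eps}\Vert_{\mathrm{op}}\lesssim n\eps^d$, and after the $n\eps^{d+2}$ normalization the bound is $\eps^{-2}$, not a constant). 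The reference \cite[Lemma 2.3]{trillos2022rates} you cite gives $\Vert D_{n,\eps}\Vert_{\mathrm{op}},\Vert W_{n,\eps}\Vert_{\mathrm{op}}\leq Cn$ for the \emph{unnormalized} matrices, which after normalization is even weaker. Consequently, your inductive step multiplies the $O(\eps)$ base-case error by $\eps^{-2(p-1)}$ and does not close. The correct ingredient is the high-probability pointwise consistency of $\Delta_{n,\eps}^{p}$ on $\Ck{\infty}$ functions from \cite[Theorem 2.8]{trillos2022rates}, which the paper cites directly and which is genuinely delicate: the error $\Delta_{n,\eps}\phi-\Delta_\rho\phi$ must be controlled not merely in $\Lp{\infty}$ but in a way that survives further applications of $\Delta_{n,\eps}$, which is a coupled stochastic argument, not the bookkeeping you describe. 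Your telescoping identity is correct, but the estimate you feed it is not.

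A minor secondary point: for the $\tau$-uniform bound on $\Vert u_\tau\Vert_{\Ck{m}}$, the paper uses the clean spectral identity $\langle u_\tau,\psi_i\rangle = \langle g,\psi_i\rangle/(1+\tau\sum_k\lambda_k\beta_i^{p_k})$ to get $\Vert u_\tau\Vert_{\cH^s}\leq\Vert g\Vert_{\cH^s}$ for every $s$, followed by \cite[Lemma 17]{Stuart} and Morrey. Your appeal to ``elliptic regularity and the $\tau$-uniform energy bound'' is in the right spirit but would need to be made precise (the naive energy bound only controls $\Vert u_\tau\Vert_{\Lp{2}}$ and $\tau\Vert u_\tau\Vert_{\cH^{p_q}}^2$, which is not enough to control $\Ck{2p_q+1}$ uniformly in $\tau$ without an argument like the paper's).
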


\begin{proof}
In the proof $C>0$ ($c >0$) will denote a constant that can be arbitrarily large (small), is independent of $n$, and that may change from line to line. 

We start the proof by proving the following fact: if $w_n$ satisfies \begin{equation} \label{eq:ratesDiscreteContinuum:identity}
    \l \Id + \tau \sum_{k=1}^q \lambda_k \Delta_{n,\eps_n^{(k)}}^{p_k} \r v_n = \mathbf{a}_n,
\end{equation}
then $\Vert v_n \Vert_{\Lp{2}(\mu_n)} \leq \Vert \mathbf{a}_n \Vert_{\Lp{2}(\mu_n)}$. Indeed, by Proposition \ref{prop:laplacian}, we know that $\sum_{k=1}^q \lambda_k \Delta_{n,\eps_n^{(k)}}^{p_k}$ is a graph Laplacian, so we can apply the same proof as in \cite[Lemma 2.14]{trillos2022rates} with the eigenpairs of $\mathcal{L}_n^{(q)} = \sum_{k=1}^q \lambda_k \Delta_{n,\eps_n^{(k)}}^{p_k}$ to deduce \eqref{eq:ratesDiscreteContinuum:identity}.

Next, by first order conditions, we note that $u_\tau$ satisfies the equivalent continuum identity 
\begin{equation} \label{eq:ratesDiscreteContinuum:focContinuum}
    \l \Id + \tau \sum_{k=1}^q \lambda_k \Delta_{\rho}^{p_k} \r u_\tau - g = 0
\end{equation}
from which we deduce that 
\begin{equation} \label{eq:ratesDiscreteContinuum:identity2}
\l \Id + \tau \sum_{k=1}^q \lambda_k \Delta_{n,\eps_n^{(k)}}^{p_k} \r u_\tau - g = \tau \l \sum_{k=1}^q \lambda_k \Delta_{n,\eps_n^{(k)}}^{p_k} - \sum_{k=1}^q \lambda_k \Delta_{\rho}^{p_k} \r u_\tau.  
\end{equation}

We then estimate as follows: \begin{align}
   \l \Id + \tau \sum_{k=1}^q \lambda_k \Delta_{n,\eps_n^{(k)}}^{p_k} \r \l u_\tau\vert_{\Omega_n}  - u_{n,\tau}^{(\mathbf{g}_n)} \r &= \l \Id + \tau \sum_{k=1}^q \lambda_k \Delta_{n,\eps_n^{(k)}}^{p_k} \r u_\tau\vert_{\Omega_n} - \mathbf{g}_n \label{eq:ratesDiscreteContinuum:estimate1} \\
   &= \tau \l \sum_{k=1}^q \lambda_k \Delta_{n,\eps_n^{(k)}}^{p_k} - \sum_{k=1}^q \lambda_k\Delta_{\rho}^{p_k} \r u_\tau \vert_{\Omega_n} \label{eq:ratesDiscreteContinuum:estimate2}
\end{align}
where we used the fact that $u_{n,\tau}^{(\mathbf{g}_n)}$ satisfies \eqref{eq:ratesDiscreteContinuum:identity} with $\mathbf{a}_n = \mathbf{g}_n$ by first order conditions for \eqref{eq:ratesDiscreteContinuum:estimate1} and where we used \eqref{eq:ratesDiscreteContinuum:identity2} (as well as a slight abuse of notation) for \eqref{eq:ratesDiscreteContinuum:estimate2}. 

Let $E_k$ be the event such that \cite[Theorem 2.8]{trillos2022rates} holds for $\eps_{n}^{(k)}$: we have \[\bbP(E_k) \geq 1 - Cn^{-\alpha} - Cne^{-cn\l \eps_{n}^{(k)} \r^{d + 4 p_k}}\] which implies that \begin{align}
    \bbP \l \bigcup_{k=1}^q E_k^c \r &\leq \sum_{k=1}^q \bbP(E_k^c) \leq C n^{-\alpha} + C n e^{-cn \l \eps_n^{(q)} \r^{d + 4 p_q}} \notag 
\end{align}
where we used the fact that $p_1 \leq \cdots \leq p_q$ and $ \{\eps_n^{(k)}\}_{k=1}^q$ with $\eps_n^{(1)} > \cdots > \eps_n^{(q)}$ for the last inequality. In turn, this means that \[
\bbP \l \bigcap_{k=1}^q E_k \r \geq 1 -  C n^{-\alpha} - C n e^{-cn \l \eps_n^{(q)} \r^{d + 4 p_q}}.
\]
We therefore obtain, with probability at least $1 -  C n^{-\alpha} - C n e^{-cn \l \eps_n^{(q)} \r^{d + 4 p_q}}$: \begin{align}
    \Vert u_\tau\vert_{\Omega_n}  - u_{n,\tau}^{(\mathbf{g}_n)} \Vert_{\Lp{2}(\mu_n)} &\leq \left\Vert \tau \l \sum_{k=1}^q \lambda_k \Delta_{n,\eps_n^{(k)}}^{p_k} - \sum_{k=1}^q \lambda_k\Delta_{\rho}^{p_k} \r u_\tau \right\Vert_{\Lp{2}(\mu)} \label{eq:ratesDiscreteContinuum:estimate3} \\
    &\leq \tau \sum_{k =1}^q \lambda_k \left\Vert \l \Delta_{n,\eps_n^{(k)}}^{p_k} - \Delta_{\rho}^{p_k} \r u_\tau \right\Vert_{\Lp{2}(\mu)} \notag \\
    &\leq C \tau \sum_{k=1}^q \lambda_k \eps_n^{(k)} \l 1 + \Vert u_\tau \Vert_{\Ck{2p_k +1}} \r \label{eq:ratesDiscreteContinuum:estimate4}
\end{align}
where we used the fact that $u_\tau\vert_{\Omega_n}  - u_{n,\tau}^{(\mathbf{g}_n)}$ satisfies \eqref{eq:ratesDiscreteContinuum:identity} with $\mathbf{a}_n = \tau \l \sum_{k=1}^q \lambda_k \Delta_{n,\eps_n^{(k)}}^{p_k} - \sum_{k=1}^q \lambda_k\Delta_{\rho}^{p_k} \r u_\tau \vert_{\Omega_n}$ for \eqref{eq:ratesDiscreteContinuum:estimate3} and \cite[Theorem 2.8]{trillos2022rates} for \eqref{eq:ratesDiscreteContinuum:estimate4}. 

To establish the desired result, it remains to verify that $ \sup_{0 < \tau < \tau_0} \Vert u_\tau \Vert_{\Ck{2p_k +1}} \leq C$. To that end, we start by noting that \eqref{eq:ratesDiscreteContinuum:focContinuum} implies that \begin{align}
\langle g, \psi_i \rangle_{\Lp{2}(\mu)} &= \langle u_\tau, \psi_i \rangle_{\Lp{2}(\mu)} + \tau \sum_{k=1}^q \lambda_k \langle \Delta_\rho^{p_k} u_\tau, \psi_i \rangle_{\Lp{2}(\mu)} \notag  \\
&= \langle u_\tau, \psi_i \rangle_{\Lp{2}(\mu)} + \tau \sum_{k=1}^q \lambda_k \beta_i^{p_k} \langle u_\tau, \psi_i \rangle_{\Lp{2}(\mu)} \label{eq:ratesDiscreteContinuum:estimate5} \\
&= \langle u_\tau, \psi_i \rangle_{\Lp{2}(\mu)} \l 1 + \tau \sum_{k=1}^q \lambda_k \beta_i^{p_k} \r \label{eq:ratesDiscreteContinuum:estimate6}
\end{align}
where we used the fact that $\Delta_\rho$ is self-adjoint for \eqref{eq:ratesDiscreteContinuum:estimate5}. Then, for $s>0$, we compute as follows: \begin{align}
    \Vert u_\tau \Vert_{\cH^s(\Omega)}^2 &= \sum_{i=1}^\infty \beta^s_i \langle u_\tau, \psi_i \rangle_{\Lp{2}(\mu)}^2 \notag \\
    &= \sum_{i=1}^\infty \beta_i^s \frac{\langle g, \psi_i\rangle_{\Lp{2}(\mu)}^2}{\l 1 + \tau \sum_{k=1}^q \lambda_k \beta_i^{p_k} \r^2} \label{eq:ratesDiscreteContinuum:estimate7} \\
    &\leq \sum_{i=1}^\infty \beta_i^s \langle g, \psi_i\rangle_{\Lp{2}(\mu)}^2 \notag \\
    &= \Vert g \Vert_{\cH^s(\Omega)}^2 \notag
\end{align}
where we used \eqref{eq:ratesDiscreteContinuum:estimate6} for \eqref{eq:ratesDiscreteContinuum:estimate7}. By \cite[Lemma 17]{Stuart}, there exists $c$ and $C$ such that $c\Vert h \Vert_{\Wkp{s}{2}(\Omega)} \leq \Vert h \Vert_{\cH^s(\Omega)} \leq C \Vert h \Vert_{\Wkp{s}{2}(\Omega)}$ for all $h \in \cH^s(\Omega)$. From the above, we therefore deduce that $\Vert u_\tau \Vert_{\Wkp{s}{2}(\Omega)} \leq C \Vert g \Vert_{\Wkp{s}{2}(\Omega)}$. Finally, by Morrey’s inequality~\cite{leoni2017first}, for $s$ sufficiently large there exists $C’>0$ such that \[
\Vert u_\tau \Vert_{\Ck{2p_k +1}} \leq C' \Vert u_\tau \Vert_{\Wkp{s}{2}} \leq C \Vert g \Vert_{\Wkp{s}{2}(\Omega)} .
\]
Since $g \in \Ck{\infty}(\Omega)$, taking the supremum of $\tau$ over $(0,\tau_0)$ concludes the proof.
\end{proof}

\begin{proof}[Proof of Theorem \ref{thm:ratesDiscreteLebellingFunction}]
In the proof $C>0$ will denote a constant that can be arbitrarily large, is independent of $n$, and that may change from line to line. 

We start with an estimate between the continuum solution $u_\tau$ and $g$. Similarly to \eqref{eq:differenceWn:gradientEnergy}, it can easily be verified that\[
\frac{1}{2} \nabla\cR_\infty^{(g)}(v) = v-g + \tau \sum_{k=1}^q \lambda_k \Delta_{\rho}^{p_k} v 
\]
from which we deduce the following identity
\begin{equation}\label{eq:thmRates:identity1}
    \langle \nabla \cR_{\infty}^{(g)}(w), w-v \rangle_{\Lp{2}(\mu)} - \Vert w - v \Vert_{\Lp{2}(\mu)}^2 - \tau \sum_{k=1}^q \lambda_k\langle w-v, \Delta_{\rho}^{p_k} (w-v) \rangle_{\Lp{2}(\mu)} = \cR_{\infty}^{(g)}(w) - \cR_{\infty}^{(g)}(v).
\end{equation}
for any $w,v \in \Wkp{p_q}{2}$. Then, we have \begin{align}
    &\Vert u_\tau - g \Vert_{\Lp{2}(\mu)}^2 + \tau \sum_{k=1}^q \lambda_k\langle u_\tau - g , \Delta_{\rho}^{p_k} (u_\tau - g)  \rangle_{\Lp{2}(\mu)} = \cR_\infty^{(g)}(g) - \cR_\infty^{(g)}(u_\tau) \label{eq:thmRates:estimate1} \\
    &\qquad \qquad = \langle \nabla \cR_{\infty}^{(g)}(g), g-u_\tau \rangle_{\Lp{2}(\mu)} - \Vert g - u_\tau \Vert_{\Lp{2}(\mu)}^2 - \tau \sum_{k=1}^q \lambda_k\langle g-u_\tau, \Delta_{\rho}^{p_k} (g-u_\tau) \rangle_{\Lp{2}(\mu)}\label{eq:thmRates:estimate2}
\end{align}
where we used \eqref{eq:thmRates:identity1} for \eqref{eq:thmRates:estimate1} with $w = u_\tau$, $v=g$ and \eqref{eq:thmRates:identity1} for \eqref{eq:thmRates:estimate2} with $w = g$, $v = u_\tau$. We can therefore conclude that
\[
\Vert u_\tau - g \Vert_{\Lp{2}(\mu)}^2 \leq \frac12 \Vert \nabla \cR_{\infty}^{(g)}(g) \Vert_{\Lp{2}(\mu)} \Vert g-u_\tau \Vert_{\Lp{2}(\mu)} 
\]
or equivalently 
\begin{equation} \label{eq:thmRates:boundBias}
    \Vert u_\tau - g \Vert_{\Lp{2}(\mu)} \leq \tau \sum_{k=1}^q \lambda_k \langle g , \Delta_{\rho}^{p_k} g  \rangle_{\Lp{2}(\mu)} \leq C\tau. 
\end{equation}

We now combine all the previous rates: \begin{align*}
    \Vert u_{n,\tau}^{(\mathbf{y}_n)} -g\vert_{\Omega_n} \Vert_{\Lp{2}(\mu_n)} &= \Vert u_{n,\tau}^{(\mathbf{y}_n)} - u_{n,\tau}^{(\mathbf{g}_n)} \Vert_{\Lp{2}(\mu_n)} + \Vert u_{n,\tau}^{(\mathbf{g}_n)} - u_{\tau}\vert_{\Omega_n} \Vert_{\Lp{2}(\mu_n)} + \Vert u_{\tau}\vert_{\Omega_n} - g\vert_{\Omega_n} \Vert_{\Lp{2}(\mu_n)} \\
    &=: T_1 + T_2 + T_3.
\end{align*}
We can bound $T_1$ using Proposition \ref{prop:ratesDiscreteDiscrete} and $T_2$ using Proposition \ref{prop:ratesDiscreteContinuum}. For $T_3$, we proceed as follows. Let $T_n:\Omega_n \to \Omega$ be a transport map satisfying $(T_n)_{\#}\mu = \mu_n$. Then, we have
\begin{align*}
\|u_\tau|_{\Omega_n} - g|_{\Omega_n}\|_{\Lp{2}(\mu_n)}
&= \|u_\tau|_{\Omega_n}\circ T_n - g|_{\Omega_n}\circ T_n\|_{\Lp{2}(\mu)} \\
&\leq
\|u_\tau |_{\Omega_n}\circ T_n - u_\tau \|_{\Lp{2}(\mu)}
+
\|u_\tau - g\|_{\Lp{2}(\mu)}
+
\|g - g|_{\Omega_n}\circ T_n\|_{\Lp{2}(\mu)} \notag \\
&=: T_4 + T_5 + T_6. \notag
\end{align*}
Since $g \in \Ck{\infty}(\Omega)$, $g$ is Lipschitz and 
\begin{equation} \label{eq:thmRates:T6}
    T_6
\leq C \|T_n - \Id\|_{\Lp{2}(\mu)}.
\end{equation}
Similarly, from the proof of Propositon \ref{prop:ratesDiscreteContinuum}, we recall that
$\sup_{0 < \tau < \tau_0} \Vert u_\tau \Vert_{\Ck{2p_k + 1}(\Omega)} \leq C$ which implies that $u_\tau$ is bounded in $\Ck{1}(\Omega)$ and hence Lipschitz. Consequently, we can bound \begin{equation} \label{eq:thmRates:T4}
   T_4 \leq C \|T_n - \Id\|_{\Lp{2}(\mu)}. 
\end{equation} 
Since the choice of $T_n$ is arbitrary among all maps satisfying $(T_n)_{\#}\mu = \mu_n$,
we take the optimal one minimizing $\|T_n - \Id\|_{\Lp{2}(\mu)}$.
By the probabilistic transport bound of~\cite{fournier2015}, this distance satisfies
\[
\|T_n - \Id\|_{\Lp{2}(\mu)} \le C\,\l \frac{|\log(\delta)|}{n}\r^{1/d}
\]
with probability at least $1-\delta$. By picking $\delta = n^{-\alpha}$, combining \eqref{eq:thmRates:T6}, \eqref{eq:thmRates:T4} and \eqref{eq:thmRates:boundBias} for $T_5$, we obtain \[
T_3 \leq C\l \tau + \l \frac{\log(n)}{n} \r^{1/d} \r
\] 
with probability $1-n^{-\alpha}$ which concludes the proof. 
\end{proof}

\subsection{Truncated energies}

For clarity of presentation, we divide the proof of Theorem~\ref{thm:truncatedEnergies} into two parts. We begin by establishing the result in the simpler case $q = 1$. Next, we examine the spectral convergence properties of the operator $\mathcal{L}_n^{(q)}$, and by incorporating this analysis into the $q=1$ argument, we obtain the general case.  

\subsubsection{Convergence of truncated energies in the single Laplacian case}

The aim of this section is to prove the following result which corresponds to Theorem \ref{thm:truncatedEnergies} when $q=1$.
For notational simplicity, we make the following assumption on the length scale $\eps_n$.

\begin{assumptions}
Assumptions on the length-scale.
\begin{enumerate}[label=\textbf{L.\arabic*}]
\item The length scale $\eps=\eps_n$ is positive, converges to 0, i.e. $0<\eps_n \to 0$ and satisfies the following lower bound: 
\begin{equation*} 
\lim_{n \to \infty} \frac{\log(n)}{n \eps_n^{d+4}} = 0.
\end{equation*}
\label{ass:Main:Ass:L2}
\end{enumerate}
\end{assumptions}

\begin{proposition} \label{prop:truncated}
    Assume that \ref{ass:Main:Ass:S2}, \ref{ass:Main:Ass:M1}, \ref{ass:Main:Ass:M2}, \ref{ass:Main:Ass:D1} and \ref{ass:Main:Ass:W3} hold.
    Let $s >0$ and $\eps_n$ satisfy
    \ref{ass:Main:Ass:L2}.
    Let $K_n \leq n$ be a sequence with $K_n \to \infty$, $\Psi:\TLp{2}(\Omega) \to \bbR$ a continuous function and  
    $(\mu_n,u_n)$ the minimizer of $(\cS\cJ)_{n,\{\eps_n\},\Psi,R_n}^{(1,\{s\})}$. Then, $\bbP$-a.e., there exists a subsequence $(\mu_{n_k},u_{n_k})$ converging to $(\mu,u)$ in $\TLp{2}(\Omega)$ where $(\mu,u)$ is a minimizer of $(\cS\cJ)_{\infty,\Psi}^{(1,\{s\})}$.
\end{proposition}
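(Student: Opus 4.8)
The plan is to verify the hypotheses of Proposition~\ref{prop:Back:Gamma:minimizers}: namely, to show that the functionals $(\cS\cJ)_{n,\{\eps_n\},\Psi,K_n}^{(1,\{s\})}$ $\Gamma$-converge to $(\cS\cJ)_{\infty,\Psi}^{(1,\{s\})}$ in the $\TLp{2}(\Omega)$ topology and satisfy the compactness property. Since the data-fidelity $\Psi$ is continuous on $\TLp{2}(\Omega)$, Proposition~\ref{prop:additivity} lets us decouple $\Psi$ from the regularization part, so it suffices to study the $\Gamma$-limit of the truncated quadratic forms $v\mapsto\sum_{i=1}^{K_n}\beta_{n,i}\langle v,\psi_{n,i}\rangle_{\Lp{2}(\mu_n)}^2$ (with the hard constraint $\nu=\mu_n$ and $\langle v,\psi_{n,k}\rangle_{\Lp{2}(\mu_n)}=0$ for $k>K_n$) together with compactness. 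The crucial input is the spectral convergence of $\Delta_{n,\eps_n}$ to $\Delta_\rho$: under \ref{ass:Main:Ass:L2}, results of the type in \cite{CALDER2022123,Trillos} give, $\bbP$-a.e., $a_{n,\eps_n,i}\to\beta_i$ and $\phi_{n,\eps_n,i}\to\psi_i$ in $\TLp{2}(\Omega)$ for each fixed $i$ (with appropriate sign/rotation choices when eigenvalues are multiple), and hence $\beta_i^s$-type convergence of the transformed eigenvalues. Since $q=1$ here, $\beta_{n,i}=a_{n,\eps_n,i}^s$ and $\psi_{n,i}=\phi_{n,\eps_n,i}$, so the truncated energy is literally $\sum_{i=1}^{K_n} a_{n,\eps_n,i}^s\langle v,\phi_{n,\eps_n,i}\rangle_{\Lp{2}(\mu_n)}^2$.

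For the $\liminf$ inequality, take $(\mu_n,v_n)\to(\mu,v)$ in $\TLp{2}$ with the energies bounded; the constraint forces $\mu_n$ to be the empirical measure, and convergence in $\TLp{2}$ together with $\mu$ absolutely continuous (Proposition~\ref{prop:Back:TLp}) lets us use transport maps $T_n$ to pull back $v_n$. Using the spectral convergence, for each fixed $m$ we get $\liminf_n \sum_{i=1}^{K_n}\beta_{n,i}\langle v_n,\psi_{n,i}\rangle^2 \ge \liminf_n\sum_{i=1}^m \beta_{n,i}\langle v_n,\psi_{n,i}\rangle^2 = \sum_{i=1}^m \beta_i^s\langle v,\psi_i\rangle^2_{\Lp{2}(\mu)}$, where the last step uses $K_n\to\infty$ (so eventually $K_n\ge m$) and the convergence $\langle v_n,\psi_{n,i}\rangle_{\Lp{2}(\mu_n)}\to\langle v,\psi_i\rangle_{\Lp{2}(\mu)}$, which follows from $v_n\circ T_n\to v$ in $\Lp{2}(\mu)$ and $\psi_{n,i}\circ T_n\to\psi_i$ in $\Lp{2}(\mu)$. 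Letting $m\to\infty$ gives the bound by $\sum_{i=1}^\infty\beta_i^s\langle v,\psi_i\rangle^2 = \|v\|_{\cH^s}^2$, which is exactly the continuum energy. For the $\limsup$/recovery sequence: given $(\mu,v)$ with finite continuum energy, set $v^{(m)}=\sum_{i=1}^m\langle v,\psi_i\rangle_{\Lp{2}(\mu)}\psi_i$, then define $v_n^{(m)}=\sum_{i=1}^m\langle v,\psi_i\rangle_{\Lp{2}(\mu)}\psi_{n,i}$, which (for $n$ large so $K_n\ge m$) satisfies the constraint; by spectral convergence $v_n^{(m)}\to v^{(m)}$ in $\TLp{2}$ and $\sum_i\beta_{n,i}\langle v_n^{(m)},\psi_{n,i}\rangle^2\to\sum_{i=1}^m\beta_i^s\langle v,\psi_i\rangle^2$. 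A diagonal argument over $m\to\infty$ produces the recovery sequence for $v$ itself (here one uses that $v^{(m)}\to v$ in $\Lp{2}(\mu)$ and that the truncated energies of $v^{(m)}$ increase to $\|v\|_{\cH^s}^2$). Compactness follows because boundedness in $\TLp{2}$ gives tightness of the $\mu_n$ (they converge weakly to $\mu$ by the law of large numbers) and an energy bound $\sum_{i=1}^{K_n}\beta_{n,i}\langle v_n,\psi_{n,i}\rangle^2\le M$ with $\beta_{n,1}=0$ only controlling the mean — so one additionally needs the boundedness of $\|v_n\|_{\Lp{2}(\mu_n)}$, which comes packaged with the bounded-sequence hypothesis in the definition of the compactness property and the coercivity supplied by $\Psi$ (or is assumed directly); then the pre-compactness of $\{(\mu_n,v_n)\}$ in $\TLp{2}$ follows from \cite[Proposition 3.12]{Trillos3} and a Rellich-type argument once one knows $\sum_i\beta_i^s\langle v,\psi_i\rangle^2<\infty$ along limits.

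The main obstacle I expect is handling the \emph{moving truncation} $K_n\to\infty$ simultaneously with the spectral convergence of an $n$-dependent operator: the eigenpair convergence $(\phi_{n,\eps_n,i},a_{n,\eps_n,i})\to(\psi_i,\beta_i)$ is only known for each \emph{fixed} $i$, with a rate that degrades as $i$ grows and as $\eps_n\to 0$, so one cannot naively push $i$ up to $K_n$. The fix is the standard two-parameter scheme above: freeze an auxiliary cutoff $m$, take $n\to\infty$ (legitimate once $K_n\ge m$), and only afterwards send $m\to\infty$, using monotonicity of partial sums for the $\liminf$ and a diagonal extraction (Attouch-type) for the $\limsup$. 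A secondary technical point is the treatment of eigenvalues with multiplicity greater than one, where $\phi_{n,\eps_n,i}$ converges only up to the choice of an orthonormal basis of the eigenspace; this is handled, as usual, by passing to a subsequence along which the relevant finite-dimensional eigenprojections converge, which suffices since the statement only asserts convergence of \emph{a} subsequence of minimizers. Once $q=1$ is settled, the general-$q$ case in Theorem~\ref{thm:truncatedEnergies} is obtained by replacing the spectral analysis of $\Delta_{n,\eps_n}$ with that of $\mathcal{L}_n^{(q)}=\sum_{k=1}^q\lambda_k\Delta_{n,\eps_n^{(k)}}^{p_k}$, whose eigenpairs converge to those of $\mathcal{L}^{(q)}$ by Lemma~\ref{lem:eigenpairs} and the spectral convergence at each scale $\eps_n^{(k)}$.
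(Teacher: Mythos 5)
Your overall plan is the one the paper follows: establish $\Gamma$-convergence and the compactness property for the truncated quadratic forms, peel off $\Psi$ via Proposition~\ref{prop:additivity}, and apply Proposition~\ref{prop:Back:Gamma:minimizers}. The $\liminf$ argument is essentially identical to the paper's: for a sequence with bounded energy, freeze an auxiliary index $m$, use $K_n\to\infty$ to restrict to the first $m$ terms, pass to the limit in each term via the spectral convergence of $(\phi_{n,\eps_n,i},a_{n,\eps_n,i})$, and only then let $m\to\infty$. Your identification of the moving truncation as the main technical obstacle, and of the two-parameter scheme as the resolution, matches the paper's treatment precisely.

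Where you genuinely depart from the paper is the $\limsup$/recovery sequence. The paper restricts $v$ to $\Omega_n$ and then projects onto the first $K_n$ discrete eigenfunctions, controlling the tail $\sum_{k>K_n} a_{n,\eps_n,k}^s\langle v_n,\phi_{n,\eps_n,k}\rangle^2$ via a priori bounds on higher-order energies of smooth restrictions (\cite[Lemma 4.19]{weihs2023consistency}); this forces a density argument over $\Ck{\infty}_c(\Omega)$ and an appeal to \cite[Remark 2.7]{Trillos3}. You instead truncate the spectral expansion in the continuum, set $v_n^{(m)}=\sum_{i=1}^m\langle v,\psi_i\rangle_{\Lp{2}(\mu)}\psi_{n,i}$, verify the constraint holds once $K_n\ge m$, and run an Attouch-type diagonal extraction as $m\to\infty$. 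This is a legitimate and arguably cleaner alternative: it avoids the density step and the auxiliary boundedness lemma, at the cost of being careful that $m_n\le K_n$ in the diagonal extraction (which is fine since $K_n\to\infty$). What the paper's route buys is that the recovery sequence is a concrete projection of the actual data $v|_{\Omega_n}$, which is closer to how the truncated method is implemented.

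The one place your proposal is noticeably thinner than the paper is the compactness property. Invoking "Proposition~\ref{prop:Back:TLp} and a Rellich-type argument" is not an argument in the discrete-to-continuum setting; there is no off-the-shelf Rellich theorem for functions on $\Omega_n$. The paper's proof does real work here: it extracts coefficient limits $\gamma_k$ by a diagonal procedure, defines the candidate limit $v=\sum_k\gamma_k\psi_k$ via Fatou, and, crucially, splits $v_n$ into a head and a tail at a slowly growing threshold $R_{n_m}\le K_{n_m}$, using \cite[Lemma 7.7]{Stuart} for the head's $\TLp{2}$-convergence and the energy bound to kill the tail via $a_{n,\eps_n,R_{n_m}}^{-s}\sum_{k>R_{n_m}}a_{n,\eps_n,k}^s\langle v_n,\phi_{n,\eps_n,k}\rangle^2\to 0$. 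You should make this tail control explicit; without it the compactness claim does not close.
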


\begin{proof}
In the proof $C>0$ will denote a constant that can be arbitrarily large, independent of $n$ and that may change from line to line.

Our aim is to show that the functionals $(\cS\cJ)_{n,\{\eps_n\},\Psi,K_n}^{(1,\{s\})}$ $\Gamma$-converge to $(\cS\cJ)_{\infty,\Psi}^{(1,\{s\})}$ and satisfy the compactness property. Once we can do this, all conditions from Proposition \ref{prop:Back:Gamma:minimizers} are satisfied and we can conclude.

Let us define the functionals \[
    (\cS\cJ)_{n,\{\eps_n\},K_n}^{(1,\{s\})}((\nu,v)) = \begin{cases}
        \sum_{k=1}^{K_n} a_{n,\eps_n,k}^s \langle \phi_{n,\eps_n,k}, v \rangle_{\Lp{2}(\mu_n)}^2  &\text{if $\nu = \mu_n$ and $\langle \psi_{n,k}, v \rangle_n = 0$ for all $k > K_n$} \\
        \infty &\text{else}
    \end{cases}
    \]
where $a_{n,\eps_n,k}$ are the eigenvalues of $\Delta_{n,\eps_n}$, and 
\[
    (\cS\cJ)_{\infty}((\nu,v)) = \begin{cases}
        \sum_{k=1}^{\infty} \beta_{k}^s \langle \psi_{k}, v \rangle_{\Lp{2}(\mu)}^2  &\text{if $\nu = \mu$}\\
        \infty &\text{else.}
    \end{cases}
    \]
The latter functionals are similar to $(\cS\cJ)_{n,\{\eps_n\},\Psi,K_n}^{(1,\{s\})}$ and  $(\cS\cJ)_{\infty,\Psi}^{(1,\{s\})}$, the only difference being that they do not contain the data fidelity term $\Psi$. 

First, we tackle the $\liminf$-inequality. We assume that $(\nu,v) \in \TLp{2}(\Omega)$ and that $(\nu_n,v_n) \to (\nu,v)$ in $\TLp{2}$. If $\liminf_{n \to \infty} (\cS\cJ)_{n,\{\eps_n\},K_n}^{(1,\{s\})}((\nu_n,v_n)) = + \infty$, then the inequality is trivial. Hence, without loss of generality, let us assume that $\sup_{n \in \mathbb{N}} (\cS\cJ)_{n,\{\eps_n\},K_n}^{(1,\{s\})}((\nu_n,v_n)) \leq C$. In particular, this implies that $\nu_n = \mu_n$, $\langle \phi_{n,\eps_n,k}, v_n \rangle_{\Lp{2}(\mu_n)} = 0$ for all $k > K_n $ and $\sup_{n \in \mathbb{N}} \sum_{k=1}^{K_n} a_{n,\eps_n,k}^s \langle \phi_{n,\eps_n,k}, v \rangle_{\Lp{2}(\mu_n)}^2 \leq C$. Since we have $\mu_n \to \nu$ weakly (by the $\TLp{2}$-convergence assumption---see Proposition \ref{prop:Back:TLp}) and $\mu_n \to \mu$ weakly (convergence of the empirical measures), we conclude (by the uniqueness of weak limits) that $\nu = \mu$. We then proceed as in \cite[Theorem 2.2]{Stuart}.  

Let us start by assuming that $\sum_{k=1}^\infty \beta_k^s \langle v, \psi_k \rangle_{\Lp{2}(\Omega)}^2 < \infty$. 
In particular, since $\phi_{n,\eps_n,k} \to \psi_k$  and $v_n \to v$ in $\TLp{2}(\Omega)$ \cite{Stuart}, we have that $\langle \phi_{n,\eps_n,k} , v_n \rangle_{\Lp{2}(\mu_n)} \to \langle v, \psi_k \rangle_{\Lp{2}(\mu)}$ by \cite[Proposition 2.6]{GARCIATRILLOS2018239}. Furthermore, by \cite[Theorem 1.2]{GARCIATRILLOS2018239}, we have $a_{n,\eps_n,k} \to \beta_k$. Now, let $\delta >0$ and pick $K$ such that \[
\sum_{k=1}^K \beta_k^s \langle v, \psi_k \rangle_{\Lp{2}(\Omega)}^2 \geq \sum_{k=1}^\infty \beta_k^s \langle v, \psi_k \rangle_{\Lp{2}(\Omega)}^2 - \delta.
\]   
Since $K_n \to \infty$, we have \begin{align*}
\liminf_{n \to \infty} \sum_{k=1}^{K_n} a_{n,\eps_n,k}^s \langle \phi_{n,\eps_n,k}, v \rangle_{\Lp{2}(\mu_n)}^2 &\geq \liminf_{n \to 
\infty} \sum_{k=1}^{K} a_{n,\eps_n,k}^s \langle \phi_{n,\eps_n,k}, v \rangle_{\Lp{2}(\mu_n)}^2 \\
&= \sum_{k=1}^K \beta_k^s \langle v, \psi_k \rangle_{\Lp{2}(\Omega)}^2 \\
&\geq \sum_{k=1}^\infty \beta_k^s \langle v, \psi_k \rangle_{\Lp{2}(\Omega)}^2 - \delta.
\end{align*}
Taking $\delta \to 0$, we obtain the $\liminf$-inequality. Now, assume that $\sum_{k=1}^\infty \beta_k^s \langle v, \psi_k \rangle_{\Lp{2}(\Omega)}^2 = \infty$. Then, for any $K \in \mathbb{N}$, we have \begin{align*}
C &\geq \liminf_{n \to \infty} \sum_{k=1}^{K_n} a_{n,\eps_n,k}^s \langle \phi_{n,\eps_n,k}, v \rangle_{\Lp{2}(\mu_n)}^2 \\
&\geq \lim_{K \to \infty} \liminf_{n \to 
\infty} \sum_{k=1}^{K} a_{n,\eps_n,k}^s \langle \phi_{n,\eps_n,k}, v \rangle_{\Lp{2}(\mu_n)}^2 \\
&= \lim_{K \to \infty} \sum_{k=1}^K \beta_k^s \langle v, \psi_k \rangle_{\Lp{2}(\Omega)}^2 \\
&= \infty
\end{align*}
which is a contradiction.

For the $\limsup$-inequality, we let $(\nu,v) \in \TLp{2}(\Omega)$. If $(\cS\cJ)_{n,\{\eps_n\},K_n}^{(1,\{s\})}((\nu,v)) = \infty$, the inequality is trivial, so we assume that $\nu = \mu$ and $\sum_{k=1}^{\infty} \beta_{k}^s \langle \psi_{k}, v \rangle_{\Lp{2}(\mu)}^2 <\infty$ or equivalently $v \in \mathrm{W}^{s,2}(\Omega)$ \cite{Stuart}. If we can prove the $\limsup$-inequality on a dense subset of $\{\mu\} \times \mathrm{W}^{s,2}(\Omega)$, namely $\{\mu\} \times \mathrm{C}_c^\infty(\Omega)$, we can conclude due to \cite[Remark 2.7]{Trillos3}. 

Let $v \in \mathrm{C}_c^\infty(\Omega)$ and define $v_n$ to be the restriction of $v$ to $\Omega_n$. Let us consider the sequence $(\mu_n,\bar{v}_n)$ where $\bar{v}_n = v_n - \sum_{k = K_n + 1}^n \langle v_n, \phi_{n,\eps_n,k} \rangle_n \phi_{n,\eps_n,k}$. It is clear that $\langle \phi_{n,\eps_n,k} , \bar{v}_n \rangle_n = 0$ for $k > K_n$. We now verify that $(\mu_n,\bar{v}_n) \to (\mu,v)$ in $\TLp{2}(\Omega)$.
With $T_n$ the transport maps of Theorem \ref{thm:transport}, we estimate as follows: 
\begin{align}
    \int_\Omega \vert \bar{v}_n \circ T_n -v \vert^2 \, \dd \mu & \leq 2 \underbrace{\int_\Omega \vert v_n \circ T_n -v \vert^2 \, \dd \mu}_{=:T_1} + 2\int_\Omega \vert \bar{v}_n\circ T_n - v_n\circ T_n\vert^2 \, \dd \mu \notag \\ 
    & \leq 2T_1 + 2\sum_{i=1}^n \langle \bar{v}_n-v_n,\phi_{n,\eps_n,k}\rangle_{\Lp{2}(\mu_n)}^2 \notag \\
    &\leq 2 T_1 + \frac{2}{a_{n,\eps_n,K_n + 1}^2} \sum_{k=K_n + 1}^n a_{n,\eps_n,k}^2 \langle v_n , \phi_{n,\eps_n,k} \rangle_{\Lp{2}(\mu_n)}^2 \label{eq:trunc:estimate2} \\
    &\leq 2 T_1 + \frac{C}{a_{n,\eps_n,K_n + 1}^2} \label{eq:trunc:estimate3}
\end{align}
where we used the fact that 
the eigenvalues are ordered for \eqref{eq:trunc:estimate2} and \cite[Lemma 4.19]{weihs2023consistency} for  \eqref{eq:trunc:estimate3}. We know 
from \cite[Theorem 1.4]{GARCIATRILLOS2018239}
that $T_1 \to 0$ 
and from the proof of \cite[Theorem 2.2]{Stuart} that $a_{n,\eps_n,K_n + 1}^2 \to \infty$ which allows us to conclude that $(\mu_n,\bar{v}_n) \to (\mu,v)$ in $\TLp{2}(\Omega)$.

Since $v\in\Ck{\infty}_c$ then $v\in \Wkp{m}{2}$ for any $m\in\bbN$.
Choose $m\in\bbN$ with $m>\frac{s}{2}$ and let
$\delta > 0$ be such that $s + \delta = 2m$. 
As an intermediary step, let us compute: \begin{align}
    T_2 &:= \sum_{k=1}^{n} a_{n,\eps_n,k}^s \left\langle \phi_{n,\eps_n,k}, \sum_{j = K_n + 1}^n \langle v_n, \phi_{n,\eps_n,j} \rangle_{\Lp{2}(\mu_n)} \phi_{n,\eps_n,j} \right\rangle_{\Lp{2}(\mu_n)}^2 \notag \\
    &= \sum_{k=K_n + 1}^{n} a_{n,\eps_n,k}^s \langle \phi_{n,\eps_n,k}, v_n \rangle_{\Lp{2}(\mu_n)}^2 \notag \\
    &\leq \frac{1}{a_{n,\eps_n,{K_n+1}}^{\delta}} \sum_{k=K_n + 1}^{n} a_{n,\eps_n,k}^{s + \delta} \langle \phi_{n,\eps_n,k}, v_n \rangle_{\Lp{2}(\mu_n)}^2 \notag \\
    &\leq \frac{C}{a_{n,\eps_n,{K_n+1}}^{\delta}}. \notag 
\end{align}
Arguing as above, we obtain that $T_2 \to 0$. We conclude by estimating as follows: \begin{align}
    \limsup_{n \to \infty} \sqrt{\sum_{k=1}^{K_n} a_{n,\eps_n,k}^s \langle \phi_{n,\eps_n,k}, \bar{v}_n \rangle_{\Lp{2}(\mu_n)}^2} &\leq  \limsup_{n \to \infty} \sqrt{\sum_{k=1}^{n} a_{n,\eps_n,k}^s \langle \phi_{n,\eps_n,k}, \bar{v}_n \rangle_{\Lp{2}(\mu_n)}^2} \notag \\
    &\leq \limsup_{n \to \infty} \sqrt{\sum_{k=1}^{n} a_{n,\eps_n,k}^s \langle \phi_{n,\eps_n,k}, v_n \rangle_{\Lp{2}(\mu_n)}^2} +  \limsup_{n \to \infty} \sqrt{T_2} \label{eq:trunc:estimate5} \\
    &\leq \sqrt{\sum_{k=1}^{\infty} \beta_{k}^s \langle \psi_{k}, v \rangle_{\Lp{2}(\mu)}^2} \label{eq:trunc:estimate6}
\end{align}
where used \cite[Lemma 4.15]{weihs2023consistency} for \eqref{eq:trunc:estimate5}, \cite[Proposition 4.21]{weihs2023consistency} and the fact that $T_2 \to 0$ for \eqref{eq:trunc:estimate6}. Squaring the last inequality, we obtain the $\limsup$-inequality. 

Summarizing the above two results, we obtain that $(\cS\cJ)_{n,\{\eps_n\},K_n}^{(1,\{s\})}$ $\Gamma$-converges to $(\cS\cJ)_{\infty}^{(1,\{s\})}$. Since $\Psi$ is continuous in $\TLp{2}(\Omega)$, we use Proposition \ref{prop:additivity} to deduce that 
\[
(\cS\cJ)_{n,\{\eps_n\},\Psi,K_n}^{(1,\{s\})} \qquad \text{$\Gamma$-converges to} \qquad (\cS\cJ)_{\infty,\Psi}^{(1,\{s\})}.
\]

Let us now consider a sequence $(\mu_n,v_n)$ minimizing $(\cS\cJ)_{n,\{\eps_n\},\Psi,K_n}^{(1,\{s\})}$ with $\sup_{n \in \mathbb{N}} \Vert v_n \Vert_{\Lp{2}(\mu_n)} \leq C$. In particular, we note that $\langle \phi_{n,\eps_n,k}, v_n \rangle_n = 0$ for all $k > K_n$ and recall that $K_n \to \infty$. Therefore, we can apply the same proof as in \cite[Theorem 2.2]{Stuart} to show that there exists a converging subsequence in $\TLp{2}(\Omega)$. 

Specifically, $\sup_{n \in \mathbb{N}} \Vert v_n \Vert_{\Lp{2}(\mu_n)} \leq C$ implies that $ \sup_{n \in \mathbb{N}} \sum_{k=1}^{K_n} \langle v_n, \phi_{n,\eps_n,k} \rangle_{\Lp{2}(\mu_n)}^2 \leq C.$ Hence, by a diagonal procedure, we can find a sequence $n_m \to \infty$ such that for every $k$, $\langle v_{n_m}, \psi_{n_m,\eps_{n_m},k} \rangle_{\Lp{2}(\mu_{n_m})}$ converges to some coefficient $\gamma_k$. By Fatou's lemma, $\sum_{k=1}^\infty \vert \gamma_k \vert^2 \leq \liminf_{m \to \infty} \sum_{k=1}^{n_m} \vert \langle v_{n_m}, \phi_{n_m,\eps_{n_m},k} \rangle_{\Lp{2}(\mu_{n_m})}  \vert^2 \leq C$, so we can define $v = \sum_{k=1}^\infty \gamma_k \psi_k \in \Lp{2}(\mu)$. Using \cite[Lemma 7.7]{Stuart}, we obtain a sequence $R_{n_m} \to \infty$ such that $\sum_{k=1}^{R_{n_m}} \langle v_{n_m}, \phi_{n_m,\eps_{n_m},k} \rangle_{\Lp{2}(\mu_{n_m})} \phi_{n_m,\eps_{n_m},k} \to v$ in $\TLp{2}(\Omega)$. We note that $R_{n_m}$ can always be picked such that $R_{n_m} \leq K_{n_m}$. Indeed, the $\TLp{2}$-convergence resulting from \cite[Lemma 7.7]{Stuart} holds for any sequence converging to $\infty$ and majorized by $R_{n_m}$: therefore, we can always pick $\tilde{R}_{n_m} = \min\{R_{n_m}, K_{n_m}\}$. 

Then, we check the convergence in $\TLp{2}$ of $v_{n_m}$ to $v$:  
\begin{align*}
    \Vert v_{n_m} \circ T_{n_m} - v \Vert_{\Lp{2}(\mu)} &\leq \Vert v_{n_m} - \sum_{k=1}^{R_{n_m}} \langle v_{n_m}, \phi_{n_m,\eps_{n_m},k} \rangle_{\Lp{2}(\mu_{n_m})} \phi_{n_m,\eps_{n_m},k} \Vert_{\Lp{2}(\mu_n)} \\
    &\qquad \qquad+\Vert \sum_{k=1}^{R_{n_m}} \langle v_{n_m}, \phi_{n_m,\eps_{n_m},k} \rangle_{\Lp{2}(\mu_{n_m})} \phi_{n_m,\eps_{n_m},k} \circ T_{n_m} - v \Vert_{\Lp{2}(\mu)} \\
    &\leq \frac{1}{a_{n_m,\eps_{n_m},R_{n_m}}^s} \sum_{k=R_{n_m}+1}^{K_{n_m}} a_{n_m,\eps_{n_m},k}^s \langle v_{n_m}, \phi_{n_m,\eps_{n_m},k} \rangle_{\Lp{2}(\mu_{n_m})}^2 \phi_{n_m,\eps_{n_m},k} \\
    &\qquad \qquad+\Vert \sum_{k=1}^{R_{n_m}} \langle v_{n_m}, \phi_{n_m,\eps_{n_m},k} \rangle_{\Lp{2}(\mu_{n_m})} \phi_{n_m,\eps_{n_m},k} \circ T_{n_m} - v \Vert_{\Lp{2}(\mu)} \\
    &\leq \frac{C}{a_{n_m,\eps_{n_m},R_{n_m}}^s} + \Vert \sum_{k=1}^{R_{n_m}} \langle v_{n_m}, \phi_{n_m,\eps_{n_m}k} \rangle_{\Lp{2}(\mu_{n_m})} \phi_{n_m,\eps_{n_m},k} \circ T_{n_m} - v \Vert_{\Lp{2}(\mu)}
\end{align*} 
where the last inequality follows from the fact that $\sup_{n \in \mathbb{N}} \sum_{k=1}^{K_n} a_{n,\eps_n,k}^s \langle v_n, \phi_{n,\eps_n,k} \rangle_{\Lp{2}(\mu_n)}^2 \leq C$ since $(\mu_n,v_n)$ are minimizers of $(\cS\cJ)_{n,\{\eps_n\},\Psi,K_n}$ (see also \cite[Lemma 4.25]{weihs2023consistency}).
In order to conclude that $v_{n_m}$ to $v$ in $\TLp{2}(\Omega)$, we note the following two facts: the first term in the last inequality tends to $0$ as argued in \cite[Theorem 2]{Stuart}; the second term tends to 0 since $\sum_{k=1}^{R_{n_m}} \langle v_{n_m}, \psi_{n_m,k} \rangle_{\Lp{2}(\mu_{n_m})} \psi_{n_m,k} \to v$ in $\TLp{2}(\Omega)$. By Proposition \ref{prop:Back:Gamma:minimizers}, we know that the limiting point $v$ is a minimizer of $(\cS\cJ)_{\infty,\Psi}^{(1,\{s\})}$.
\end{proof}

\subsubsection{Spectral convergence of \texorpdfstring{$\mathcal{L}_n^{(q)}$}{Lnq}}

In this section, we analyze the spectral convergence of $\mathcal{L}_n^{(q)}$ and, by combining with the results of the previous section, prove Theorem \ref{thm:truncatedEnergies}.

\begin{lemma}[Eigenpairs of $\mathcal{L}^{(q)}$] \label{lem:eigenpairs}
    Assume that \ref{ass:Main:Ass:S2}, \ref{ass:Main:Ass:M1}, \ref{ass:Main:Ass:M2}, \ref{ass:Main:Ass:W2} and \ref{ass:Main:Ass:D1} hold. Let $q \geq 1$, $P = \{p_k\}_{k=1}^q \subseteq \bbR$ with $p_1 \leq \cdots \leq p_q$ and $E_n = \{\eps_n^{(k)}\}_{k=1}^q$ with $\eps_n^{(1)} > \cdots > \eps_n^{(q)}$. Assume that $\rho \in \Ck{\infty}$. The eigenpairs of $\mathcal{L}^{(q)}$ are $
    \left\{\l \sum_{k=1}^q \lambda_k \beta_i^{p_k},\psi_i \r\right\}_{i=1}^\infty.$
\end{lemma}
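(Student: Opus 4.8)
The plan is to read off the eigenpairs of $\mathcal{L}^{(q)}=\sum_{k=1}^q\lambda_k\Delta_\rho^{p_k}$ directly from the spectral decomposition of $\Delta_\rho$, using that each power $\Delta_\rho^{p_k}$ is the operator obtained by applying the Borel functional calculus to $\Delta_\rho$. Recall from Section~\ref{subsec:HOHL} that $\Delta_\rho$ is a self-adjoint operator on $\Lp{2}(\mu)$ with discrete spectrum $0=\beta_1<\beta_2\le\beta_3\le\cdots$ and that its eigenfunctions $\{\psi_i\}_{i=1}^\infty$ form an orthonormal basis of $\Lp{2}(\mu)$; in particular every $v\in\Lp{2}(\mu)$ expands as $v=\sum_{i=1}^\infty\langle v,\psi_i\rangle_{\Lp{2}(\mu)}\psi_i$. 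Consistently with the definition of $\cH^s(\Omega)$ in~\eqref{eq:Main:Not:Setting:H}, for each $k$ the operator $\Delta_\rho^{p_k}$ acts by
\[
\Delta_\rho^{p_k}v=\sum_{i=1}^\infty\beta_i^{p_k}\langle v,\psi_i\rangle_{\Lp{2}(\mu)}\psi_i
\]
on its natural domain $\bigl\{v\in\Lp{2}(\mu):\sum_i\beta_i^{2p_k}\langle v,\psi_i\rangle_{\Lp{2}(\mu)}^2<\infty\bigr\}$; in particular $\Delta_\rho^{p_k}\psi_i=\beta_i^{p_k}\psi_i$ for every $i$.

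Next I would sum over $k$: for any $v$ in the common domain $\mathrm{dom}(\mathcal{L}^{(q)})=\bigcap_{k=1}^q\mathrm{dom}(\Delta_\rho^{p_k})$,
\[
\mathcal{L}^{(q)}v=\sum_{k=1}^q\lambda_k\sum_{i=1}^\infty\beta_i^{p_k}\langle v,\psi_i\rangle_{\Lp{2}(\mu)}\psi_i=\sum_{i=1}^\infty\Bigl(\sum_{k=1}^q\lambda_k\beta_i^{p_k}\Bigr)\langle v,\psi_i\rangle_{\Lp{2}(\mu)}\psi_i.
\]
Evaluating at $v=\psi_j$ gives $\mathcal{L}^{(q)}\psi_j=\bigl(\sum_{k=1}^q\lambda_k\beta_j^{p_k}\bigr)\psi_j$, so each pair $\bigl(\sum_{k=1}^q\lambda_k\beta_i^{p_k},\psi_i\bigr)$ is an eigenpair of $\mathcal{L}^{(q)}$.

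To see that the list is complete, note that since $\{\psi_i\}_{i=1}^\infty$ is an orthonormal basis of $\Lp{2}(\mu)$ and, by the displayed identity, $\mathcal{L}^{(q)}$ acts diagonally in this basis, any eigenfunction $\phi$ of $\mathcal{L}^{(q)}$ has all its nonzero coordinates $\langle\phi,\psi_i\rangle_{\Lp{2}(\mu)}$ supported on indices $i$ along which $\sum_{k=1}^q\lambda_k\beta_i^{p_k}$ takes one common value, which is then the associated eigenvalue. Hence every eigenvalue of $\mathcal{L}^{(q)}$ is of the form $\sum_{k=1}^q\lambda_k\beta_i^{p_k}$, and $\bigl\{\bigl(\sum_{k=1}^q\lambda_k\beta_i^{p_k},\psi_i\bigr)\bigr\}_{i=1}^\infty$ is exactly the set of eigenpairs of $\mathcal{L}^{(q)}$. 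I do not anticipate any genuine obstacle here; the only point requiring a little care is the bookkeeping of operator domains when the exponents $p_k$ are not integers, but since each $\Delta_\rho^{p_k}$ is a function of the single self-adjoint operator $\Delta_\rho$, writing $\mathcal{L}^{(q)}=f(\Delta_\rho)$ with $f(t)=\sum_{k=1}^q\lambda_k t^{p_k}$ renders both the sum and its diagonal action unambiguous.
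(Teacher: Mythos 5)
Your proposal is correct and follows essentially the same route as the paper: expand in the orthonormal eigenbasis $\{\psi_i\}$ of $\Delta_\rho$, observe that each $\Delta_\rho^{p_k}$ (and hence $\mathcal{L}^{(q)}$) acts diagonally, and read off the eigenvalues. The functional-calculus framing $\mathcal{L}^{(q)}=f(\Delta_\rho)$ is a slightly cleaner way to make $\Delta_\rho^{p_k}$ unambiguous for $p_k\in\mathbb{R}$, and your completeness argument (any eigenfunction has coordinates supported on indices sharing a common value $\sum_k\lambda_k\beta_i^{p_k}$) handles eigenvalue multiplicity more carefully than the paper's case analysis, which at one point asserts $\beta_i\neq\beta_j$ for $i\neq j$ and concludes $\psi=\pm\psi_j$ even though the $\beta_i$ may repeat.
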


\begin{proof}
    First, we see that \[
    \mathcal{L}^{(q)}\psi_i = \sum_{k=1}^{q} \lambda_k \Delta_{\rho}^{p_k} \psi_i = \sum_{k=1}^q \lambda_k \beta_i^{p_k} \psi_i.
    \]
    This implies that $\l \sum_{k=1}^q \lambda_k \beta_i^{p_k},\psi_i \r$ is an eigenpair of $\mathcal{L}^{(q)}$. 
    We now consider two cases.

    \paragraph{Case 1.} Assume $(\sum_{k=1}^q \lambda_k \beta_j^{p_k},\psi)$ is an eigenpair of $\cL^{(q)}$. 
    Then,
    \begin{align*}
        \mathcal{L}^{(q)} \psi &= \mathcal{L}^{(q)} \sum_{i=1}^\infty \langle \psi_i, \psi \rangle_{\Lp{2}(\mu)} \psi_i \\
        &= \sum_{i=1}^\infty \langle \psi_i, \psi \rangle_{\Lp{2}(\mu)} \l \sum_{k=1}^q \lambda_k \beta_i^{p_k} \r \psi_i 
    \end{align*}
    and
    \begin{align*}
        \mathcal{L}^{(q)} \psi &= \sum_{i=1}^\infty \langle \psi_i, \psi \rangle_{\Lp{2}(\mu)} \l \sum_{k=1}^q \lambda_k \beta_j^{p_k} \r \psi_i.
        \end{align*}
        Hence, 
        \[
        \sum_{i=1}^\infty \langle \psi_i, \psi \rangle_{\Lp{2}(\mu)} \psi_i \l  \sum_{k=1}^q \lambda_k \beta_i^{p_k} -  \sum_{k=1}^q \lambda_k \beta_j^{p_k}  \r =0. 
        \]
        Since, $\{\psi_i\}_{i=1}^\infty$ are linearly independent then $\langle \psi_i, \psi \rangle_{\Lp{2}(\mu)} \l  \sum_{k=1}^q \lambda_k \beta_i^{p_k} -  \sum_{k=1}^q \lambda_k \beta_j^{p_k}  \r = 0$ for all $i\in\bbN$.
        Hence for $i\neq j$ (since $\beta_i\neq\beta_j$) we have $\langle \psi_i,\psi\rangle_{\Lp{2}(\mu)} = 0$.
        As $\|\psi\|_{\Lp{2}(\mu)}=1$ (assuming we normalised) then $\psi=\pm \psi_j$.
    \paragraph{Case 2.} Assume $(\beta,\psi)$ is an eigenpair of $\cL^{(q)}$. 
    Then an analogous calculation to the one above implies
    \[ \sum_{i=1}^\infty \langle \psi_i,\psi\rangle_{\Lp{2}(\mu)} \l \beta - \sum_{k=1}^q \lambda_k \beta_i^{p_k}\r \psi_i = 0. \]
    Again, as $\{\psi_i\}_{i=1}^\infty$ are linearly independent then $\langle \psi_i, \psi \rangle_{\Lp{2}(\mu)} \l  \beta -  \sum_{k=1}^q \lambda_k \beta_i^{p_k}  \r = 0$ for all $i\in\bbN$.
    Since $\psi\neq 0$, then at least one $\langle\psi_i,\psi\rangle_{\Lp{2}(\mu)} \neq 0$.
    For this $i$ we then must have $\beta =  \sum_{k=1}^q \lambda_k \beta_i^{p_k}$ and so we are back in Case 1.
\end{proof}

\begin{proposition}[Convergence of eigenpairs] \label{prop:convergenceEigenpairs}
Assume that \ref{ass:Main:Ass:S2}, \ref{ass:Main:Ass:M1}, \ref{ass:Main:Ass:M2}, \ref{ass:Main:Ass:W2} and \ref{ass:Main:Ass:D1} hold. Let $q \geq 1$, $P = \{p_k\}_{k=1}^q \subseteq \bbR$ with $p_1 \leq \cdots \leq p_q$ and $ E_n = \{\eps_n^{(k)}\}_{k=1}^q$ with $\eps_n^{(1)} > \cdots > \eps_n^{(q)}$. Assume that $\rho \in \Ck{\infty}$. Assume that $\eps_n^{(q)}$ satisfies \ref{ass:Main:Ass:L2}. Then, $\bbP$-a.s., the following holds:\begin{enumerate}
    \item $\beta_{n,i} \to \sum_{k=1}^q \lambda_k \beta_i^{p_k}$;    \item $(\mu_n,\psi_{n,i}) \to (\mu,\psi_i)$ in $\TLp{2}(\Omega)$.
\end{enumerate}
\end{proposition}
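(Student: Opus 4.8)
The plan is to recast the statement as a spectral-convergence result for the quadratic forms attached to $\mathcal{L}_n^{(q)}$ and $\mathcal{L}^{(q)}$, thereby reducing it to the single-scale machinery already used for $q=1$. I would introduce, for $(\nu,v)\in\TLp{2}(\Omega)$, the functionals
\[
Q_n((\nu,v)) = \begin{cases} \langle v, \mathcal{L}_n^{(q)} v\rangle_{\Lp{2}(\mu_n)}, & \nu = \mu_n,\\ +\infty, & \text{else,}\end{cases}
\qquad
Q_\infty((\nu,v)) = \begin{cases} \langle v, \mathcal{L}^{(q)} v\rangle_{\Lp{2}(\mu)}, & \nu = \mu,\\ +\infty, & \text{else,}\end{cases}
\]
and prove that $Q_n$ $\Gamma$-converges to $Q_\infty$ in $\TLp{2}(\Omega)$ and that $\{Q_n\}_n$ has the compactness property. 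Granted these, convergence of the eigenvalues and of the transported eigenvectors follows from the same spectral argument as in \cite[Theorem 2.2]{Stuart} that underlies the proof of Proposition~\ref{prop:truncated}; to read off the limit one invokes Lemma~\ref{lem:eigenpairs}, which identifies the $i$-th eigenvalue of $\mathcal{L}^{(q)}$ as $\sum_{k=1}^q \lambda_k\beta_i^{p_k}$ with eigenfunction $\psi_i$ — consistent with the indexing of $\{\psi_i\}$ since $\beta_i$ is nondecreasing and all $p_k,\lambda_k>0$, so $i\mapsto\sum_k\lambda_k\beta_i^{p_k}$ is nondecreasing and multiplicities are preserved.

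The observation that makes the sum tractable is that every scale $\eps_n^{(k)}$, $1\le k\le q$, inherits \ref{ass:Main:Ass:L2} from $\eps_n^{(q)}$, because $\eps_n^{(k)}\ge\eps_n^{(q)}$ and $\eps\mapsto\log(n)/(n\eps^{d+4})$ is decreasing. Hence, for each $k$, the single-operator results of \cite{GARCIATRILLOS2018239,Stuart,weihs2023consistency} apply at scale $\eps_n^{(k)}$: $a_{n,\eps_n^{(k)},i}\to\beta_i$, $(\mu_n,\phi_{n,\eps_n^{(k)},i})\to(\mu,\psi_i)$ in $\TLp{2}(\Omega)$, the energy $v\mapsto\langle v,\Delta_{n,\eps_n^{(k)}}^{p_k}v\rangle_{\Lp{2}(\mu_n)}$ $\Gamma$-converges to $v\mapsto\langle v,\Delta_\rho^{p_k}v\rangle_{\Lp{2}(\mu)}$, and it has the compactness property. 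The $\liminf$ inequality for $Q_n$ is then immediate from superadditivity of $\liminf$ applied termwise. For the $\limsup$ inequality I would, by density of $\Ckc{\infty}(\Omega)$ in $\Wkp{p_q}{2}(\Omega)$ together with \cite[Remark 2.7]{Trillos3}, restrict to $v\in\Ckc{\infty}(\Omega)$ and use the \emph{single} recovery sequence $v_n:=v|_{\Omega_n}$, which is simultaneously admissible for all $q$ terms precisely because recovery for the power-Laplacian energies on a fixed smooth function is a pointwise/consistency statement (as in \cite[Theorem 2.8]{trillos2022rates}), not a scale-dependent spectral construction; this gives $\limsup_n Q_n((\mu_n,v_n))\le\sum_k\lambda_k\langle v,\Delta_\rho^{p_k}v\rangle_{\Lp{2}(\mu)}=Q_\infty((\mu,v))$. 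Compactness of $\{Q_n\}$ follows since each summand is nonnegative ($\Delta_{n,\eps_n^{(k)}}^{p_k}$ is positive semi-definite), so $Q_n((\mu_n,v_n))\ge\lambda_1\langle v_n,\Delta_{n,\eps_n^{(1)}}^{p_1}v_n\rangle_{\Lp{2}(\mu_n)}$, and a bound on $\sup_n Q_n$ together with $\sup_n\|v_n\|_{\Lp{2}(\mu_n)}<\infty$ places us in the hypotheses of the compactness property for the single power-Laplacian energy at scale $\eps_n^{(1)}$.

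With $\Gamma$-convergence and compactness in hand, I would run the spectral argument of \cite[Theorem 2.2]{Stuart} with $(\mathcal{L}_n^{(q)},\mathcal{L}^{(q)})$ replacing $(\Delta_{n,\eps_n}^{s},\Delta_\rho^{s})$ to conclude $\beta_{n,i}\to\sum_k\lambda_k\beta_i^{p_k}$ and $(\mu_n,\psi_{n,i})\to(\mu,\psi_i)$ in $\TLp{2}(\Omega)$, $\bbP$-a.s., the exceptional null set being the union of the finitely many exceptional sets from the single-scale results. I expect the main obstacle to be two bookkeeping points: first, verifying that a single recovery sequence serves all $q$ terms at once, which works only because recovery is achieved by restricting fixed smooth functions rather than by the scale-dependent spectral truncation used in Proposition~\ref{prop:truncated}; and second, handling multiplicities of the $\beta_i$, where one must argue as in \cite{Stuart} that the convergent subsequences of $\{\psi_{n,i}\}_n$ land in the correct eigenspace of $\mathcal{L}^{(q)}$. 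A secondary check is that \ref{ass:Main:Ass:L2}, phrased with exponent $d+4$, indeed triggers the cited single-scale convergence for each power $p_k$; this holds because those results require only the connectivity-type scaling already encoded in \ref{ass:Main:Ass:L2}.
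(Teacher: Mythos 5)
Your proposal follows the same three-ingredient strategy as the paper's proof (self-adjointness and positive semi-definiteness, $\Gamma$-convergence of the quadratic forms, compactness), followed by a Courant--Fischer induction to extract eigenpair convergence. The only substantive difference is how you obtain the $\Gamma$-convergence: you rebuild it termwise (superadditivity of $\liminf$; restriction recovery sequence for $v\in\Ckc{\infty}$ plus density), whereas the paper simply cites the already-proven $\Gamma$-convergence from \cite[Theorem~3.5]{weihs2025Hypergraphs}. Your termwise route is valid and has the virtue of being more self-contained; what it requires, and what you correctly flag, is that the single recovery sequence $v|_{\Omega_n}$ be simultaneously admissible for every scale, which holds because the $\limsup$ inequality for smooth test functions rests on consistency of $\Delta_{n,\eps}^{p_k}$ applied to a fixed $\Ck{\infty}$ function rather than on a scale-specific spectral construction. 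Two small corrections: the Courant--Fischer template you should invoke is \cite[Theorem~1.2]{GARCIATRILLOS2018239} (eigenpair convergence for a single Laplacian), not \cite[Theorem~2.2]{Stuart}, which is the $\Gamma$-convergence/compactness statement; and your sketch compresses the eigenvector part of the induction considerably — the paper carries out a full argument with projections $\mathrm{Proj}_{n,i}\to\mathrm{Proj}_i$ onto eigenspaces to resolve the multiplicity issue you anticipate, rather than arguing from convergent subsequences alone. Your observation that $i\mapsto\sum_k\lambda_k\beta_i^{p_k}$ is nondecreasing, so the eigenvalue ordering is preserved, matches Lemma~\ref{lem:eigenpairs} and is indeed essential for the indexing to come out right.
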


\begin{proof}
In the proof $C>0$ will denote a constant that can be arbitrarily large, is independent of $n$ and that may change from line to line.

    In order to prove the proposition, we want to proceed as in \cite[Theorem 1.2]{GARCIATRILLOS2018239} where the authors show the analogous result for a single Laplacian matrix $\Delta_{n,\eps_n}$. In particular, the proof relies on the following results: \begin{enumerate}
        \item $\Delta_{n,\eps_n}$ and $\Delta_\rho$ are self-adjoint and positive semi-definite;
        \item the functional $\langle v, \Delta_{n,\eps_n} v \rangle_{\Lp{2}(\mu_n)}$ $\Gamma$-converges to $ \langle v, \Delta_{\rho} v \rangle_{\Lp{2}(\mu)}$ \cite[Theorem 1.4]{GARCIATRILLOS2018239};
        \item if a sequence satisfies $\sup_n \langle v, \Delta_{n,\eps_n} v \rangle_{\Lp{2}(\mu_n)} \leq C$ and $\Vert v_n \Vert_{\Lp{2}(\mu_n)} \leq C$, then there exists a converging subsequence in $\TLp{2}(\Omega)$ \cite[Theorem 1.4]{GARCIATRILLOS2018239}.
    \end{enumerate}
    Our Laplacian $\cL^{(q)}_n$ satisfies the same three properties: \begin{enumerate}
        \item Since each $\Delta_{n,\eps_n^{(k)}}$ is self-adjoint and positive semi-definite, so is $\mathcal{L}_n^{q} = \sum_{k=1}^q \lambda_k \Delta_{n,\eps_n^{(k)}}$. The same argument applies to $\mathcal{L}^{(q)}$.  
        \item The fact that $\langle v, \mathcal{L}_n^{(q)} v \rangle_{\Lp{2}(\mu_n)}$ $\Gamma$-converges to $ \langle v, \mathcal{L}^{(q)} v \rangle_{\Lp{2}(\mu)}$ was shown in the proof of \cite[Theorem 3.5]{weihs2025Hypergraphs}.
        \item If we assume that a sequence satisfies $\sup_n \langle v_n, \mathcal{L}_n^{(q)} v_n \rangle_{\Lp{2}(\mu_n)} \leq C$ and $\Vert v_n \Vert_{\Lp{2}(\mu_n)} \leq C$, then in particular $\sup_n \langle v_n, \Delta_{n,\eps_n^{(q)}}^{p_1} v_n \rangle_{\Lp{2}(\mu_n)} \leq C$ and $\Vert v_n \Vert_{\Lp{2}(\mu_n)} \leq C$: we can therefore use \cite[Theorem 2]{Stuart} to deduce the existence of a converging subsequence in $\TLp{2}(\Omega)$.
    \end{enumerate}


Specifically, let us start with the eigenvalues. First, we recall that since $\mathcal{L}_n^{(q)}$ is self-adjoint and positive semi-definite, we can apply the Courant-Fisher characterization of eigenvalues \cite[Max-min theorem]{Chavel} to infer that \begin{equation} \label{eq:convergenceEigenpairs:discreteCourantFisher}
\beta_{n,i} = \sup_{S \in \Sigma_{n,i-1}} \,  \min_{ v \in S^{\perp}, \, \Vert v \Vert_{\Lp{2}(\mu_n)} =1 } \langle \mathcal{L}_n^{(q)} v,v \rangle_{\Lp{2}(\mu_n)}
\end{equation}
where $\Sigma_{n,i-1}$ denotes the subspaces of $\bbR^n$ of dimension $i-1$ and $S^{\perp}$ denotes the orthogonal complement of $S$ with respect to the inner product in $\Lp{2}(\mu_n)$.  We now proceed by induction on \( i \).

\paragraph{Base case $i = 1$.}

We first observe that the graphs \( (\Omega_n, W_{n,\varepsilon_n^{(k)}}) \) are connected \( \mathbb{P} \)-a.e. for \( n \) large enough. This follows from the ordering \( \varepsilon_n^{(1)} > \cdots > \varepsilon_n^{(q)} \) and from Assumption~\eqref{ass:Main:Ass:L2}, which guarantees connectivity in the random geometric graph regime~\cite{goel,DBLP:books/ox/P2003}. Consequently, for all sufficiently large \( n \), the first eigenpair \( (0, \mathbf{1}) \), where \( \mathbf{1} \in \mathbb{R}^n \) is the constant-one vector, is shared across all Laplacians \( \Delta_{n,\varepsilon_n^{(k)}} \). Thus, the first eigenpair of the discrete operator \( \mathcal{L}_n^{(q)} \) is given by \( (\beta_{n,1}, \psi_{n,1}) = (0, \mathbf{1}) \).

Furthermore, since the domain \( \Omega \) is connected by Assumption \ref{ass:Main:Ass:S2}, the continuum Laplacian \( \Delta_\rho \) has first eigenpair \( (0, \one) \), where \( \one \) denotes the constant function equal to one. By Lemma~\ref{lem:eigenpairs}, the first eigenpair of the continuum limit operator \( \mathcal{L}^{(q)} \) is
\[
\left( \sum_{k=1}^q \lambda_k \beta_1^{p_k}, \psi_1 \right) = (0, \one).
\]
It follows that \( \beta_{n,1} \to \beta_1 = 0 \) and \( (\mu_n, \psi_{n,1}) \to (\mu, \psi_1) \) in \( \TLp{2}(\Omega) \) is satisfied.

\paragraph{Induction step.} Now, suppose that $\beta_{n,\ell} \to \beta_\ell$ for all $\ell \leq i-1$.

\noindent \textit{Proof of the lower bound.} \hspace{0.2cm} Let \( S \in \Sigma_{i-1} \), where $\Sigma_{i-1}$ denotes the subspaces of $\Lp{2}(\Omega)$ of dimension $i-1$. In this case, we will also write $S^{\perp}$ for the orthogonal complement of $S$ with respect to the inner product in $\Lp{2}(\mu)$. Let \( \{v_1, \dots, v_{i-1}\} \) be an orthonormal basis of \( S \). For each \( \ell = 1, \dots, i-1 \), the $\limsup$-inequality in \cite[Theorem 3.5]{weihs2025Hypergraphs} ensures the existence of a sequence of functions $v_{n,\ell} \in \Lp{2}(\mu_n)$  such that \( (\mu_n,v_{n,\ell}) \to (\mu,v_\ell) \) in $\TLp{2}(\Omega)$ as \( n \to \infty \). By \cite[Proposition 2.6]{GARCIATRILLOS2018239}, we have for all \( 1 \leq \ell \leq i-1 \),
\[
\lim_{n \to \infty} \|v_{n,\ell}\|_{\Lp{2}(\mu_n)} = \|v_\ell\|_{\Lp{2}(\mu)} = 1,
\]
and for all \( \ell \neq j \),
\begin{equation} \label{eq:convergenceEigenpairs:discreteOrthogonality}
    \lim_{n \to \infty} \langle v_{n,\ell}, v_{m,j} \rangle_{\Lp{2}(\mu_n)} = \langle v_\ell, v_j \rangle_{\Lp{2}(\mu)} = 0.
\end{equation}
These results guarantee that for sufficiently large \( n \), the set \( \{v_{n,1}, \dots, v_{n,i-1}\} \) spans a \( (i-1) \)-dimensional subspace of \( \Lp{2}(\mu_n) \). We can then apply the Gram–Schmidt orthonormalization process to obtain an orthonormal basis \( \{\tilde{v}_{n,1}, \dots, \tilde{v}_{n,i-1}\} \). Namely, we define
\[
\tilde{v}_{n,1} := \frac{v_{n,1}}{\|v_{n,1}\|_{\Lp{2}(\mu_n)}},
\]
and recursively for \( \ell = 2, \dots, i-1 \),
\[
\tilde{h}_{n,\ell} := v_{n,\ell} - \sum_{j=1}^{i-1} \langle v_{n,\ell}, \tilde{v}_{n,j} \rangle_{\Lp{2}(\mu_n)} \tilde{v}_{n,j},
\qquad
\tilde{v}_{n,\ell} := \frac{\tilde{h}_{n,\ell}}{\|\tilde{h}_{n,\ell}\|_{\Lp{2}(\mu_n)}}.
\]
By \eqref{eq:convergenceEigenpairs:discreteOrthogonality} and \cite[Proposition 2.6]{GARCIATRILLOS2018239}, it is straight-forward to check that $\tilde{v}_{n,\ell} \to v_\ell$ in $\TLp{2}(\Omega)$ for $1\leq \ell \leq i-1$. Let $S_n \in \Sigma_{n,i-1}$ be the subset spanned by \( \{\tilde{v}_{n,1}, \dots, \tilde{v}_{n,i-1}\} \).

We now want to show that \begin{equation} \label{eq:convergenceEigenpairs:liminfIdentity}
    \liminf_{n \to \infty} \beta_{n,i} \geq \min_{v \in S , \, \Vert v \Vert_{\Lp{2}(\mu)} = 1} \langle v, \mathcal{L}^{(q)} v \rangle_{\Lp{2}(\mu)}. 
\end{equation}
First, by \eqref{eq:convergenceEigenpairs:discreteCourantFisher}, since \(
\beta_{n,i} \geq \min_{ v \in S_n^{\perp}, \, \Vert v \vert_{\Lp{2}(\mu_n)} =1 } \langle \mathcal{L}_n^{(q)} v,v \rangle_{\Lp{2}(\mu_n)}
\), if $$\liminf_{n \to \infty} \min_{ v \in S_n^{\perp}, \, \Vert v \Vert_{\Lp{2}(\mu_n)} =1 } \langle \mathcal{L}_n^{(q)} v,v \rangle_{\Lp{2}(\mu_n)} = \infty,$$ then \eqref{eq:convergenceEigenpairs:liminfIdentity} is trivially satisfied. We therefore consider the case when 
$$\liminf_{n \to \infty} \min_{ v \in S_n^{\perp}, \, \Vert v \Vert_{\Lp{2}(\mu_n)} =1 } \langle \mathcal{L}_n^{(q)} v,v \rangle_{\Lp{2}(\mu_n)} < \infty$$ 
and, without loss of generality (see \cite{GARCIATRILLOS2018239,weihs2023consistency}), we can assume that \[
\liminf_{n \to \infty} \min_{ v \in S_n^{\perp}, \, \Vert v \Vert_{\Lp{2}(\mu_n)} =1 } \langle \mathcal{L}_n^{(q)} v,v \rangle_{\Lp{2}(\mu_n)} = \lim_{n \to \infty} \min_{ v \in S_n^{\perp}, \, \Vert v \Vert_{\Lp{2}(\mu_n)} =1 } \langle \mathcal{L}_n^{(q)} v,v \rangle_{\Lp{2}(\mu_n)} < \infty.
\]

Let $w_n \in S_n^\perp$ be a sequence such that $\Vert w_n \Vert_{\Lp{2}(\mu_n)} = 1$ and $$\lim_{n \to \infty} \langle \mathcal{L}_n^{(q)} w_n, w_n \rangle_{\Lp{2}(\mu_n)} = \lim_{n \to \infty} \min_{ v \in S_n^{\perp}, \, \Vert v \Vert_{\Lp{2}(\mu_n)} } \langle \mathcal{L}_n^{(q)} v,v \rangle_{\Lp{2}(\mu_n)} < \infty.$$ Since $\lim_{n \to \infty} \langle \mathcal{L}_n^{(q)} w_n, w_n \rangle_{\Lp{2}(\mu_n)} < \infty$, we have that $\sup_{n}  \langle \mathcal{L}_n^{(q)} w_n, w_n \rangle_{\Lp{2}(\mu_n)} < \infty$ and, in particular, $$\sup_n  \langle \Delta_{n,\eps_n^{(1)}}^{p_1} w_n, w_n \rangle_{\Lp{2}(\mu_n)} < \infty.$$ By \cite[Theorem 2]{Stuart}, we therefore obtain a converging subsequence $(\mu_{n_m},w_{n_m}) \to (\mu,w)$ in $\TLp{2}(\Omega)$. By \cite[Proposition 2.6]{GARCIATRILLOS2018239}, we deduce that $\Vert w \Vert_{\Lp{2}(\mu)} = \lim_{m\to \infty} \Vert w_{n_m} \Vert_{\Lp{2}(\mu_{n_m})} = 1$. Furthermore, since $w_{n_m} \in S_{n_m}^{\perp}$ and $\tilde{v}_{n_m,\ell} \to v_\ell$, we also have $\langle w,v_\ell \rangle_{\Lp{2}(\mu)} = \lim_{m\to \infty} \langle w_{n_m}, \tilde{v}_{n_m,\ell} \rangle_{\Lp{2}(\mu_{n_m})} = 0$ for $1 \leq \ell \leq i-1$, which implies that $w \in S^\perp$. Combining the latter facts about $w$, we estimate as follows: \begin{align}
    \min_{v \in S^\perp , \, \Vert v \Vert_{\Lp{2}(\mu)} = 1} \langle v, \mathcal{L}^{(q)} v \rangle_{\Lp{2}(\mu)} &\leq \langle w, \mathcal{L}^{(q)} w \rangle_{\Lp{2}(\mu)} \notag \\
    &\leq \liminf_{m \to \infty} \langle w_{n_m}, \mathcal{L}^{(q)}_{n_m} w_{n_m} \rangle_{\Lp{2}(\mu_{n_m})} \label{eq:convergenceEigenpairs:liminfInequality} \\
    & = \lim_{n \to \infty} \min_{ v \in S_n^{\perp}, \, \Vert v \Vert_{\Lp{2}(\mu_n)} =1 } \langle \mathcal{L}_n^{(q)} v,v \rangle_{\Lp{2}(\mu_n)} \notag \\
    &\leq \liminf_{n \to \infty} \sup_{\bar{S} \in \Sigma_{n,i-1}} \,  \min_{ v \in \bar{S}^{\perp}, \, \Vert v \Vert_{\Lp{2}(\mu_n)} =1 } \langle \mathcal{L}_n^{(q)} v,v \rangle_{\Lp{2}(\mu_n)} \notag\\
    &= \liminf_{n \to \infty} \beta_{n,i} \label{eq:convergenceEigenpairs:COurantFisher2}
\end{align}
where we used the $\liminf$-inequality of \cite[Theorem 3.5]{weihs2025Hypergraphs} for \eqref{eq:convergenceEigenpairs:liminfInequality} and \eqref{eq:convergenceEigenpairs:discreteCourantFisher} for \eqref{eq:convergenceEigenpairs:COurantFisher2}. Finally, taking the supremum of all $S \in \Sigma_{i-1}$ in \eqref{eq:convergenceEigenpairs:liminfIdentity}, applying the Courant-Fisher characterization to the self-adjoint and positive semi-definite operator $\mathcal{L}^{(q)}$ and using Lemma \ref{lem:eigenpairs}, we obtain \begin{equation} \label{eq:convergenceEigenpairs:lowerBoundEigenpairs}
    \sum_{k=1}^q \lambda_k \beta_i^{p_k} = \sup_{S \in \Sigma_{i-1}} \min_{v \in S^\perp, \, \Vert v \Vert_{\Lp{2}(\mu_n)} = 1} \langle v, \mathcal{L}^{(q)} v \rangle_{\Lp{2}(\mu)} \leq \liminf_{n \to \infty}  \beta_{n,i}.
\end{equation}

\noindent \textit{Proof of the upper bound.} \hspace{0.2cm} We now derive the corresponding upper bound \begin{equation} \label{eq:convergenceEigenpairs:upperBoundEigenpairs}
    \limsup_{n \to \infty} \beta_{n,i} \leq \sum_{k=1}^q \lambda_k \beta_i^{p_k}.
\end{equation}
We define $S_n \in \Sigma_{n,i-1}$ to be the span of the orthonormal set $(\psi_{n,1},\dots,\psi_{n,i-1})$. By~\cite[Max-min theorem]{Chavel}, we have \[
\beta_{n,i} = \min_{v \in S_n^\perp, \, \Vert v \Vert_{\Lp{2}(\mu_n)} = 1} \langle \mathcal{L}_n^{(q)} v,v \rangle_{\Lp{2}(\mu_n)}
\]
and, similarly to the above, without loss of generality, let us assume that $\limsup_{n \to \infty} \beta_{n,i} = \lim_{n \to \infty} \beta_{n,i}$. 

By the induction hypothesis, for each \( 1 \leq \ell \leq i-1 \), we have the convergence of eigenvalues and hence
\begin{equation*}
\lim_{n \to \infty} \beta_{n,\ell} = \lim_{n \to \infty} \langle \mathcal{L}_n^{(q)} \psi_{n,\ell}, \psi_{n,\ell} \rangle_{\Lp{2}(\mu_n)} =  \beta_\ell < \infty.
\end{equation*}
This uniform boundedness implies that $\sup_n \langle \mathcal{L}_n^{(q)} \psi_{n,\ell}, \psi_{n,\ell} \rangle_{\Lp{2}(\mu_n)} < \infty$ for $1 \leq \ell \leq i-1$. We use the same compactness argument as above and a diagonal argument to obtain subsequences - which, to lighten notation, we do not relabel -  $(\mu_{n},\psi_{n,\ell})$ converging to $(\mu,h_\ell)$ in $\TLp{2}$ for some $h_\ell \in \Lp{2}(\mu)$. Moreover, by \cite[Proposition 2.6]{GARCIATRILLOS2018239} and recalling that $\langle \psi_{n,\ell},\psi_{n,j} \rangle_{\Lp{2}(\mu_{n})} = 0$, we have orthonormality in the limit
\[
\langle h_\ell, h_j \rangle_{\Lp{2}(\mu)} = \lim_{n \to \infty} \langle \psi_{n,\ell}, \psi_{n,j} \rangle_{\Lp{2}(\mu_{n})} = 0\]
for $\ell \neq j$ as well as \[
\|h_\ell\|_\rho = \lim_{n \to \infty} \|\psi_{n,\ell}\|_{\mu_{n}} = 1
\]
for $1 \leq \ell \leq i-1$.
Let \( S \) be the set spanned by \( \{ h_1, \dots, h_{i-1} \} \). In particular, the above implies that \( S \in \Sigma_{k-1} \). We also consider \( w \in S^\perp \) such that \( \|w\|_{\Lp{2}(\mu)} = 1 \) and
\begin{equation}
\langle w, \mathcal{L}^{(q)} w \rangle_{\Lp{2}(\mu)} = \min_{v \in S^\perp, \, \|v\|_{\Lp{2}(\mu)} = 1} \langle w, \mathcal{L}^{(q)} w \rangle_{\Lp{2}(\mu)} \leq \sum_{k=1}^q \lambda_k \beta_i^{p_k},
\end{equation}
where the last inequality follows from the Courant-Fisher characterization for $\mathcal{L}^{(q)}$ and Lemma \ref{lem:eigenpairs}.

By the $\limsup$-inequality in \cite[Theorem 3.5]{weihs2025Hypergraphs}, we obtain $w_n \in \Lp{2}(\mu_n)$ such that $(\mu_n,w_n) \to (\mu,w)$ in $\TLp{2}(\Omega)$ and $\limsup_{n \to \infty} \langle \mathcal{L}_n^{(q)} w_n,w_n \rangle_{\Lp{2}(\mu_n)} \leq \langle w, \mathcal{L}^{(q)} w \rangle_{\Lp{2}(\mu)}.$ Let us define the projection of \( w_n \) onto the orthogonal complement of \( S_n \) as
\[
\tilde{w}_n := w_n - \sum_{\ell=1}^{i-1} \langle w_n, \psi_{n,\ell} \rangle_{\Lp{2}(\mu_n)} \psi_{n,\ell}.
\]
By construction, \( \tilde{w}_n \in {S_n}^\perp \). Moreover, from \cite[Proposition 2.6]{GARCIATRILLOS2018239}, we have 
\(\langle w_n, \psi_{n,\ell} \rangle_{\Lp{2}(\mu_n)} \to \langle w, h_\ell \rangle_{\Lp{2}(\mu)}  = 0\) (since $w \in S^\perp$) as \(n \to \infty\) for all \( 1 \leq \ell \leq i-1\), 
and hence, it is straight-forward to check that \((\mu_n,\tilde{w}_n) \to (\mu,w)\) in $\TLp{2}(\Omega)$.

We next compute the energy of \(\tilde{w}_n\):
\begin{align*}
\langle \mathcal{L}_n^{(q)} \tilde{w}_n, \tilde{w}_n \rangle_{\Lp{2}(\mu_n)} 
&= \left\langle \mathcal{L}_n^{(q)} \left( w_n - \sum_{\ell=1}^{i-1} \langle w_n, \psi_{n,\ell} \rangle_{\Lp{2}(\mu_n)} \psi_{n,\ell} \right), 
w_n - \sum_{m=1}^{i-1} \langle w_n, \psi_{n,m} \rangle_{\Lp{2}(\mu_n)} \psi_{n,m} \right\rangle_{\Lp{2}(\mu_n)} \\
&=\left\langle \mathcal{L}_n^{(q)} w_n - \sum_{\ell=1}^{i-1} \beta_{n,\ell} \langle w_n, \psi_{n,\ell} \rangle_{\Lp{2}(\mu_n)} \psi_{n,\ell} , 
w_n - \sum_{m=1}^{i-1} \langle w_n, \psi_{n,m} \rangle_{\Lp{2}(\mu_n)} \psi_{n,m} \right\rangle_{\Lp{2}(\mu_n)} \\
&= \langle \mathcal{L}_n^{(q)} w_n, w_n \rangle_{\Lp{2}(\mu_n)} 
- 2 \sum_{\ell=1}^{i-1} \beta_{n,\ell} \langle w_n, \psi_{n,\ell} \rangle_{\Lp{2}(\mu_n)}^2 
+ \sum_{\ell=1}^{i-1} \beta_{n,\ell} \langle w_n, \psi_{n,\ell} \rangle_{\Lp{2}(\mu_n)}^2 \\
&= \langle \mathcal{L}_n^{(q)} w_n, w_n \rangle_{\Lp{2}(\mu_n)} 
- \sum_{\ell=1}^{i-1} \beta_{n,\ell} \langle w_n, \psi_{n,\ell} \rangle_{\Lp{2}(\mu_n)}^2.
\end{align*}
This implies 
\begin{align}
\limsup_{n \to \infty}\langle \mathcal{L}_n^{(q)} \tilde{w}_n, \tilde{w}_n \rangle_{\Lp{2}(\mu_n)} & \leq \limsup_{n \to \infty} \langle \mathcal{L}_n^{(q)} w_n, w_n \rangle_{\Lp{2}(\mu_n)} 
- \sum_{\ell=1}^{i-1} \beta_{n,\ell} \langle w_n, \psi_{n,\ell} \rangle_{\Lp{2}(\mu_n)}^2 \notag \\
 & \leq \langle w, \mathcal{L}^{(q)} w \rangle_{\Lp{2}(\mu)} \label{eq:convergenceEigenpairs:Inequality1}
\end{align}
where we used the $\limsup$-inequality of \cite[Theorem 3.5]{weihs2025Hypergraphs} and the fact that \(\beta_{n,\ell}\langle w_n, \psi_{n,\ell} \rangle_{\Lp{2}(\mu_n)}^2 \geq  0\) for \eqref{eq:convergenceEigenpairs:Inequality1}.

Since \((\mu_n,\tilde{w}_n) \to (\mu,w)\) in \(\TLp{2}(\Omega)\) and \(\|w\|_{\Lp{2}(\mu)} = 1\), 
\cite[Proposition 2.6]{GARCIATRILLOS2018239} implies
\(
\lim_{n \to \infty} \|\tilde{w}_n\|_{\Lp{2}(\mu_n)} = 1
\)
and we can thus define
\[
\bar{w}_n := \frac{\tilde{w}_n}{\|\tilde{w}_n\|_{\Lp{2}(\mu_n)}}.
\]

We conclude by estimating as follows:
\begin{align}
\lim_{n \to \infty} \beta_{n,i}
&= \lim_{n \to \infty} 
\min_{v \in S_n^\perp, \, \Vert v \Vert_{\Lp{2}(\mu_n) = 1}} \langle \mathcal{L}_n^{(q)} v,v \rangle_{\Lp{2}(\mu_n)} \notag \\
&\leq \limsup_{n \to \infty} \langle \mathcal{L}_n^{(q)} \bar{w}_n, \bar{w}_n \rangle_{\Lp{2}(\mu_n)} \label{eq:convergenceEigenpairs:Inequality2} \\
&\leq \langle w, \mathcal{L}^{(q)} w \rangle_{\Lp{2}(\mu)} \label{eq:convergenceEigenpairs:Inequality3} \\
&\leq \sum_{k=1}^q \lambda_k \beta_i^{p_k} \label{eq:convergenceEigenpairs:Inequality4}
\end{align}
where we used the fact that $\Vert \bar{w}_n \Vert_{\Lp{2}(\mu_n)} = 1$ and $\bar{w}_n \in S_n^\perp$ for \eqref{eq:convergenceEigenpairs:Inequality2}, \eqref{eq:convergenceEigenpairs:Inequality1} and the fact that $\lim_{n \to \infty} \|\tilde{w}_n\|_{\Lp{2}(\mu_n)} = 1$ for \eqref{eq:convergenceEigenpairs:Inequality3}, as well as the facts that $w \in S^{\perp}$ and $\Vert w \Vert_{\Lp{2}(\mu)}$, the Courant-Fisher characterization and Lemma \ref{lem:eigenpairs} for \eqref{eq:convergenceEigenpairs:Inequality4}. This proves \eqref{eq:convergenceEigenpairs:upperBoundEigenpairs}.
\vspace{\baselineskip}

By combining \eqref{eq:convergenceEigenpairs:lowerBoundEigenpairs} and \eqref{eq:convergenceEigenpairs:upperBoundEigenpairs}, we get the convergence of eigenvalues. We now consider the convergence of eigenfunctions and proceed similarly by induction. 
\vspace{\baselineskip}

Before starting, we introduce some additional notation. We denote the ordered eigenvalues of 
$\mathcal{L}^{(q)}$ by $\gamma_i$ (which are equal to $\sum_{k=1}^q \lambda_k \beta_i^{p_k}$ by Lemma \ref{lem:eigenpairs}). We then write $\bar{\gamma}_{i}$ for the distinct eigenvalues. Furthermore, for each $i \in \mathbb{N}$, let $s(i)$ denote the multiplicity of the eigenvalue $\bar{\gamma}_i$, and let $\hat{i} \in \mathbb{N}$ be such that
\[
\bar{\gamma}_i = \gamma_{\hat{i}+1} = \cdots = \gamma_{\hat{i}+s(i)}.
\]
We define $E_i$ as the eigenspace of $\mathcal{L}^{(q)}$ in $\Lp{2}(\mu)$ corresponding to $\bar{\gamma}_i$. For $n$ sufficiently large, let $E_{n,i} \subset \mathbb{R}^n$ be the subspace spanned by the eigenvectors of $\mathcal{L}_n^{(q)}$ associated with the eigenvalues 
\(
\beta_{n,\hat{i}+1}, \dots, \beta_{n,\hat{i}+s(i)}.
\)
Due to the eigenvalue convergence results derived above, we have:
\begin{equation}
\lim_{n \rightarrow \infty} \dim(E_{n,i}) = \dim(E_i) = s(i).
\label{convergenceDimension}
\end{equation}
We denote by $\mathrm{Proj}_i : \Lp{2}(\mu) \mapsto \Lp{2}(\mu)$ the orthogonal projection (with respect to the inner product $\langle \cdot, \cdot \rangle_{\Lp{2}(\mu)}$) onto $E_i$. Analogously, for all sufficiently large $n$, we denote by $\mathrm{Proj}_{n,i} : \Lp{2}(\mu_n) \mapsto \Lp{2}(\mu_n)$ the orthogonal projection (with respect to the inner product $\langle \cdot, \cdot \rangle_{\Lp{2}(\mu_n)}$) onto the subspace spanned by $E_{n,i}$. 

The following induction will prove that not only eigenfunctions converge, but also the projections, i.e. if $(\mu_n,v_n) \to (\mu,v)$ in $\TLp{2}(\Omega)$, then $\mathrm{Proj}_{n,i}(v_n) \to \mathrm{Proj}_{i}(v)$ in $\TLp{2}(\Omega)$. 

\paragraph{Base case $i=1$.} We covered the convergence of $\psi_{n,1}$ to $\psi_1$ in $\TLp{2}(\Omega)$ in the base case of the convergence of eigenvalues. Regarding the projections, assume that $(\mu_n,v_n) \to (\mu,v)$ in $\TLp{2}(\Omega)$. Since by Assumption \ref{ass:Main:Ass:S2} $\Omega$ is connected, the first eigenvalue $\overline{\gamma}_1 = 0$ is simple, and $\mathrm{Proj}_1(v)$ corresponds to the constant function equal to the mean of $v$ with respect to $\mu$, that is,
\(
\mathrm{Proj}_1(v) = \langle v, \one \rangle_{\Lp{2}(\mu)}.
\)
Similarly, convergence of eigenvalue multiplicities~\eqref{convergenceDimension} implies that for $n$ large enough, $E_{n,i}$ is one-dimensional. In this case, $\mathrm{Proj}_{n,1}(v_n)$ is the constant vector equal to $\langle v_n, \mathbf{1} \rangle_{\Lp{2}(\mu_n)}$. By \cite[Proposition 2.6]{GARCIATRILLOS2018239}, we have
\[
\lim_{n \to \infty} \langle v_n, 1 \rangle_{\Lp{2}(\mu_n)} = \langle v, 1 \rangle_{\Lp{2}(\mu)},
\]
establishing the convergence of the projections. 

\paragraph{Induction step.} Now, suppose that $\psi_{n,\ell} \to \psi_\ell$ in $\TLp{2}(\Omega)$ and that $\mathrm{Proj}_{n,\ell}$ converges to $\mathrm{Proj}_\ell$ for all $\ell \leq i-1$. 

Let $j \in \{\hat{i}+1, \dots,\hat{i} + s(i) \}$ and consider $\psi_{n,j}$. From the convergence of eigenvalues, we have \[
\lim_{n \to \infty} \langle \mathcal{L}_n^{(q)} \psi_{n,j}, \psi_{n,j} \rangle_{\Lp{2}(\mu_n)} = \lim_{n \to \infty} \beta_{n,j} = \gamma_j < \infty.
\]
In particular, $\sup_n \langle \mathcal{L}_n^{(q)} \psi_{n,j}, \psi_{n,j} \rangle_{\Lp{2}(\mu_n)} < \infty$ and we can apply the same compactness result as previously, to obtain a subsequence $(\mu_{n_m},\psi_{n_m,j}) \to (\mu,h_j)$ for some $h_j \in \Lp{2}(\mu)$. 

We note that $\mathrm{Proj}_{n,\ell}(\psi_{n,j}) = 0$ for all $1 \leq \ell \leq i-1$ (since $\psi_{n,j}$ is associated with the eigenvalue $\beta_{n,i}$)) and therefore, by the induction hypothesis, $\mathrm{Proj}_{\ell}(h_j) = 0$ for all $1 \leq \ell \leq i-1$. This allows us to deduce that (using the spectral decomposition of $\mathcal{L}^{(q)}$) \[
\langle \mathcal{L}^{(q)} h_j, h_j \rangle_{\Lp{2}(\mu)} = \sum_{r=i}^\infty \bar{\gamma}_r \Vert \mathrm{Proj}_r(h_j) \Vert_{\Lp{2}(\mu)}^2 \geq \bar{\gamma}_i \sum_{r=i}^\infty \Vert \mathrm{Proj}_r(h_j) \Vert_{\Lp{2}(\mu)}^2 = \bar{\gamma}_i \Vert h_j \Vert_{\Lp{2}(\mu)}^2.
\]
By \cite[Proposition 2.6]{GARCIATRILLOS2018239} and the fact that $\Vert \psi_{n,j} \Vert_{\Lp{2}(\mu_n)} = 1$, we also have $\Vert h_j \Vert_{\Lp{2}(\mu)} = 1$, implying that \begin{equation} \label{eq:upperBound}
    \langle \mathcal{L}^{(q)} h_j, h_j \rangle_{\Lp{2}(\mu)} \geq \bar{\gamma}_i. 
\end{equation}

By using the convergence of eigenvalues, the $\liminf$-inequality of \cite[Theorem 3.5]{weihs2025Hypergraphs} and \eqref{eq:upperBound}, we obtain \begin{equation*} 
    \bar{\gamma}_i = \gamma_j = \lim_{n \to \infty} \beta_{n,j} = \liminf_{n \to \infty} \langle \mathcal{L}_n^{(q)} \psi_{n,j}, \psi_{n,j} \rangle_{\Lp{2}(\mu_n)} \geq \langle \mathcal{L}^{(q)} h_j, h_j \rangle_{\Lp{2}(\mu)} \geq \bar{\gamma}_i.
\end{equation*}
This implies that $\langle \mathcal{L}^{(q)} h_j, h_j \rangle_{\Lp{2}(\mu)} = \bar{\gamma}_i$ and~\eqref{eq:upperBound} also allows us to deduce that $\mathrm{Proj}_{r}(h_j) = 0$ for all $r \neq i$. 
Indeed, suppose that $\|\mathrm{Proj}_r(h_j)\|_{\Lp{2}(\mu)}>0$ for $r>i$. Then (following the same spectral decomposition preceding~\eqref{eq:upperBound}), we have $\langle \mathcal{L}^{(q)} h_j, h_j \rangle_{\Lp{2}(\mu)}>\bar{\gamma}_i$ which is a contradiction.
We conclude that $h_j$ is an eigenvector of $\mathcal{L}^{(q)}$ with eigenvalue $\bar{\gamma}_i$, establishing the convergence of eigenvectors. 

It only remains to prove the convergence of $\mathrm{Proj}_{n,i}$ to $\mathrm{Proj}_{i}$. Consider $(\mu_n,w_n) \to (\mu,w)$ in $\TLp{2}(\Omega)$. According to~\eqref{convergenceDimension}, for sufficiently large \( n \), \(\dim(E_{n,i}) \) equals \( s(i) \). We can therefore choose an orthonormal basis \( \{v_{n,1}, \dots, v_{n,s(i)}\} \) of \( E_{n,i} \) with respect to the inner product \( \langle \cdot, \cdot \rangle_{\Lp{2}(\mu_n)} \), where each \( v_{n,j} \) is an eigenvector of \( \mathcal{L}_{n}^{(q)} \) corresponding to the eigenvalue \( \beta_{n,\hat{i} + j} \).

Similarly to the above, for each \( j = 1, \dots, s(i) \), the sequence \( \{v_{n,j}\}_{n \in \mathbb{N}} \) is precompact in $\TLp{2}$ and - without relabeling the subsequences - we may assume that
\(
(\mu_n,v_{n,j}) \to (\mu,v_j) 
\)
in $\TLp{2}(\Omega)$ for some \( v_j \in \Lp{2}(\mu) \). From \cite[Proposition 2.6]{GARCIATRILLOS2018239}, it follows that each \( v_j \) satisfies \( \|v_j\|_{\Lp{2}(\mu_j)} = 1 \), and that the family \( \{v_1, \dots, v_{s(i)}\} \) is orthonormal with respect to \( \langle \cdot, \cdot \rangle_{\Lp{2}(\mu)} \). Moreover, by the convergence of eigenvectors, each \( v_j \) lies in the limiting eigenspace \( E_i \), so that \( \{v_1, \dots, v_{s(i)}\} \) forms an orthonormal basis for \( E_i \). Hence, the projection \( \mathrm{Proj}_i \) can be written for $v \in \Lp{2}(\mu)$ as
\[
\mathrm{Proj}_i(v) = \sum_{j=1}^{s(i)} \langle v, v_j \rangle_{\Lp{2}(\mu)} v_j.
\]
On the discrete side, for all large enough \( n \) and $v_n \in \Lp{2}(\mu_n)$, we have
\[
\mathrm{Proj}_{n,i}(v_n) = \sum_{j=1}^{s(i)} \langle v_n, v_{n,j} \rangle_{\Lp{2}(\mu_n)} v_{n,j}.
\]
Now, since $(\mu_n, w_n) \to (\mu,w)$ and $(\mu_n, v_{n,j}) \to (\mu,v_j)$ in $\TLp{2}(\Omega)$, we apply \cite[Proposition 2.6]{GARCIATRILLOS2018239} to conclude.
\end{proof}

\begin{corollary}[$\Gamma$-convergence of quadratic forms] \label{cor:gammaConvergence}
Assume that \ref{ass:Main:Ass:S2}, \ref{ass:Main:Ass:M1}, \ref{ass:Main:Ass:M2}, \ref{ass:Main:Ass:W2}, and \ref{ass:Main:Ass:D1} hold. Let $q \geq 1$, $P = \{p_k\}_{k=1}^q \subseteq \bbR$ with $p_1 \leq \cdots \leq p_q$ and $ E_n = \{\eps_n^{(k)}\}_{k=1}^q$ with $\eps_n^{(1)} > \cdots > \eps_n^{(q)}$. Assume that $\rho \in \Ck{\infty}$. Assume that $\eps_n^{(q)}$ satisfies \ref{ass:Main:Ass:L2}. Then, $\bbP$-a.e., for every $s > 0$, the following holds:\begin{enumerate}
    \item $\langle v_n , \l \mathcal{L}_n^{(q)} \r^s v_n \rangle_{\Lp{2}(\mu_n)}$ $\Gamma$-converges to $\langle v , \l \mathcal{L}^{(q)} \r^{s} v \rangle_{\Lp{2}(\mu)}$;    
    \item If a sequence satisfies $\sup_n \max\{\langle v_n , \l \mathcal{L}_n^{(q)} \r^{s} v_n \rangle_{\Lp{2}(\mu_n)},\Vert v_n \Vert_{\Lp{2}(\mu_n)}\} \leq C$, then there exists a converging subsequence in $\TLp{2}(\Omega)$.
\end{enumerate}
\end{corollary}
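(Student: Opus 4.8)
The plan is to reduce the statement to the single--Laplacian template of \cite[Theorem 2.2]{Stuart} (equivalently, the blueprint already used in the proof of Proposition~\ref{prop:truncated}). Since each $\Delta_{n,\eps_n^{(k)}}$ is symmetric and positive semi-definite, so is $\mathcal{L}_n^{(q)}=\sum_{k=1}^q\lambda_k\Delta_{n,\eps_n^{(k)}}^{p_k}$, hence $(\mathcal{L}_n^{(q)})^s$ is well defined spectrally and $\langle v_n,(\mathcal{L}_n^{(q)})^s v_n\rangle_{\Lp{2}(\mu_n)}=\sum_{i=1}^n\beta_{n,i}^s\langle v_n,\psi_{n,i}\rangle_{\Lp{2}(\mu_n)}^2$; by Lemma~\ref{lem:eigenpairs} the continuum functional is $\langle v,(\mathcal{L}^{(q)})^s v\rangle_{\Lp{2}(\mu)}=\sum_{i=1}^\infty\gamma_i^s\langle v,\psi_i\rangle_{\Lp{2}(\mu)}^2$ with $\gamma_i:=\sum_{k=1}^q\lambda_k\beta_i^{p_k}$, both read as functionals on $\TLp{2}(\Omega)$ extended by $+\infty$ off the graphs of $\mu_n$, respectively $\mu$, and off the finite-energy domain. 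In this form the only inputs needed are: (a) the spectral convergence $\beta_{n,i}\to\gamma_i$ and $(\mu_n,\psi_{n,i})\to(\mu,\psi_i)$ in $\TLp{2}(\Omega)$, which is precisely Proposition~\ref{prop:convergenceEigenpairs}; (b) $\gamma_i\to\infty$, which holds since $\beta_i\to\infty$; and (c) convergence of $\Lp{2}$ inner products along $\TLp{2}$-convergent sequences, i.e.\ \cite[Proposition 2.6]{GARCIATRILLOS2018239}.

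For the $\liminf$-inequality, given $(\mu_n,v_n)\to(\mu,v)$ in $\TLp{2}(\Omega)$ one may assume the energies are uniformly bounded (which forces $\nu=\mu$); since every partial sum is nonnegative, for each fixed $K$ one has $\liminf_n\sum_{i=1}^n\beta_{n,i}^s\langle v_n,\psi_{n,i}\rangle_{\Lp{2}(\mu_n)}^2\geq\liminf_n\sum_{i=1}^K\beta_{n,i}^s\langle v_n,\psi_{n,i}\rangle_{\Lp{2}(\mu_n)}^2=\sum_{i=1}^K\gamma_i^s\langle v,\psi_i\rangle_{\Lp{2}(\mu)}^2$ by (a) and (c), and letting $K\to\infty$ gives the inequality (and rules out an infinite continuum energy). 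For the $\limsup$-inequality I would bypass any appeal to $\Ck{\infty}_{c}(\Omega)$-density and build the recovery sequence directly: for $v$ in the finite-energy space (otherwise the inequality is trivial) set $c_i=\langle v,\psi_i\rangle_{\Lp{2}(\mu)}$ and $v_n^K=\sum_{i=1}^K c_i\psi_{n,i}$; then $(\mu_n,v_n^K)\to(\mu,v^K)$ in $\TLp{2}(\Omega)$ as $n\to\infty$ (a finite linear combination of convergent eigenfunctions), while orthonormality of $\{\psi_{n,i}\}$ gives $\langle v_n^K,(\mathcal{L}_n^{(q)})^s v_n^K\rangle_{\Lp{2}(\mu_n)}=\sum_{i=1}^K c_i^2\beta_{n,i}^s\to\sum_{i=1}^K c_i^2\gamma_i^s$. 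Since $v^K\to v$ in $\Lp{2}(\mu)$ and $\sum_{i=1}^K c_i^2\gamma_i^s\to\langle v,(\mathcal{L}^{(q)})^s v\rangle_{\Lp{2}(\mu)}$ as $K\to\infty$, a standard diagonal extraction yields $K(n)\to\infty$ with $v_n:=v_n^{K(n)}$ satisfying $(\mu_n,v_n)\to(\mu,v)$ in $\TLp{2}(\Omega)$ and $\langle v_n,(\mathcal{L}_n^{(q)})^s v_n\rangle_{\Lp{2}(\mu_n)}\to\langle v,(\mathcal{L}^{(q)})^s v\rangle_{\Lp{2}(\mu)}$.

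For the compactness statement, from $\sup_n\|v_n\|_{\Lp{2}(\mu_n)}\le C$ and $\sup_n\sum_i\beta_{n,i}^s\langle v_n,\psi_{n,i}\rangle_{\Lp{2}(\mu_n)}^2\le C$ a diagonal argument over $i$ extracts a subsequence along which $c_{n,i}:=\langle v_n,\psi_{n,i}\rangle_{\Lp{2}(\mu_n)}\to c_i$, and Fatou gives $\sum_i c_i^2\le C$, so $v:=\sum_i c_i\psi_i\in\Lp{2}(\mu)$. Using that the $\beta_{n,i}$ are nondecreasing in $i$, one has $\sum_{i>R}c_{n,i}^2\le\beta_{n,R+1}^{-s}\sum_i\beta_{n,i}^s c_{n,i}^2\le C\beta_{n,R+1}^{-s}$ for every cutoff $R$; since $\beta_{n,i}\to\gamma_i$ and $\gamma_i\to\infty$, one may (exactly as in \cite[Lemma 7.7]{Stuart}) choose $R_n\to\infty$ slowly enough that $\beta_{n,R_n+1}\to\infty$ and that the finite partial sums $\sum_{i\le R_n}c_{n,i}\psi_{n,i}$ still converge to $v$ in $\TLp{2}(\Omega)$; combined with $\|v_n-\sum_{i\le R_n}c_{n,i}\psi_{n,i}\|_{\Lp{2}(\mu_n)}^2=\sum_{i>R_n}c_{n,i}^2\to0$ this gives $(\mu_n,v_n)\to(\mu,v)$. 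The main obstacle is bookkeeping rather than conceptual: the two diagonalizations (in the recovery sequence and in the compactness proof) must be performed so that $K(n),R_n\to\infty$ while the $\TLp{2}$ errors and the energy gaps stay controlled — but these are precisely the estimates already carried out in \cite[Theorem 2.2]{Stuart} and in the proof of Proposition~\ref{prop:truncated}, with $\mathcal{L}_n^{(q)}$ in the role of $\Delta_{n,\eps_n}$ and Proposition~\ref{prop:convergenceEigenpairs} in the role of the single-Laplacian spectral convergence theorem.
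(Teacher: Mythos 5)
Your proposal is correct and follows the same high-level strategy as the paper: reduce the multi-Laplacian case to the single-Laplacian template of \cite[Theorem 2]{Stuart} by substituting the spectral convergence of $\mathcal{L}_n^{(q)}$ (Proposition~\ref{prop:convergenceEigenpairs}) for that of a single $\Delta_{n,\eps_n}$. The paper's own proof is a short citation that leaves this substitution implicit, while you spell out what it amounts to. The one place you deliberately depart from the template is the $\limsup$-inequality: the argument the paper points to (the proof of Proposition~\ref{prop:truncated}) restricts a smooth $v$ to $\Omega_n$ and passes through density of $\mathrm{C}_c^\infty(\Omega)$ together with \cite[Remark~2.7]{Trillos3}, whereas you build the recovery sequence directly from truncated discrete eigenfunction expansions $v_n^K=\sum_{i\le K}\langle v,\psi_i\rangle_{\Lp{2}(\mu)}\psi_{n,i}$ and then perform a diagonal extraction in $K$. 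Both routes rest on exactly the same spectral ingredients (Proposition~\ref{prop:convergenceEigenpairs} plus \cite[Proposition~2.6]{GARCIATRILLOS2018239}); yours trades the density argument for a diagonalization, which is arguably cleaner once eigenpair convergence is in hand, at the cost of having to verify simultaneously that the $\TLp{2}$ error and the energy gap can be controlled along a single sequence $K(n)\to\infty$ — standard but worth stating. Your $\liminf$ and compactness arguments match the paper's references and are sound.
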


\begin{proof}
We want to proceed as in the proof of \cite[Theorem 2]{Stuart} where the analogous statement is proven for $\Delta_{n,\eps_n}^{s}$ (see also the proof of Proposition \ref{prop:truncated} for a similar argument). In particular, the authors mainly rely on the fact that the eigenpairs of $\Delta_{n,\eps_n}$ converge to the eigenpairs of $\Delta_\rho$. In our case, by Proposition \ref{prop:convergenceEigenpairs}, the eigenpairs of $\mathcal{L}_{n}^{(q)}$ converge to eigenpairs of $\mathcal{L}^{(q)}$ and we can therefore apply the same argument to conclude. 
\end{proof}

\begin{proposition}[Bounded energies] \label{prop:boundedEnergies}
    Assume that \ref{ass:Main:Ass:S2}, \ref{ass:Main:Ass:M1}, \ref{ass:Main:Ass:M2}, \ref{ass:Main:Ass:W2} and \ref{ass:Main:Ass:D1} hold. Let $q \geq 1$, $P = \{p_k\}_{k=1}^q \subseteq \bbR$ with $p_1 \leq \cdots \leq p_q$ and $ E_n = \{\eps_n^{(k)}\}_{k=1}^q$ with $\eps_n^{(1)} > \cdots > \eps_n^{(q)}$. Assume that $\rho \in \Ck{\infty}$ and that $\eps_n^{(q)}$ satisfies \begin{equation} \label{eq:boundedEnergies:rates}
    \lim_{n \to \infty} \frac{\log(n)}{n \l\eps_n^{(q)}\r^{d+4p_q}} = 0.
    \end{equation}
    For a continuous function $v$, let $v_n$ denote its restriction to $\Omega_n$. For any $k \in \bbN$ and $u \in \Ck{\infty}(\Omega)$, $\bbP$-a.e., there exists a constant $C(k,u) > 0$ such that \[
    \sup_{n} \langle v_n, \l \cL_n^{(q)} \r^{(2k)}    v_n \rangle_{\Lp{2}(\mu_n)} \leq C(k,u).
    \]
\end{proposition}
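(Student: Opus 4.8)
The plan is to exploit that $\mathcal{L}_n^{(q)}$ is symmetric and positive semi-definite (Proposition~\ref{prop:laplacian}), so that, writing $u_n := u|_{\Omega_n}$,
\[
\langle u_n, (\mathcal{L}_n^{(q)})^{2k} u_n\rangle_{\Lp2(\mu_n)} = \big\|(\mathcal{L}_n^{(q)})^k u_n\big\|_{\Lp2(\mu_n)}^2 ,
\]
and to control the right-hand side. Expanding $(\mathcal{L}_n^{(q)})^k = \big(\sum_{r=1}^q \lambda_r \Delta_{n,\eps_n^{(r)}}^{p_r}\big)^k$ by the multinomial theorem writes it as a sum of $q^k$ terms, each a positive constant (a product of the $\lambda_r$'s) times a composition $A_n^{(\mathbf{r})} := \Delta_{n,\eps_n^{(r_1)}}^{p_{r_1}}\!\circ\cdots\circ\Delta_{n,\eps_n^{(r_k)}}^{p_{r_k}}$ indexed by $\mathbf{r} = (r_1,\dots,r_k) \in \{1,\dots,q\}^k$. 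By the triangle inequality it therefore suffices to show that for each fixed $\mathbf{r}$, $\bbP$-a.e.\ $\sup_n \|A_n^{(\mathbf{r})} u_n\|_{\Lp2(\mu_n)} < \infty$; there are only finitely many $\mathbf{r}$.

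The core claim I would establish is a pointwise consistency estimate for these composed operators: if $A_\infty^{(\mathbf{r})} := \Delta_\rho^{p_{r_1}}\!\circ\cdots\circ\Delta_\rho^{p_{r_k}}$ denotes the corresponding continuum composition, then $\bbP$-a.e.
\[
\big\| A_n^{(\mathbf{r})} u_n - (A_\infty^{(\mathbf{r})} u)|_{\Omega_n} \big\|_{\Lp\infty(\mu_n)} \longrightarrow 0 \qquad (n \to \infty).
\]
Since $\rho \in \Ck\infty$ and $u \in \Ck\infty(\Omega)$, $\Delta_\rho$ preserves $\Ck\infty(\Omega)$, so $A_\infty^{(\mathbf{r})} u \in \Ck\infty(\Omega)$ with $\Ck0$-norm depending only on $u,\rho,\mathbf{r},P$ (in particular not on $n$). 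This estimate generalises the single-scale iterated-Laplacian consistency \cite[Theorem~2.8]{trillos2022rates} already used in Proposition~\ref{prop:ratesDiscreteContinuum}, and the $q=1$ energy bound behind Proposition~\ref{prop:truncated} (cf.\ \cite[Lemma~4.25]{weihs2023consistency}, \cite[Theorem~2.2]{Stuart}). I would prove it by induction on the number of factors $k$. The base case is the single-power consistency $\|\Delta_{n,\eps_n^{(r)}}^{p_r} u_n - (\Delta_\rho^{p_r} u)|_{\Omega_n}\|_{\Lp\infty(\mu_n)} \to 0$: this holds because $\eps_n^{(r)} \ge \eps_n^{(q)}$ and $p_r \le p_q$ imply $(\eps_n^{(r)})^{d+4p_r} \ge (\eps_n^{(q)})^{d+4p_q}$, so hypothesis~\eqref{eq:boundedEnergies:rates} furnishes the length-scale condition $\log(n)/(n(\eps_n^{(r)})^{d+4p_r}) \to 0$ required at scale $\eps_n^{(r)}$. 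In the inductive step one would apply $\Delta_{n,\eps_n^{(r_1)}}^{p_{r_1}}$ to the previous level, which by the induction hypothesis is the restriction of the smooth function $\Delta_\rho^{p_{r_2}}\!\cdots\Delta_\rho^{p_{r_k}} u$ up to an error, and then invoke single-power consistency on that smooth function.

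Given the consistency estimate, the rest is routine. The exceptional probabilities in the estimate are of the form $C n^{-\alpha} + C n e^{-cn(\eps_n^{(q)})^{d+4p_q}}$ with $\alpha > 1$; the first is summable, and \eqref{eq:boundedEnergies:rates} forces $n(\eps_n^{(q)})^{d+4p_q}/\log(n) \to \infty$, whence $ne^{-cn(\eps_n^{(q)})^{d+4p_q}} = o(n^{-M})$ for every $M$, so the second is summable as well. Borel--Cantelli then gives the displayed $\Lp\infty$-convergence $\bbP$-a.e., simultaneously over the finitely many $\mathbf{r}$. Since $\|(A_\infty^{(\mathbf{r})} u)|_{\Omega_n}\|_{\Lp\infty(\mu_n)} \le \|A_\infty^{(\mathbf{r})} u\|_{\Ck0(\Omega)} < \infty$, we get $\|A_n^{(\mathbf{r})} u_n\|_{\Lp2(\mu_n)} \le \|A_n^{(\mathbf{r})} u_n\|_{\Lp\infty(\mu_n)} \le \|A_\infty^{(\mathbf{r})} u\|_{\Ck0(\Omega)} + o(1)$, uniformly in $n$. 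Summing over $\mathbf{r}$ and squaring (via the identity in the first paragraph) produces the constant $C(k,u)$.

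The main obstacle is the inductive step of the consistency estimate. A graph Laplacian $\Delta_{n,\eps}$ has $\Lp2(\mu_n)$- and $\Lp\infty$-operator norm of order $\eps^{-2}$, so propagating the previous-level error $e_n$ --- which one only controls in $\Lp\infty$ --- through $\Delta_{n,\eps_n^{(r_1)}}^{p_{r_1}}$ naively multiplies it by $(\eps_n^{(r_1)})^{-2p_{r_1}}$, and this cannot be absorbed: the coarsest scale $\eps_n^{(1)}$ may sit at any position in a composition, whereas $e_n$ is at best of size a positive power of the finest scale $\eps_n^{(q)}$. Closing the induction therefore requires controlling the intermediate discrete functions $A_n^{(\mathbf{r}')} u_n$ not merely in $\Lp\infty$ but in a discrete regularity scale (finite-difference seminorms at the relevant length-scales), so that subsequent Laplacian powers stay bounded --- equivalently, carrying out the bias/concentration analysis directly for the whole composed non-local operator rather than one factor at a time. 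This is the genuinely technical ingredient; the multinomial expansion, the Borel--Cantelli bookkeeping, and bounding the restriction of a smooth function by its supremum are all elementary.
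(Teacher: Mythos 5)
You have put your finger on exactly the right difficulty, and you are right that it is fatal for the route you sketch: after the multinomial expansion, the cross-scale compositions $\Delta_{n,\eps_n^{(r_1)}}^{p_{r_1}}\circ\cdots\circ\Delta_{n,\eps_n^{(r_k)}}^{p_{r_k}}$ are not single-scale iterated Laplacians, so the consistency estimate of \cite[Theorem~2.8]{trillos2022rates} that closes the $q=1$ case does not directly apply, and the naive inductive step fails because $\|\Delta_{n,\eps}\|_{\mathrm{op}}\sim\eps^{-2}$ amplifies the previous-level error unboundedly. Proving the composition consistency you call for is not elementary, and your proposal does not carry it out, so the proof as written is incomplete.

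The paper avoids the obstacle by taking a completely different route that never expands $(\mathcal{L}_n^{(q)})^k$ into compositions at all. It treats $\mathcal{L}_n^{(q)}$ as a single self-adjoint positive semi-definite operator, establishes convergence of its eigenpairs to those of $\mathcal{L}^{(q)}$ (Proposition~\ref{prop:convergenceEigenpairs}), and deduces from this the $\Gamma$-convergence of the quadratic forms $\langle v_n,(\mathcal{L}_n^{(q)})^s v_n\rangle_{\Lp{2}(\mu_n)}$ to $\langle v,(\mathcal{L}^{(q)})^s v\rangle_{\Lp{2}(\mu)}$ for every power $s>0$ (Corollary~\ref{cor:gammaConvergence}). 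With the quadratic form written spectrally as $\sum_i \beta_{n,i}^{2k}\langle u_n,\psi_{n,i}\rangle_{\Lp{2}(\mu_n)}^2$, the argument of \cite[Lemma~4.19]{weihs2023consistency} then transfers verbatim; the only pointwise-consistency input it needs is at level one, namely $\Vert\mathcal{L}_n^{(q)}u-\mathcal{L}^{(q)}u\Vert_{\Lp{2}(\mu_n)}\to 0$ for smooth $u$, which for the multi-scale operator follows from the triangle inequality and $q$ applications of \cite[Theorem~2.8]{trillos2022rates}. In short: you try to iterate the discrete operator and hence must control error propagation across scales, whereas the paper never iterates --- it trades the multinomial expansion for spectral convergence of the composite operator, which treats all powers $s$ simultaneously and sidesteps the error amplification you rightly flag.
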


\begin{proof}
We want to proceed as in the proof of \cite[Lemma 4.19]{weihs2023consistency} where the results was shown for the energy $\langle v_n, \Delta_{n,\eps_n} v_n \rangle_{\Lp{2}(\mu_n)}$. In particular, the proof relies on the following elements: \begin{enumerate}
    \item $\Delta_{n,\eps_n}$ and $\Delta_\rho$ are self-adjoint and positive semi-definite;
    \item $\langle v_n, \Delta_{n,\eps_n}^{s} v_n \rangle_{\Lp{2}(\mu_n)}$ $\Gamma$-converges to $\langle v_n, \Delta_{\rho}^{s} v_n \rangle_{\Lp{2}(\mu_n)}$;
     \item There exists a constant $C(u)$ such that $\Vert \Delta_\rho(u) - \Delta_{n,\eps_n}(u) \Vert_{\Lp{2}(\mu_n)} \leq C(u) \eps_n \to 0$ \cite[Theorem 2.8]{trillos2022rates}.
\end{enumerate}
For our energy, $\langle v_n, \cL_n^{(q)} v_n \rangle_{\Lp{2}(\mu_n)}$, we have: \begin{enumerate}
    \item $\mathcal{L}_n^{(q)}$ and $\mathcal{L}^{(q)}$ are positive semi-definite and self-adjoint as shown in Proposition \ref{prop:convergenceEigenpairs}.
    \item the fact that $\langle v_n , \l \mathcal{L}_n^{(q)} \r^s v_n \rangle_{\Lp{2}(\mu_n)}$ $\Gamma$-converges to $\langle v , \l \mathcal{L}^{(q)} \r^s v \rangle_{\Lp{2}(\mu)}$ is shown in Corollary \ref{cor:gammaConvergence}.
    \item the fact that \(\Vert \mathcal{L}_n^{(q)}(u) - \mathcal{L}^{(q)}(u) \Vert_{\Lp{2}(\mu)} \leq C(u) \sum_{k=1}^q \lambda_k \eps_n^{(k)} \to 0 \). Specifically, let $E_k$ be the set such that \cite[Theorem 2.8]{trillos2022rates} holds for $\eps_n^{(k)}$: we can apply the latter result since the assumptions that $p_1 \leq \cdots \leq p_q$ and $\eps_n^{(1)} > \cdots \eps_n^{(q)}$ imply that \eqref{eq:boundedEnergies:rates} holds for any $1 \leq k \leq q$. We know from the proof of Proposition \ref{prop:ratesDiscreteContinuum} that, for any $\alpha > 1$, there exists $0<c<C$ and $\eps_0>0$ such that $\bbP\l \cap_{k=1}^q E_k \r \geq 1 - Cn^{-\alpha}-Cne^{-cn\l \eps_n^{(q)}\r^{d+4p_q}}$ as long as $\eps_0 \geq \eps_n^{(1)} > \dots > \eps_n^{(q)}$. On this intersection, we have \begin{align}
        \Vert \mathcal{L}_n^{(q)}u - \mathcal{L}^{(q)}u \Vert_{\Lp{2}(\mu)} &\leq \sum_{k=1}^q \lambda_k \left\Vert \l \Delta_{n,\eps_n^{(k)}}^{p_k} - \Delta_\rho^{p_k} \r u \right\Vert_{\Lp{2}(\mu)} \notag \\
        &\leq C \sum_{k=1}^q \lambda_k \eps_n^{(k)} \l \Vert u \Vert_{\Ck{2p_k + 1}(\Omega)} +1 \r \notag \\
        &= C(u) \sum_{k=1}^q \lambda_k \eps_n^{(k)} \notag
    \end{align}
where we used \cite[Theorem 2.8]{Trillos} for the inequality. The last term tends to 0 and, by applying the Borel-Cantelli lemma with \eqref{eq:boundedEnergies:rates}, we can show that this convergence holds $\bbP$-a.e..

\end{enumerate}
We therefore apply the same argument as in \cite[Lemma 4.19]{weihs2023consistency} to deduce the claim.
\end{proof}

\begin{proof}[Proof of Theorem \ref{thm:truncatedEnergies}]
We are going to proceed as in the proof of Proposition \ref{prop:truncated} where the same result is proven for the truncated energy of a single Laplacian matrix $\Delta_{n,\eps_n}^{s}$. If we replace the latter by $\mathcal{L}_n^{(q)}$, \cite[Lemma 4.19]{weihs2023consistency} by Proposition \ref{prop:boundedEnergies}, the convergence of eigenpairs by Proposition \ref{prop:convergenceEigenpairs} and \cite[Proposition 4.21]{weihs2023consistency} by Corollary \ref{cor:gammaConvergence} the same proof applies. 
\end{proof}

\subsection{Non-geometric setting}

\begin{proof}[Proof of Proposition \ref{prop:laplacian}]
    We start by showing that the set of matrices $$\mathcal{M} = \left\{ M \in \mathbb{R} \, | \, (M)_{ii} = -\sum_{j\neq i} (M)_{ij}\right\}$$ is closed under matrix product, and addition and multiplication by scalars. Closures under addition and multiplication by scalars are straight-forward to check. Let $P,Q \in \mathcal{M}$ and consider \begin{align*}
        \sum_{j\neq i} (PQ)_{ij} &= \sum_{j \neq i} \sum_{k=1}^n (P)_{ik} (Q)_{kj} = \sum_{k=1}^n (P)_{ik} \sum_{j\neq i} (Q)_{kj} = -\sum_{k=1}^n (P)_{ik} (Q)_{ki} = - (PQ)_{ii} 
    \end{align*}
    since, by assumption on $Q$, $(Q)_{kk} = -\sum_{j\neq k} (Q)_{kj}$ implying that $(Q)_{kk} = -\sum_{j\neq i} (Q)_{kj} - (Q)_{ki} + (Q)_{kk}$ or $(Q)_{ki} = -\sum_{j\neq i} (Q)_{kj}$. This implies that $PQ \in \mathcal{M}$.

    Now, by definition, any Laplacian matrix $L$ is in $\mathcal{M}$ and, by the above, so is $L^k$ for any $k \in \mathbb{N}$. Furthermore, since $L$ is symmetric, $L^k$ is too. This implies that  $\mathcal{L}^{(q)}_{\mathrm{dis}} = \sum_{k=1}^q \lambda_k (L^{(k)})^k$ is symmetric and in $\mathcal{M}$.

    Let us now define a graph $\tilde{G} = (V,W)$ where $V$ is the same set of vertices used to define $L_n$ (in our case, this corresponds to $\Omega_n$ but our proof holds for any set of vertices) and $W$ is the symmetric matrix with entries $(W)_{ij} = -(\mathcal{L}^{(q)}_{\mathrm{dis}})_{ij}$ for $i\neq j$ and $(W)_{ii}$ can be arbitrarily chosen. Then, for the (diagonal) degree matrix $D$ with entries $(D_{ii}) = \sum_{j=1}^n (W)_{ij}$, the Laplacian of $\tilde{G}$ defined as $D - W$ is equal to $\mathcal{L}^{(q)}_{\mathrm{dis}}$. Finally, since $\mathcal{L}^{(q)}_\mathrm{dis}$ is a sum of positive semi-definite matrices, it is too and, therefore \eqref{eq:discussion:higherOrder} is a quadratic form.
\end{proof}

\section{Numerical experiments} \label{sec:numerical}

We present experiments illustrating HOHL’s flexibility and effectiveness. First, we show it can replace Laplace learning in active learning. Then, we apply HOHL to hypergraph-structured datasets, observing consistent gains over standard baselines.

\subsection{Active learning}

Optimization problems of the form \( \arg\min_{v} J(v) + \Psi(v, y) \), where \( J \) is a regularizer and \( \Psi \) enforces label fidelity, admit a Bayesian interpretation. Specifically, with a prior $\mu_0(v)$ proportional to $e^{-J(v)}$, a likelihood $\mu_1(y|v)$ proportional to $e^{-\Psi(v,y)}$, we obtain a posterior $\mu_2(v|y)$ that is proportional to $e^{-J(v) - \Psi(v,y)}$ implying that the maximum à posteriori estimator of $\mu_2$ is the minimizer of $J(v) + \Psi(v,y)$. This formulation enables uncertainty quantification and active learning, see~\cite{LapRefActiveLearning,ji12,Miller}.

Active learning is an iterative learning paradigm in which the most informative data points to label are selected at each iteration by an acquisition function, rather than passively relying on a fixed labeled dataset. The goal is to achieve high prediction accuracy with as few labeled examples as possible, making it especially valuable in scenarios where labeling is expensive or time-consuming. Within the Bayesian framework, uncertainty estimates derived from the posterior distribution $\mu_2(v \mid y)$ can guide this selection process—for instance, by querying points where the predictive variance is high. This uncertainty-aware strategy helps prioritize data that is expected to most improve the model.

In graph-based approaches, the regularizer is often chosen as \( J(v) = \langle v, L^s v \rangle_n \) for some \( s > 0 \), where \( L \) is the graph Laplacian~\cite{LapRef,Stuart,Miller,TutSpec}. This choice induces a Gaussian prior over functions, leveraging the fact that \( L \) is a symmetric and positive semi-definite matrix~\cite{TutSpec}. By Proposition~\ref{prop:laplacian}, an analogous construction is possible on hypergraphs using the operator \( \mathcal{L}^{(q)}_{\mathrm{Dis}} \), allowing us to define Gaussian priors in the hypergraph setting as well. This introduces higher-order structure into the prior, effectively encoding regularity up to the \( p_q \)-th derivative~\cite{weihs2025Hypergraphs}.

We evaluate this approach within an active learning setting, 
employing uncertainty sampling as the acquisition function~\cite{settles2012active}. Experiments are conducted on the MNIST~\cite{LeCun1998} and FashionMNIST~\cite{xiao2017fashionmnist} datasets. Since both datasets can be embedded in metric spaces, we approximate HOHL by \eqref{eq:multiscale} and, following standard practice to speed-up computation on large datasets ~\cite{poissonLearning}, we construct $k$-nearest neighbor graphs instead, replacing the scale sequence \( \varepsilon^{(\ell)} \) in Eq.~\eqref{eq:multiscale} with neighborhood sizes \( k^{(1)} \geq \dots \geq k^{(q)} \). Edge weights are defined by
\(
w_{k^{(\ell)},ij} = \exp\left(-\frac{4 \|x_i - x_j\|^2}{d_{k^{(\ell)}}(x_i)^2}\right),
\)
where \( d_{k^{(\ell)}}(x_i) \) is the distance from \( x_i \) to its \( k^{(\ell)} \)-th nearest neighbor. We choose the norm $\Vert \cdot \Vert$ to be the cosine/angular distance. 

We compare Laplacian and HOHL-based priors across 100 trials. As shown in Figure~\ref{fig:active}, HOHL priors yield substantial improvements over graph-based priors, particularly at low label rates where higher-order smoothness improves sample efficiency: with only 100 labeled points (i.e., $0.17\%$ of MNIST and $0.20\%$ of FashionMNIST), on MNIST, accuracy improves from approximately $35\%$ to $75\%$ ($+40$ points), and on FashionMNIST from $35\%$ to $65\%$ ($+30$ points), highlighting HOHL's ability to leverage higher-order structure under severe label constraints.

Our results suggest that smoother priors in high-density regions enable more informative sampling in early rounds, which is critical when label budgets are small.

\begin{figure}[htbp]
  \centering
  \begin{minipage}[b]{0.48\linewidth}
    \centering
    \includegraphics[width=\linewidth]{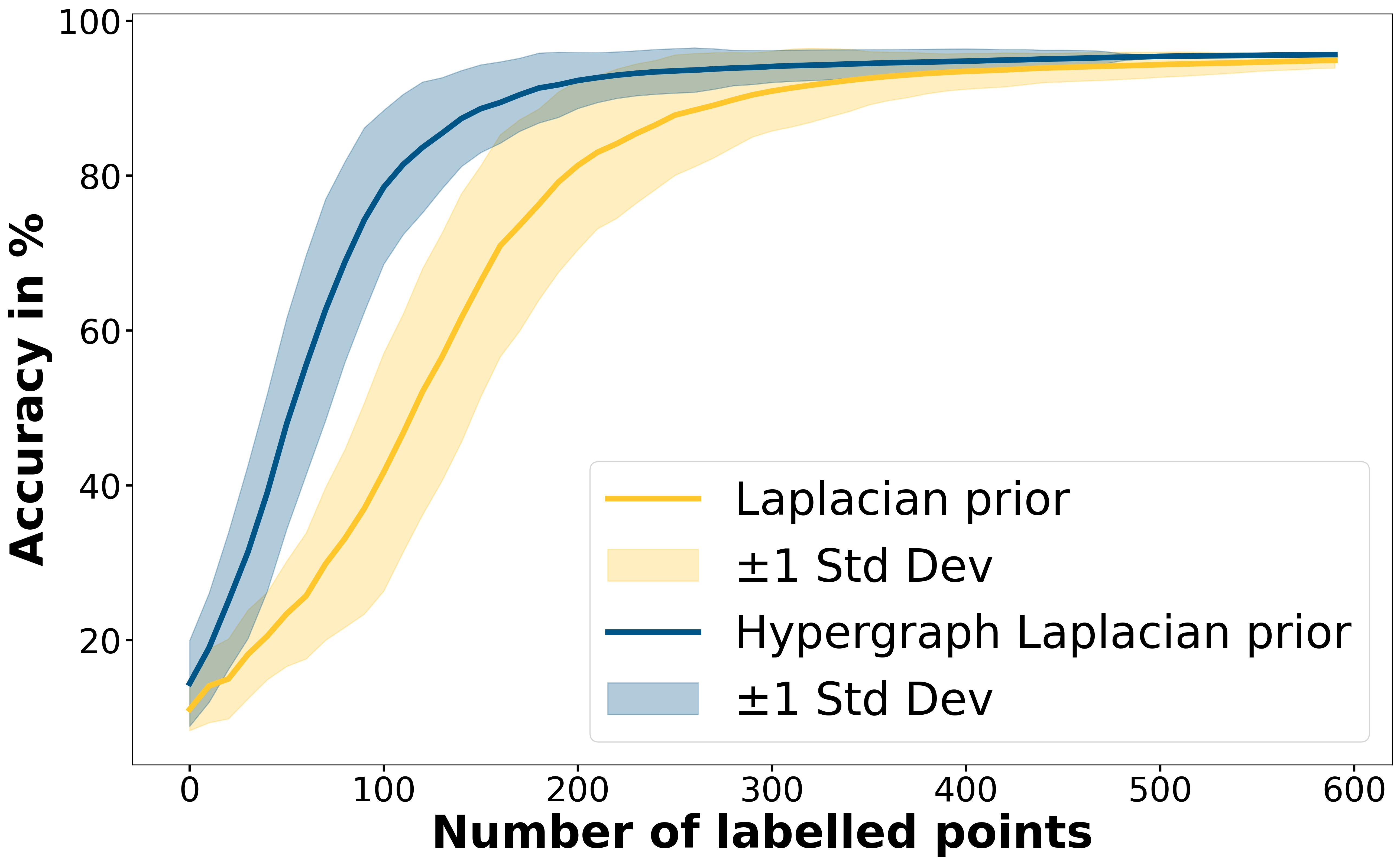}
  \end{minipage}
  \hfill
  \begin{minipage}[b]{0.48\linewidth}
    \centering
        \includegraphics[width=\linewidth]{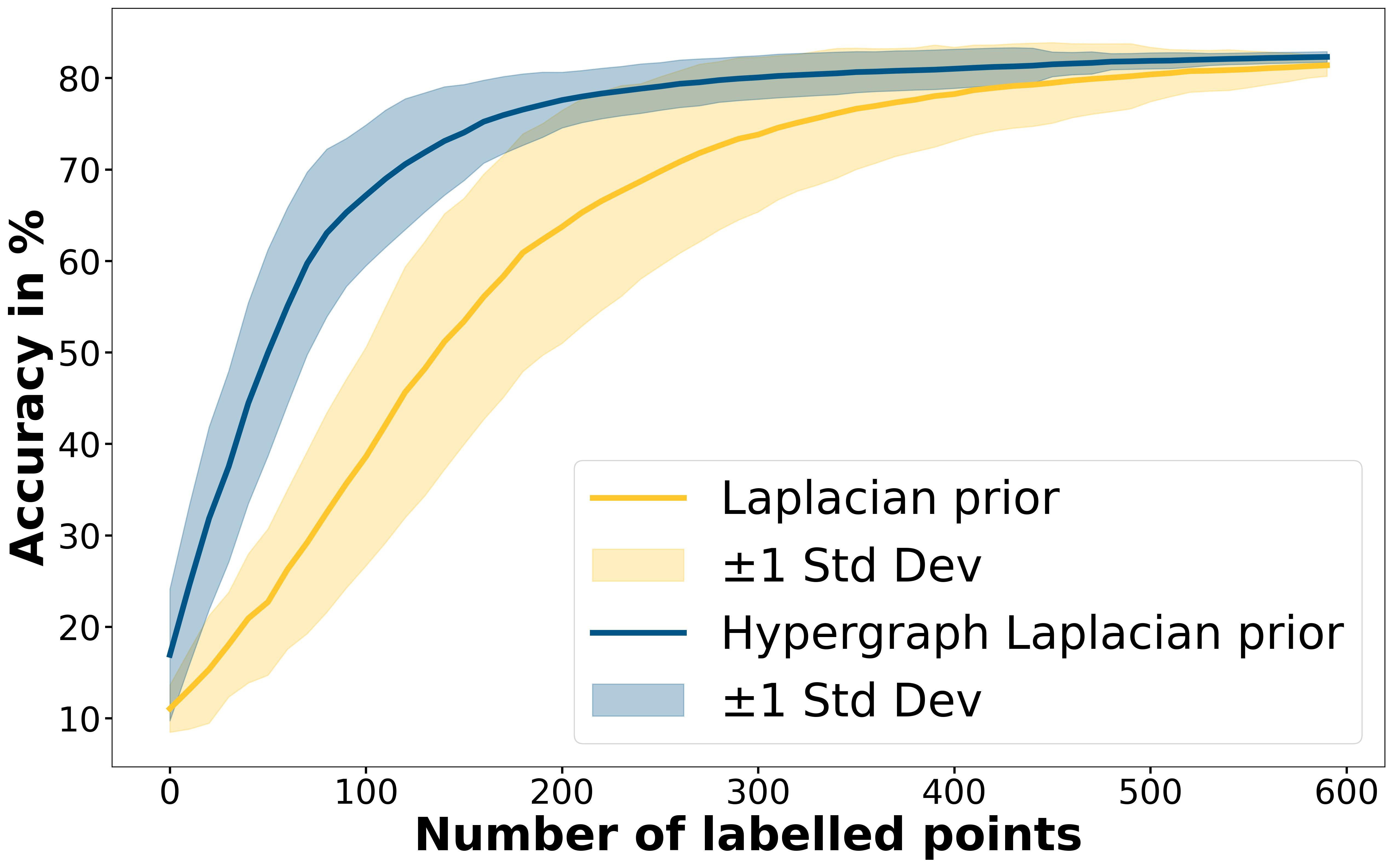}
  \end{minipage}
  \caption{Accuracy in active learning using Laplacian and HOHL priors. We use $k^{(1)} = 50$, $k^{(2)} = 30$, $\lambda_1 = 1$, $\lambda_2 = 4$, $p_1 = 1$, $p_2 = 2$. Left: MNIST dataset. Right: fashionMNIST dataset.}
  \label{fig:active}
\end{figure}

\subsection{HOHL for semi-supervised learning in non-geometric setting}

We consider the Zoo \cite{Dua:2017}, Mushroom \cite{Dua:2017}, Cora \cite{McCallum2000} and Citeseer \cite{Sen2008} datasets. The hyperedges are created following the procedure detailed in Section \ref{sec:main:nongeometric}. To ease notation, in this section, we will write $\mathcal{L}^{(q)}$ instead of $\mathcal{L}^{(q)}_{\mathrm{Dis}}$.

Using Algorithm \ref{alg:hohl}, we consider the HOHL energy \eqref{eq:discussion:higherOrder} for semi-supervised learning with \begin{itemize}
    \item $1 \leq q \leq 4$;
    \item powers $p_\ell = \ell$;
    \item regular growth coefficients (RC) $\lambda_\ell = \ell$ or quickly growing coefficients (QC) $\lambda_\ell = \ell^2$ (QC). 
\end{itemize}
 We compare against Laplace Learning using the clique expansion---chosen over other hypergraph-to-graph reductions for its preservation of the vertex set, see~\cite{clique}---as well as three non-deep hypergraph methods implemented in \cite{hgLearningPractice}: transductive learning from~\cite{scholkopfHyper2006}, hyperedge-weighted transduction from~\cite{hgWeighting}, and dynamic hypergraph learning from~\cite{dynamicHG}. 
 We report mean accuracies and standard deviation in percentages over 100 trials at different labelling rates in Tables \ref{tab:zoo}, \ref{tab:mushroom}, \ref{tab:cora} and \ref{tab:citeseer}. We summarize the terminology used in our experiments in Table \ref{tab:qj:terminology:qonly}. Similar experiments have been performed to test HOHL in the geometric setting in \cite{weihs2025Hypergraphs}.

\begin{table}[H]
\centering
\small
\renewcommand{\arraystretch}{1.5}
\setlength{\tabcolsep}{8pt}
\begin{tabularx}{\linewidth}{>{\bfseries}p{6.5cm} Y}
\toprule
\textbf{Term / Abbreviation} & \textbf{Explanation} \\
\midrule
Aim of experiment & Analysis of HOHL \eqref{eq:discussion:higherOrder} as a function of maximum powers $q$ and coefficients $\lambda_\ell$  \\
\midrule
$\ell$ & Index over scales $1 \leq \ell \leq q$ \\
$q$ & Number of Laplacians $1 \leq q \leq 4$ \\
\midrule
$\lambda_\ell$ & Increasing coefficients: $\lambda_\ell = \ell$ or $\lambda_\ell = \ell^2$ \\
$p_\ell$ & Increasing powers: $p_\ell = \ell$ \\
\midrule
RC & $\lambda_\ell = \ell$ \\
QC & $\lambda_\ell = \ell^2$ \\
$\mathcal{L}^{(q)}$ & HOHL using Algorithm \ref{alg:hohl} for $1 \leq q \leq 4$ \\
\bottomrule
\end{tabularx}
\caption{Terminology used in the $q$-experiments.} 
\label{tab:qj:terminology:qonly}
\end{table}

\begin{figure}
  \centering  \includegraphics[width=\linewidth]{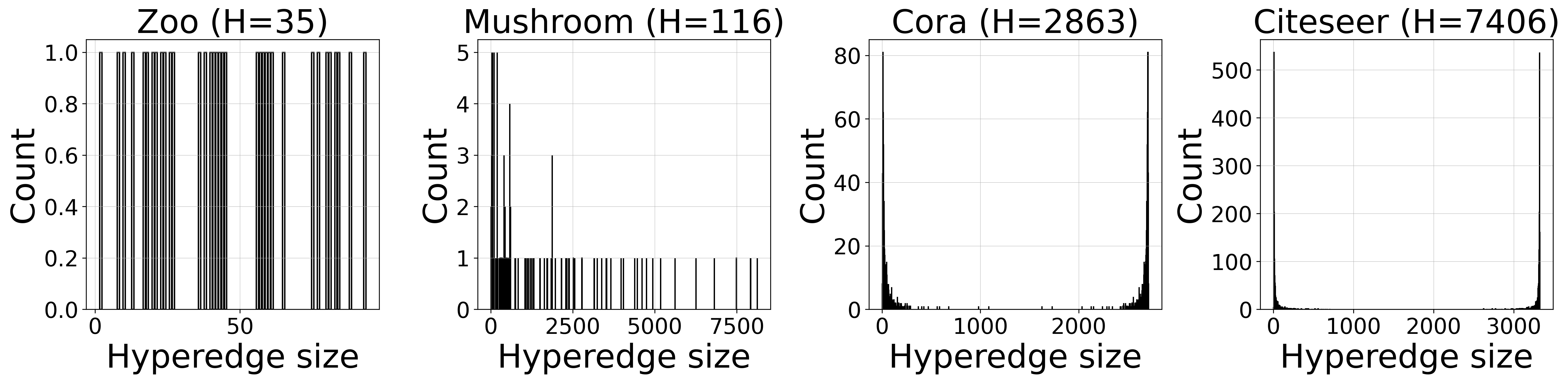} \caption{Hyperedge size distributions for all datasets. Zoo and Mushroom exhibit nearly uniform distributions; Cora and Citeseer are bimodal, with both large and small hyperedges. $H$ denotes the total number of hyperedges in each case.} \label{fig:hyperedge_sizes}
\end{figure}

\begin{table}[htbp]

\begin{small}
\begin{flushleft}
\begin{tabular}{llllllll}
\toprule
\textbf{Rate} 
& \textbf{$\mathcal{L}^{(1)}$}  
& \textbf{$\mathcal{L}^{(2)}$ RC} 
& \textbf{$\mathcal{L}^{(2)}$ QC} 
& \textbf{$\mathcal{L}^{(3)}$ RC} 
& \textbf{$\mathcal{L}^{(3)}$ QC} 
& \textbf{$\mathcal{L}^{(4)}$ RC} 
& \textbf{$\mathcal{L}^{(4)}$ QC} \\
\midrule
0.05 & 39.80 (0.00) & 42.32 (6.14) & 44.69 (7.53) & 42.32 (11.57) & 33.13 (13.55) & 52.33 (8.77) & 53.05 (8.03) \\
0.1  & 39.78 (0.00) & 59.02 (5.52) & 62.77 (6.50) & 66.91 (13.05) & 64.03 (14.72) & 74.88 (6.56) & \textbf{75.35} (6.59) \\
0.2  & 39.76 (0.00) & 75.52 (6.15) & 75.88 (5.00) & 79.83 (4.28) & 77.05 (5.04) & 81.95 (3.25) & \textbf{82.14} (2.90) \\
0.3  & 39.73 (0.00) & 80.56 (1.93) & 80.56 (1.64) & 83.21 (3.99) & 81.05 (4.27) & 83.68 (2.82) & \textbf{83.70} (2.81) \\
0.5  & 40.38 (0.00) & 84.98 (3.22) & 85.38 (3.34) & 85.06 (2.86) & 83.13 (3.24) & \textbf{85.92} (3.49) & \textbf{85.92} (3.43) \\
0.8  & 40.91 (0.00) & 86.59 (4.49) & \textbf{87.91} (4.10) & 87.55 (3.63) & 85.68 (4.16) & 84.68 (3.86) & 84.86 (3.88) \\
\bottomrule
\end{tabular}
\end{flushleft}

\begin{flushleft}
\begin{tabular}{lllll}
\toprule
\textbf{Rate} 
& clique 
& transductive 
& weighted transductive 
& dynamic transductive \\
\midrule
0.05 & 39.80 (0.00) & \textbf{55.63} (3.57) & \textbf{55.63} (3.57) & 39.80 (0.00) \\
0.1  & 39.78 (0.00) & 56.96 (2.02) & 56.96 (2.02) & 39.78 (0.00) \\
0.2  & 39.76 (0.00) & 57.37 (1.27) & 57.37 (1.27) & 39.76 (0.00) \\
0.3  & 39.73 (0.00) & 58.18 (1.60) & 58.18 (1.60) & 39.73 (0.00) \\
0.5  & 40.38 (0.00) & 58.46 (1.83) & 58.46 (1.83) & 40.38 (0.00) \\
0.8  & 40.91 (0.00) & 57.50 (2.69) & 57.50 (2.69) & 40.91 (0.00) \\
\bottomrule
\end{tabular}
\end{flushleft}
\end{small}
    \centering
    \caption{Accuracy of various SSL methods on the Zoo dataset. The best-performing method in each row is highlighted in bold.}
    \label{tab:zoo}
\end{table}

\begin{table}[htbp]

\begin{small}

\begin{flushleft}
\begin{tabular}{llllllll}
\toprule
\textbf{Rate} 
& \textbf{$\mathcal{L}^{(1)}$}  
& \textbf{$\mathcal{L}^{(2)}$ RC} 
& \textbf{$\mathcal{L}^{(2)}$ QC} 
& \textbf{$\mathcal{L}^{(3)}$ RC} 
& \textbf{$\mathcal{L}^{(3)}$ QC} 
& \textbf{$\mathcal{L}^{(4)}$ RC} 
& \textbf{$\mathcal{L}^{(4)}$ QC} \\
\midrule
0.05 & 51.79 (0.00) & 86.34 (0.81) & 86.30 (0.83) & 88.70 (1.06) & 88.39 (1.19) & 63.42 (5.55) & 88.00 (1.31) \\
0.1  & 51.80 (0.00) & 87.22 (0.38) & 87.13 (0.38) & 88.43 (0.79) & 88.45 (0.79) & 78.99 (3.17) & 88.87 (0.95) \\
0.2  & 65.71 (3.76) & 88.26 (0.39) & 88.34 (0.45) & 90.57 (0.69) & 90.60 (0.83) & 86.92 (1.47) & \textbf{91.87} (0.77) \\
0.3  & 84.86 (1.01) & 89.20 (0.36) & 89.32 (0.28) & 92.54 (0.52) & 92.45 (0.71) & 89.31 (1.05) & \textbf{93.27} (0.45) \\
0.5  & 89.74 (0.31) & 90.36 (0.49) & 90.27 (0.54) & 94.22 (0.31) & 94.20 (0.44) & 89.65 (0.55) & \textbf{94.27} (0.45) \\
0.8  & 89.53 (0.63) & 91.32 (0.72) & 91.29 (0.70) & \textbf{94.68} (0.51) & 94.66 (0.44) & 90.03 (0.68) & 94.66 (0.44) \\
\bottomrule
\end{tabular}
\end{flushleft}

\begin{flushleft}
\begin{tabular}{lllll}
\toprule
\textbf{Rate} 
& clique 
& transductive 
& weighted transductive 
& dynamic transductive \\
\midrule
0.05 & 51.79 (0.00) & \textbf{90.72} (0.67) & 90.01 (0.38) & 51.79 (0.00) \\
0.1  & 51.80 (0.00) & \textbf{90.80} (0.60) & 89.96 (0.12) & 51.80 (0.00) \\
0.2  & 69.72 (3.33) & 90.66 (0.34) & 90.02 (0.31) & 51.80 (0.00) \\
0.3  & 85.70 (0.87) & 90.65 (0.34) & 90.13 (0.27) & 51.79 (0.00) \\
0.5  & 89.69 (0.35) & 90.62 (0.33) & 90.24 (0.42) & 51.80 (0.00) \\
0.8  & 89.73 (0.68) & 90.56 (0.64) & 90.38 (0.13) & 51.78 (0.00) \\
\bottomrule
\end{tabular}
\end{flushleft}

\end{small}
    \centering
    \caption{Accuracy of various SSL methods on the Mushroom dataset. The best-performing method in each row is highlighted in bold.}
    \label{tab:mushroom}
\end{table}

\begin{table}[htbp]

\begin{small}

\begin{flushleft}
\begin{tabular}{llllllll}
\toprule
\textbf{Rate} 
& \textbf{$\mathcal{L}^{(1)}$}  
& \textbf{$\mathcal{L}^{(2)}$ RC} 
& \textbf{$\mathcal{L}^{(2)}$ QC} 
& \textbf{$\mathcal{L}^{(3)}$ RC} 
& \textbf{$\mathcal{L}^{(3)}$ QC} 
& \textbf{$\mathcal{L}^{(4)}$ RC} 
& \textbf{$\mathcal{L}^{(4)}$ QC} \\
\midrule
0.05 & 30.19 (0.00) & 30.19 (0.00) & 30.19 (0.00) & 30.19 (0.00) & 30.19 (0.00) & 30.27 (0.12) & \textbf{30.44} (0.49) \\
0.1  & 30.19 (0.00) & 30.19 (0.00) & 30.19 (0.00) & 30.19 (0.00) & 30.19 (0.00) & 30.29 (0.16) & \textbf{31.20} (0.91) \\
0.2  & 30.20 (0.00) & 30.20 (0.00) & 30.20 (0.00) & 30.20 (0.00) & 30.20 (0.00) & 31.85 (0.67) & \textbf{34.74} (1.49) \\
0.3  & 30.19 (0.00) & 30.19 (0.00) & 30.19 (0.00) & 30.19 (0.00) & 30.19 (0.00) & 35.96 (0.97) & \textbf{40.15} (1.13) \\
0.5  & 30.18 (0.00) & 30.89 (0.30) & 30.89 (0.22) & 30.18 (0.00) & 30.18 (0.00) & 44.39 (1.33) & \textbf{50.47} (1.22) \\
0.8  & 30.09 (0.00) & 34.86 (0.93) & 35.44 (0.92) & 30.09 (0.00) & 30.09 (0.00) & 54.75 (1.49) & \textbf{60.01} (1.42) \\
\bottomrule
\end{tabular}
\end{flushleft}

\begin{flushleft}
\begin{tabular}{lllll}
\toprule
\textbf{Rate} 
& clique 
& transductive 
& weighted transductive 
& dynamic transductive \\
\midrule
0.05 & 30.19 (0.00) & 30.19 (0.00) & 30.19 (0.00) & 30.19 (0.00) \\
0.1  & 30.19 (0.00) & 30.19 (0.00) & 30.19 (0.00) & 30.19 (0.00) \\
0.2  & 30.20 (0.00) & 30.20 (0.00) & 30.20 (0.00) & 30.20 (0.00) \\
0.3  & 30.19 (0.00) & 30.19 (0.00) & 30.19 (0.00) & 30.19 (0.00) \\
0.5  & 30.18 (0.00) & 30.18 (0.00) & 30.18 (0.00) & 30.18 (0.00) \\
0.8  & 30.09 (0.00) & 30.09 (0.00) & 30.09 (0.00) & 30.09 (0.00) \\
\bottomrule
\end{tabular}
\end{flushleft}

\end{small}
    \centering
    \caption{Accuracy of various SSL methods on the Cora dataset. The best-performing method in each row is highlighted in bold.}
    \label{tab:cora}
\end{table}

\begin{table}[htbp]
    
\begin{small}

\begin{flushleft}
\begin{tabular}{llllllll}
\toprule
\textbf{Rate} 
& \textbf{$\mathcal{L}^{(1)}$}  
& \textbf{$\mathcal{L}^{(2)}$ RC} 
& \textbf{$\mathcal{L}^{(2)}$ QC} 
& \textbf{$\mathcal{L}^{(3)}$ RC} 
& \textbf{$\mathcal{L}^{(3)}$ QC} 
& \textbf{$\mathcal{L}^{(4)}$ RC} 
& \textbf{$\mathcal{L}^{(4)}$ QC} \\
\midrule
0.05 & 21.06 (0.00) & 30.52 (8.39) & 31.14 (7.17) & 21.13 (0.25) & 21.44 (0.92) & 33.71 (6.14) & \textbf{35.14} (6.03) \\
0.1  & 21.05 (0.00) & 40.44 (7.23) & 38.04 (10.97) & 21.23 (0.24) & 21.93 (1.17) & 47.14 (4.32) & \textbf{48.26} (3.87) \\
0.2  & 21.06 (0.00) & 51.13 (3.36) & 53.05 (2.65) & 22.55 (1.31) & 24.26 (2.56) & 57.74 (1.58) & \textbf{57.86} (1.40) \\
0.3  & 21.06 (0.00) & 56.99 (1.83) & 56.70 (2.36) & 25.15 (1.65) & 27.91 (2.12) & \textbf{61.07} (0.99) & 60.89 (0.95) \\
0.5  & 21.09 (0.00) & 62.37 (1.13) & 62.52 (1.26) & 29.65 (1.15) & 34.25 (1.34) & \textbf{64.08} (0.91) & 63.66 (0.89) \\
0.8  & 21.11 (0.00) & 66.04 (1.38) & \textbf{66.36} (1.15) & 37.16 (0.75) & 43.47 (1.03) & 65.63 (1.56) & 65.24 (1.57) \\
\bottomrule
\end{tabular}
\end{flushleft}

\begin{flushleft}
\begin{tabular}{lllll}
\toprule
\textbf{Rate} 
& clique 
& transductive 
& weighted transductive 
& dynamic transductive \\
\midrule
0.05 & 21.06 (0.00) & 21.06 (0.02) & 21.06 (0.00) & 21.09 (0.05) \\
0.1  & 21.05 (0.00) & 21.05 (0.00) & 21.05 (0.00) & 21.05 (0.00) \\
0.2  & 21.06 (0.00) & 21.06 (0.00) & 21.06 (0.00) & 21.06 (0.00) \\
0.3  & 21.06 (0.00) & 21.06 (0.00) & 21.06 (0.00) & 21.06 (0.00) \\
0.5  & 21.09 (0.00) & 21.09 (0.00) & 21.09 (0.00) & 21.09 (0.00) \\
0.8  & 21.11 (0.00) & 21.11 (0.00) & 21.11 (0.00) & 21.11 (0.00) \\
\bottomrule
\end{tabular}
\end{flushleft}

\end{small}
\centering
    \caption{Accuracy of various SSL methods on the Citeseer dataset. The best-performing method in each row is highlighted in bold.}
    \label{tab:citeseer}
\end{table}

We observe that HOHL with \( \mathcal{L}^{(q)} \) and $2 \leq q \leq 4$ consistently either closely matches or achieves higher accuracy than both baseline hypergraph methods and the Laplacian on the clique-expanded graph. This suggests that the skeleton-based segmentation employed by Algorithm~\ref{alg:hohl} succeeds in isolating subgraphs that reflect relevant structure in the data. In particular, HOHL methods achieve markedly stronger performance on Citeseer and Cora, where conventional hypergraph baselines remain almost flat across all labeling rates — exceeding their accuracy by more than threefold on Citeseer (66.36\% for $\mathcal{L}^{(2)}$ QC vs. 21.11\% for baselines at 0.8 label rate) and roughly doubling it on Cora (60.01\% for $\mathcal{L}^{(4)}$ QC vs. 30.09\% for baselines at 0.8 label rate).

We also perform an ablation study comparing HOHL with only first-order regularization \( \mathcal{L}^{(1)} \) to the higher-order variant \( \mathcal{L}^{(q)} \) with $2 \leq q \leq 4$. The consistent performance gains from adding the higher-order terms suggest that higher-order regularization significantly enhances HOHL’s ability to capture label-relevant structure. The gap between the two versions widens with increasing label rates: on Citeseer, the difference in accuracy grows from 14.08 percentage points at a 0.05 label rate (35.14\% for $\mathcal{L}^{(4)}$ QC vs. 21.06\% for $\mathcal{L}^{(1)}$) to 45.25 points at 0.8 (66.36\% for $\mathcal{L}^{(2)}$ QC vs. 21.11\% for $\mathcal{L}^{(1)}$); on Zoo, the gain grows from 13.25 points at a 0.05 label rate (53.05\% for $\mathcal{L}^{(4)}$ QC vs. 39.80\% for $\mathcal{L}^{(1)}$) to 47.00 points at 0.8 (87.91\% for $\mathcal{L}^{(2)}$ QC vs. 40.91\% for $\mathcal{L}^{(1)}$). This effect is strongest when small hyperedges encode local patterns: taking higher powers of their skeleton Laplacians enforces smoothness across these subsets, yielding sharper decision boundaries.

Furthermore, we note that increasing the value of $\lambda_\ell$, i.e. comparing RC and QC configurations, can lead to large improvements: 88.00\% for $\mathcal{L}^{(4)}$ QC vs. 63.42\% for $\mathcal{L}^{(4)}$ RC at 0.05 label rate on Mushroom; 50.47\% for $\mathcal{L}^{(4)}$ QC vs. 44.39\% for $\mathcal{L}^{(4)}$ RC at 0.5 label rate on Cora.

Figure~\ref{fig:hyperedge_sizes} shows variation in hyperedge size distribution across datasets which influences how HOHL captures structure across scales.

\begin{itemize}
    \item In Zoo, the small dataset size increases the chance that early labeled nodes span both fine and coarse hyperedges, enabling HOHL to leverage multiscale structure even at low label rates. In contrast, Mushroom’s larger size makes early labels less likely to touch smaller, more informative hyperedges. HOHL methods thus surpass the transductive baseline only at higher label rates (starting from 0.2), whereas in Zoo they already outperform it at rate 0.1. 
    \item In Cora and Citeseer, the clear size gap between small and large hyperedges creates a strong separation of local and global interactions. As the label rate increases, small hyperedges become more useful, and HOHL's higher-order regularization captures these patterns. On Citeseer, accuracy improves from 31.14\% to 66.36\% across label rates 0.05 to 0.8 for $\mathcal{L}^{(2)}$ QC, while the transductive baseline stays flat at $\sim$21\%.
    \item The bimodal nature of the hyperedge size distribution in the Cora and Citeseer datasets suggests that an even number of groupings in Algorithm \ref{alg:hohl} would better align with the data structure. This intuition is supported by our results: $\mathcal{L}^{(q)}$ with $q=2,4$ consistently outperform $\mathcal{L}^{(3)}$ ($\mathcal{L}^{(3)}$ RC and QC remain flat on Cora; $\mathcal{L}^{(3)}$ RC and QC achieve 37.16\% and 43.47\% in comparison with 66.36\% for $\mathcal{L}^{(2)}$ QC and 65.63\% for $\mathcal{L}^{(4)}$ RC at 0.8 label rate on Citeseer). In contrast, for datasets like Zoo and Mushroom, where hyperedge sizes are more evenly distributed, the number of groupings appears less critical. In these cases, $\mathcal{L}^{(3)}$ performs comparably to $\mathcal{L}^{(q)}$ with $q=2,4$, confirming that uniform distributions are less sensitive to the choice of segmentation (at 0.8 label rate on Mushroom, we have 94.68\% for $\mathcal{L}^{(3)}$ RC and 94.66 \% for $\mathcal{L}^{(4)}$ QC).
\end{itemize}

\begin{table}[htbp]
\begin{small}
\begin{flushleft}
\begin{tabular}{lccccccc}
\toprule
\textbf{Dataset} 
& \textbf{$\mathcal{L}^{(1)}$}  
& \textbf{$\mathcal{L}^{(2)}$ RC} 
& \textbf{$\mathcal{L}^{(2)}$ QC} 
& \textbf{$\mathcal{L}^{(3)}$ RC} 
& \textbf{$\mathcal{L}^{(3)}$ QC} 
& \textbf{$\mathcal{L}^{(4)}$ RC} 
& \textbf{$\mathcal{L}^{(4)}$ QC} \\
\midrule
Zoo       & \textbf{0.00} (0.00) & \textbf{0.00} (0.00) & \textbf{0.00} (0.00) & \textbf{0.00} (0.00) & \textbf{0.00} (0.00) & \textbf{0.00} (0.00) & \textbf{0.00} (0.00) \\
Mushroom  & 19.91 (0.12) & 21.07 (0.04) & 21.07 (0.04) & 22.01 (0.11) & 22.01 (0.11) & 22.05 (0.12) & 22.05 (0.12) \\
Cora      & \textbf{2.79} (0.05) & 2.82 (0.05) & 2.82 (0.05) & 2.84 (0.01) & 2.84 (0.01) & 2.86 (0.02) & 2.86 (0.02) \\
Citeseer  & \textbf{4.31} (0.04) & 4.36 (0.07) & 4.36 (0.07) & 4.37 (0.01) & 4.37 (0.01) & 4.38 (0.03) & 4.38 (0.03) \\
\bottomrule
\end{tabular}
\end{flushleft}

\vspace{0.3em}

\begin{flushleft}
\begin{tabular}{lcccc}
\toprule
\textbf{Dataset} 
& clique 
& transductive 
& weighted transductive 
& dynamic transductive \\
\midrule
Zoo      & \textbf{0.00} (0.00) & \textbf{0.00} (0.00) & 0.01 (0.00) & 0.14 (0.01) \\
Mushroom & 19.72 (0.04) & \textbf{4.35} (0.10) & 41.91 (5.14) & 301.26 (2.29) \\
Cora     & \textbf{2.79} (0.03) & 8.19 (0.16) & 83.85 (27.77) & 137.15 (1.01) \\
Citeseer & 4.32 (0.05) & 33.02 (0.43) & 395.14 (224.17) & 553.73 (1.30) \\
\bottomrule
\end{tabular}
\end{flushleft}
    
\end{small}
    \centering
    \caption{Computation time in seconds for various SSL methods at label rate 0.1.}
    \label{tab:complexity}
\end{table}

Table~\ref{tab:complexity} reports the average time to solve the learning problem at label rate 0.1 (results are similar at all rates), excluding graph or hypergraph construction, which is performed once and reused across experiments. HOHL methods are run with a fixed, untuned configuration and no hyperparameter optimization. By contrast, the last two hypergraph baselines involve iterative solvers and require tuning of regularization parameters, leading to significantly longer runtimes. Despite its simplicity, HOHL methods consistently achieves strong performance while being quick to compute, underscoring its practical efficiency.

\section{Conclusion}

On the theoretical side, we proved that HOHL is well-posed as a regularizer in the fully supervised setting and established convergence rates between the discrete graph-based approximation and the underlying continuum target function. We further showed that spectrally truncated variants of HOHL remain consistent in the limit, supporting their use in practice.

On the practical side, we demonstrated that HOHL retains the quadratic structure of Laplace learning, making it a viable drop-in replacement within graph-based pipelines. In particular, we integrated HOHL into an active learning framework and observed substantial performance gains in low-label regimes. To generalize HOHL beyond geometric settings, we proposed a multiscale skeleton aggregation algorithm that enables efficient regularization even in the absence of spatial embeddings. Our approach achieves state-of-the-art performance, and we analyzed the impact of HOHL’s parameters in relation to the hyperedge size distribution of the dataset.

Future work includes analyzing HOHL through the lens of reproducing kernel Hilbert space (RKHS) theory, following approaches such as \cite{zhang2005spectral}, to derive expected error bounds in the semi-supervised setting as a function of the length-scales. Additionally, adaptive skeleton segmentation and parameter selection strategies—e.g., cross-validation, meta-learning, or Bayesian optimization—could further improve robustness. Finally, integrating HOHL into end-to-end differentiable models may enable closer connections to neural architectures, while extending it to dynamic or multilayer hypergraphs opens avenues for application to temporal and multiplex data.

\paragraph{Acknowledgments}

AW and AB were supported in part by NSF grant DMS-2152717. 
MT acknowledges the support of the EPSRC Mathematical and Foundations of Artificial Intelligence Probabilistic AI Hub (grant agreement EP/Y007174/1), the Leverhulme Trust through the Project Award ``Robust Learning: Uncertainty Quantification, Sensitivity and Stability'' (grant agreement RPG-2024-051) and the NHSBT award 177PATH25 ``Harnessing Computational Genomics to Optimise Blood Transfusion Safety and Efficacy''.

\bibliography{references}{}
\bibliographystyle{plain}

\clearpage

\end{document}